	\theoremstyle{plain}
\let\oldnl\nl% Store \nl in \oldnl
\newcommand{\nonl}{\renewcommand{\nl}{\let\nl\oldnl}}% Remove line number for one line
\DeclareMathAlphabet\mathcal{OMS}{cmsy}{m}{n}
\newcommand{\bZ}{{\boldsymbol{\mathrm{Z}}}}
\newcommand{\bG}{{\boldsymbol{\mathrm{G}}}}
\newcommand{\bF}{{\boldsymbol{\mathrm{F}}}}
\newcommand{\bth}{{\boldsymbol{\mathrm{\varTheta}}}}
\newcommand{\bt}{{\boldsymbol{\mathrm{\bt}}}}
\newcommand{\bW}{{\boldsymbol{\mathrm{W}}}}
\newcommand{\bY}{{\boldsymbol{\mathrm{Y}}}}
\newcommand{\bD}{{\boldsymbol{\mathrm{D}}}}
\newcommand{\bA}{{\boldsymbol{\mathrm{A}}}}
\newcommand{\bcW}{{\mathbfcal{W}}}
\newcommand{\bcU}{{\mathbfcal{U}}}
\newcommand{\bU}{{\boldsymbol{\mathrm{U}}}}
\DeclareMathAlphabet\mathbfcal{OMS}{cmsy}{b}{n}
\newcommand{\bX}{{\boldsymbol{\mathrm{X}}}}
\newcommand{\bH}{{\boldsymbol{\mathrm{H}}}}
\newcommand{\bep}{{\boldsymbol{\mathrm{\epsilon}}}}
\newcommand{\bx}{{\boldsymbol{\mathrm{x}}}}
\newcommand{\bu}{{\boldsymbol{\mathrm{u}}}}
\newcommand{\bv}{{\boldsymbol{\mathrm{v}}}}
\newcommand{\by}{\boldsymbol{\mathrm{y}}}
\newcommand{\bw}{\boldsymbol{\mathrm{w}}}
\newcommand{\bh}{{\boldsymbol{\mathrm{h}}}}
\newcommand{\bhbar}{\bm{\hbar}}
\newcommand{\bg}{{\boldsymbol{\mathsf{g}}}}
\newcommand{\g}{{\mathsf{g}}}
\newcommand{\bI}{{\boldsymbol{\mathrm{I}}}}
\newcommand{\bigO}{\mathcal{O}}
\newcommand{\bs}{\boldsymbol{\mathrm{s}}}
\renewcommand{\@algocf@capt@plain}{above}% formerly {bottom}
\DeclareMathOperator*{\argmin}{arg\,min}
\newtheorem{theorem}{Theorem}[section]
\newtheorem{corollary}{Corollary}[theorem]
\newtheorem{lemma}[theorem]{Lemma}
\newtheorem{proposition}[theorem]{Proposition}
\begin{document}
%
% paper title
% Titles are generally capitalized except for words such as a, an, and, as,
% at, but, by, for, in, nor, of, on, or, the, to and up, which are usually
% not capitalized unless they are the first or last word of the title.
% Linebreaks \\ can be used within to get better formatting as desired.
% Do not put math or special symbols in the title.
\title{Interpretable Deep Recurrent Neural Networks via Unfolding Reweighted $\ell_1$-$\ell_1$ Minimization: Architecture Design and Generalization Analysis}
%
%
% author names and IEEE memberships
% note positions of commas and nonbreaking spaces ( ~ ) LaTeX will not break
% a structure at a ~ so this keeps an author's name from being broken across
% two lines.
% use \thanks{} to gain access to the first footnote area
% a separate \thanks must be used for each paragraph as LaTeX2e's \thanks
% was not built to handle multiple paragraphs
%

\author{Huynh Van Luong,~Boris Joukovsky,~and~Nikos Deligiannis% <-this % stops a space
	\thanks{Huynh Van Luong~and~Nikos Deligiannis are with the Department
		of Electronics and Informatics, Vrije Universiteit Brussel, 1050 Brussels, Belgium.}}% <-this % stops a space

% note the % following the last \IEEEmembership and also \thanks - 
% these prevent an unwanted space from occurring between the last author name
% and the end of the author line. i.e., if you had this:
% 
% \author{....lastname \thanks{...} \thanks{...} }
%                     ^------------^------------^----Do not want these spaces!
%
% a space would be appended to the last name and could cause every name on that
% line to be shifted left slightly. This is one of those "LaTeX things". For
% instance, "\textbf{A} \textbf{B}" will typeset as "A B" not "AB". To get
% "AB" then you have to do: "\textbf{A}\textbf{B}"
% \thanks is no different in this regard, so shield the last } of each \thanks
% that ends a line with a % and do not let a space in before the next \thanks.
% Spaces after \IEEEmembership other than the last one are OK (and needed) as
% you are supposed to have spaces between the names. For what it is worth,
% this is a minor point as most people would not even notice if the said evil
% space somehow managed to creep in.

% The paper headers
\markboth{}%
{}
% The only time the second header will appear is for the odd numbered pages
% after the title page when using the twoside option.
% 
% *** Note that you probably will NOT want to include the author's ***
% *** name in the headers of peer review papers.                   ***
% You can use \ifCLASSOPTIONpeerreview for conditional compilation here if
% you desire.

% If you want to put a publisher's ID mark on the page you can do it like
% this:
%\IEEEpubid{0000--0000/00\$00.00~\copyright~2015 IEEE}
% Remember, if you use this you must call \IEEEpubidadjcol in the second
% column for its text to clear the IEEEpubid mark.

% use for special paper notices
%\IEEEspecialpapernotice{(Invited Paper)}

% make the title area
\maketitle

% As a general rule, do not put math, special symbols or citations
% in the abstract or keywords.
\begin{abstract}
		Deep unfolding methods---for example, the learned iterative shrinkage thresholding algorithm (LISTA)---design deep neural networks as learned variations of optimization methods. These networks have been shown to achieve faster convergence and higher accuracy than the original optimization methods. In this line of research, this paper develops a novel deep recurrent neural network (coined reweighted-RNN) by the unfolding of a reweighted $\ell_1$-$\ell_1$ minimization algorithm and applies it to the task of sequential signal reconstruction. To the best of our knowledge, this is the first deep unfolding method that explores reweighted minimization. Due to the underlying reweighted minimization model, our RNN has a different soft-thresholding function (alias, different activation function) for each hidden unit in each layer. Furthermore, it has higher network expressivity than existing deep unfolding RNN models due to the over-parameterizing weights. Importantly, we establish theoretical generalization error bounds for the proposed reweighted-RNN model by means of Rademacher complexity. The bounds reveal that the parameterization of the proposed reweighted-RNN ensures good generalization. We apply the proposed reweighted-RNN to the problem of video frame reconstruction from low-dimensional measurements, that is, sequential frame reconstruction. The experimental results on the moving MNIST dataset demonstrate that the proposed deep reweighted-RNN significantly outperforms existing RNN models.
\end{abstract}

% Note that keywords are not normally used for peerreview papers.
\begin{IEEEkeywords}
Deep unfolding, reweighted $\ell_1$-$\ell_1$ minimization, recurrent neural network, sequential frame reconstruction, sequential frame separation, generalization error.
\end{IEEEkeywords}

% For peer review papers, you can put extra information on the cover
% page as needed:
% \ifCLASSOPTIONpeerreview
% \begin{center} \bfseries EDICS Category: 3-BBND \end{center}
% \fi
%
% For peerreview papers, this IEEEtran command inserts a page break and
% creates the second title. It will be ignored for other modes.
\IEEEpeerreviewmaketitle

\section{Introduction}
% The very first letter is a 2 line initial drop letter followed
% by the rest of the first word in caps.
% 
% form to use if the first word consists of a single letter:
% \IEEEPARstart{A}{demo} file is ....
% 
% form to use if you need the single drop letter followed by
% normal text (unknown if ever used by the IEEE):
% \IEEEPARstart{A}{}demo file is ....
% 
% Some journals put the first two words in caps:
% \IEEEPARstart{T}{his demo} file is ....
% 
% Here we have the typical use of a "T" for an initial drop letter
% and "HIS" in caps to complete the first word.
\IEEEPARstart{T}{he} problem of reconstructing sequential signals from low-dimensional measurements across time is of great importance for a number of applications such as time-series data analysis, future-frame prediction, and compressive video sensing. Specifically, we consider the problem of reconstructing a sequence of signals $\bs_t\in\mathbb{R}^{n_0}$, $t=1,2,\dots,T$, from low-dimensional measurements $\bx_t=\bA\bs_t$, where $\bA\in \mathbb{R}^{ n\times n_0}~(n\ll n_0)$ is a sensing matrix. We assume that $\bs_t$ has a sparse representation $\bh_t\in\mathbb{R}^{h}$ in a dictionary~$\bD\in \mathbb{R}^{ n_0\times h}$, that is, $\bs_t = \bD \bh_t$. At each time step $t$, the signal $\bs_t$ can be independently reconstructed using the measurements $\bx_t$ by solving~\cite{DonohoTIT06}:
\begin{equation}\label{l1-norm}
\min_{\bh_t} 
\Big\{\frac{1}{2}\|\bx_t -\bA\bD\bh_t\|_2^2 + \lambda\|\bh_t\|_{1}\Big\},
\end{equation}
where $\|\cdot\|_p$ is the $\ell_{p}$-norm and $\lambda$ is a regularization parameter. The iterative shrinkage-thresholding algorithm (ISTA) \cite{daubechies2004iterative} solves~\eqref{l1-norm}
by iterating over $\bh_t^{(l)} = \phi_{\frac{\lambda}{c}}(\bh_t^{(l-1)} - \frac{1}{c}\bD^{\mathrm{T}}\bA^{\mathrm{T}}(\bA\bD\bh_t^{(l-1)}-\bx_t))$,
%	\begin{equation}
%	\label{istaSoft}
%	\bh_t^{(l)} = \phi_{\frac{\lambda}{c}}\Big(\bh_t^{(l-1)} - \frac{1}{c}\bD^{\mathrm{T}}\bA^{\mathrm{T}}(\bA\bD\bh_t^{(l-1)}-\bx_t)\Big),
%	\end{equation}
where $l$ is the iteration counter, $\phi_\gamma(u) = \mathrm{sign}(u)[0,|u|-\gamma]_+ $ is the soft-thresholding operator, $\gamma=\frac{\lambda}{c}$, and $c$ is an upper bound on the Lipschitz constant of the gradient of $\frac{1}{2}\|\bx_t -\bA\bD\bh_t\|_2^2$. 

Under the assumption that sequential signal instances are correlated, we consider the following sequential signal reconstruction problem:
\begin{equation}\label{seqquentialProblem}
\min_{\bh_{t}} \Big\{\frac{1}{2}\|\bx_t-\bA\bD\bh_t\|_2^{2} + \lambda_1\|\bh_t\|_{1} + \lambda_2R(\bh_t,\bh_{t-1})\Big\},
\end{equation}
where $\lambda_1,\lambda_2>0$~are regularization parameters and $R(\bh_t,\bh_{t-1})$ is an added regularization term that expresses the similarity of the representations $\bh_t$ and $\bh_{t-1}$ of two consecutive signals. \cite{WisdomICASSP17} proposed an RNN design (coined Sista-RNN) by unfolding the sequential version of ISTA. That study assumed that two consecutive signals are close in the $\ell_2$-norm sense, formally, $R(\bh_t,\bh_{t-1})=\frac{1}{2}\|\bD\bh_t - \bF\bD\bh_{t-1}\|_{2}^{2}$, where $\bF\in \mathbb{R}^{n_0\times n_0}$ is a correlation matrix between $\bs_t$ and $\bs_{t-1}$. More recently, the study by \cite{LeArXiv19} designed the $\ell_{1}$-$\ell_{1}$-RNN, which stems from unfolding an algorithm that solves the $\ell_{1}$-$\ell_{1}$ minimization problem~\cite{MotaTIT17,MotaTSP17}. This is a version of Problem~\eqref{seqquentialProblem} with $R(\bh_t,\bh_{t-1})=\|\bh_t - \bG\bh_{t-1}\|_{1}$, where $\bG\in \mathbb{R}^{h\times h}$ is an affine transformation that promotes the correlation between $\bh_{t}$ and $\bh_{t-1}$. Both studies \cite{WisdomICASSP17,LeArXiv19} have shown that carefully-designed deep RNN models outperform the generic RNN model and ISTA \cite{daubechies2004iterative} in the task of sequential frame reconstruction. 

Deep neural networks (DNN) have achieved state-of-the-art performance in solving~\ref{l1-norm} for individual signals, both in terms of accuracy and inference speed~\cite{mousavi2015deep}. However, these models are often criticized for their lack of interpretability and theoretical guarantees \cite{lucas2018using}. Motivated by this, several studies focus on designing DNNs that incorporate domain knowledge, namely, signal priors. These include deep unfolding methods which design neural networks to learn approximations of iterative optimization algorithms. Examples of this approach are LISTA~\cite{GregorICML10} and its variants, including ADMM-Net~\cite{sun2016deep}, AMP~\cite{borgerding2017amp}, and an unfolded version of the iterative hard thresholding algorithm~\cite{xin2016maximal}.

LISTA~\cite{GregorICML10}
unrolls the iterations of ISTA into a feed-forward neural
network\ with weights, where each
layer implements an iteration: 
$
\bh_t^{(l)} = \phi_{\gamma^{(l)}}(\bW^{(l)}\bh_t^{(l-1)} + \bU^{(l)}\bx_t),
$
with $\bW^{(l)} = \mathbf{I}-\frac{1}{c}\bD^{\mathrm{T}}\bA^{\mathrm{T}}\bA\bD$, $\bU^{(l)} = \frac{1}{c}\bD^{\mathrm{T}}\bA^{\mathrm{T}}$, and $\gamma^{(l)}$ being learned
from data. It has been shown \cite{GregorICML10,SpechmannPAMI15} that a $d$-layer LISTA network with trainable parameters $\bth= \{\bW^{(l)},\bU^{(l)},\gamma^{(l)}\}_{l=1}^{d}$ achieves the same performance as the original ISTA but with much fewer iterations (i.e., number of layers). Recent studies \cite{ChenNIPS18,LiuICLR19} have found that exploiting dependencies between $\bW^{(l)}$ and $\bU^{(l)}$ leads to reducing the number of trainable parameters while retaining the performance of LISTA. These works provided theoretical insights to the convergence conditions of LISTA. However, the problem of \textit{designing deep unfolding methods for dealing with sequential signals} is significantly less explored.  In this work, we will consider a deep RNN for solving Problem \ref{seqquentialProblem} that outputs a sequence, $\hat{\bs}_1,\dots,\hat{\bs}_T$ from an input measurement sequence, $\bx_1,\dots,\bx_T$, as following:
\begin{align}\label{generic-RNN}
\bh_t&=\phi_{\gamma}(\bW\bh_{t-1} + \bU\bx_t),\nonumber\\
\hat{\bs}_t~& = \bD \bh_t.
\end{align}

It has been shown that reweighted algorithms---such as the reweighted $\ell_1$ minimization method by~\cite{Candes08} and the reweighted $\ell_{1}$-$\ell_{1}$ minimization by~\cite{LuongTIP18}---outperform their non-reweighted counterparts. Driven by this observation, this paper proposes a novel deep RNN architecture by unfolding a reweighted-$\ell_1$-$\ell_1$ minimization algorithm. Due to the reweighting, our network has higher expressivity than existing RNN models leading to better data representations, especially when depth increases. This is in line with recent studies~\cite{HeCVPR16,CortesPCML17,HuangCVPR17}, which have shown that better performance can be achieved by highly over-parameterized networks, i.e., networks with far more parameters than the number of training samples. While the most recent studies (related over-parameterized DNNs) consider fully-connected networks applied on classification problems \cite{NeyshaburICLR19}, our approach focuses on deep-unfolding architectures and opts to understand how the networks learn a low-complexity representation for sequential signal reconstruction, which is a regression problem across time. Furthermore, while there have been efforts to build deep RNNs~\cite{PascanuICLR14,LiCVPR18,LuoICCV18,WisdomICASSP17}, examining the generalization property of such deep RNN models on unseen sequential data still remains elusive. In this work, we derive the generalization error bound of the proposed design and further compare it with existing RNN bounds \cite{ZhangICML18,KusupatiNIPS18}.

\textbf{Contributions}.	The contributions of this work are as follows: 
\begin{itemize}
	\item 
	We propose a principled deep RNN model for sequential signal reconstruction by unfolding a reweighted $\ell_1\text{-}\ell_1$ minimization method. Our reweighted-RNN model employs different soft-thresholding functions that are adaptively learned per hidden unit. Furthermore, the proposed model is over-parameterized, has high expressivity and can be efficiently stacked. 
	
	\item We derive the generalization error bound of the proposed model (and deep RNNs) by measuring Rademacher complexity and show that the over-parameterization of our RNN ensures good generalization. To best of our knowledge, this is the first generalization error bound for deep RNNs; moreover, our bound is tighter than existing bounds derived for shallow RNNs \cite{ZhangICML18,KusupatiNIPS18}.
	
	\item We provide experiments in the task of reconstructing video sequences from low-dimensional measurements. We show significant gains when using our model compared to several state-of-the-art RNNs (including unfolding architectures), especially when the depth of RNNs increases.
\end{itemize}

\section{Deep Recurrent Neural Networks via unfolding reweighted-$\ell_1$-$\ell_1$ minimization}
\label{proposedRNN-l1-l1}

In this section, we describe a reweighted $\ell_1\text{-}\ell_1$ minimization problem for sequential signal reconstruction and propose an iterative algorithm based on the proximal method. We then design a deep RNN architecture by unfolding this algorithm. 

%The study by \cite{MotaTIT17} has shown that $\ell_1\text{-}\ell_1$ minimization outperforms both $\ell_1$ and $\ell_1\text{-}\ell_2$ minimization, while \cite{Candes08} proposed the reweighted-$\ell_1$ minimization surpassing $\ell_1$ minimization. Furthermore, \cite{MotaICASSP16,LuongTIP18} have shown that weighted-$\ell_1\text{-}\ell_1$ and reweighted-$\ell_1\text{-}\ell_1$ minimizations outperform $\ell_1\text{-}\ell_1$ minimization in both theory and practice. Motivated by this, we extend Problem \autoref{seqquentialProblem} to a new reweighted $\ell_1$-$\ell_1$ minimization as follows:\\

\textbf{The proposed reweighted $\ell_1\text{-}\ell_1$ minimization}. We introduce the following problem: 
\begin{align}
\label{reweighted-l1-l1minimization}
\min_{\bh_t} \Big\{&\frac{1}{2}\|\bx_t-\bA\bD\bZ\bh_t\|_2^2 + \lambda_1\|\bg\circ\bZ\bh_t\|_{1} \nonumber\\
&+ \lambda_2\|\bg\circ(\bZ\bh_t - \bG\bh_{t-1})\|_{1}\Big\} ,
\end{align}
where~\textquotedblleft$\circ$\textquotedblright~denotes element-wise multiplication, $\bg\in\mathbb{R}^{h}$ is a vector of positive weights, $\bZ\in \mathbb{R}^{h\times h}$ is a reweighting matrix, and $\bG\in \mathbb{R}^{h\times h}$ is an affine transformation that promotes the correlation between $\bh_{t-1}$ and $\bh_{t}$. Intuitively, $\bZ$ is adopted to transform $\bh_t$ to $\bZ\bh_t\in\mathbb{R}^{h}$, producing a reweighted version of it. Thereafter, $\bg$ aims to reweight each transformed component of $\bZ\bh_t$ and $\bZ\bh_t-\bG\bh_{t-1}$ in the $\ell_1$-norm regularization terms. Because of applying reweighting~\cite{Candes08}, the solution of Problem \eqref{reweighted-l1-l1minimization} is a more accurate sparse representation compared to the solution of the $\ell_1$-$\ell_1$ minimization problem in \cite{LeArXiv19} (where $\bZ=\bI$ and $\bg=\bI$). Furthermore, the use of the reweighting matrix $\bZ$ to transform $\bh_t$ to $\bZ\bh_t$ differentiates Problem~\eqref{reweighted-l1-l1minimization} from the reweighted $\ell_1\text{-}\ell_1$ minimization problem in~\cite{LuongTIP18} where $\bZ=\bI$.

The objective function in~\eqref{reweighted-l1-l1minimization} consists of the differentiable fidelity term $f(\bZ\bh_t) = \frac{1}{2}\|\bx_t-\bA\bD\bZ\bh_t\|_2^{2}$ and the non-smooth term $g(\bZ\bh_t) = \lambda_1\|\bg\circ\bZ\bh_t\|_{1} + \lambda_2\|\bg\circ(\bZ\bh_t - \bG\bh_{t-1})\|_{1}$. We use a proximal gradient method \cite{Beck09} to solve~\eqref{reweighted-l1-l1minimization}: At iteration~$l$, we first update $\bh_t^{(l-1)}$---after being multiplied by $\bZ_{l}$---with a gradient descent step on the fidelity term as $\bu=\bZ_{l}\bh_t^{(l-1)}-\frac{1}{c}\bZ_{l}\nabla f(\bh_t^{(l-1)})$, where $\nabla f(\bh_t^{(l-1)})=\bD^{\mathrm{T}}\bA^{\mathrm{T}}(\bA\bD\bh_t^{(l-1)}-\bx_t)$. Then, $\bh_t^{(l)}$ is updated as 
\begin{equation}\label{reweighted-l1-proximal}
\bh_t^{(l)}= \varPhi_{\frac{\lambda_1}{c}\bg_{l},\frac{\lambda_2}{c}\bg_{l},\bG\bh_{t-1}}\Big(\bZ_{l}\bh_t^{(l-1)}-\frac{1}{c}\bZ_{l}\nabla f(\bh_t^{(l-1)})\Big),
\end{equation}
where the proximal operator $\varPhi_{\frac{\lambda_1}{c}\bg_{l},\frac{\lambda_2}{c}\bg_{l},\bG\bh_{t-1}}(\bu)$ is defined as
\begin{equation}\label{reweighted-l1-proximalOperator}
\varPhi_{\frac{\lambda_1}{c}\bg_{l},\frac{\lambda_2}{c}\bg_{l},\bhbar}(\bu)=\argmin\limits_{\bv\in \mathbb{R}^h}\Big\{ \frac{1}{c}g(\bv) + \frac{1}{2}||\bv-\bu||^{2}_{2}\Big\},
\end{equation}
with $\bhbar=\bG\bh_{t-1}$. Since the minimization problem is separable, we can minimize \eqref{reweighted-l1-proximalOperator} independently for each of the elements $\g_{l}$, $\hbar$, $u$ of the corresponding $\bg_{l}$, $\bhbar$, $\bu$ vectors. After solving \eqref{reweighted-l1-proximalOperator}, we obtain $\varPhi_{\frac{\lambda_1}{c}\g_{l},\frac{\lambda_2}{c}\g_{l},\hbar}(u)$ [for solving \eqref{reweighted-l1-proximalOperator}, we refer to Proposition \ref{propReweighted-l1-l1} in Appendix \ref{solvingReweightedl1-l1Minimization}]. For $\hbar \geq 0$:
\begin{align}
&\varPhi_{\frac{\lambda_1}{c}\g_l,\frac{\lambda_2}{c}\g_l,\hbar\geq 0}(u)=\nonumber\\
&\begin{cases}
u - \frac{\lambda_1}{c}\g_l - \frac{\lambda_2}{c}\g_l, & \hbar + \frac{\lambda_1}{c}\g_l+ \frac{\lambda_2}{c}\g_l < u < \infty \\
\hbar, & \hbar + \frac{\lambda_1}{c}\g_l - \frac{\lambda_2}{c}\g_l \leq u \leq \hbar + \frac{\lambda_1}{c}\g_l + \frac{\lambda_2}{c}\g_l \\
u - \frac{\lambda_1}{c}\g_l + \frac{\lambda_2}{c}\g_l, & \frac{\lambda_1}{c}\g_l - \frac{\lambda_2}{c}\g_l <u < \hbar + \frac{\lambda_1}{c}\g_l
- \frac{\lambda_2}{c}\g_l\\
0, & -\frac{\lambda_1}{c}\g_l- \frac{\lambda_2}{c}\g_l\leq u \leq \frac{\lambda_1}{c}\g_l- \frac{\lambda_2}{c}\g_l\\
u + \frac{\lambda_1}{c}\g_l + \frac{\lambda_2}{c}\g_l, & -\infty < u < -\frac{\lambda_1}{c}\g_l - \frac{\lambda_2}{c}\g_l,\\
\end{cases}\label{reweighted-l1-proximalOperatorElementCompute1_positive}
\end{align}
and for $\hbar <0$:
\begin{align}
&\varPhi_{\frac{\lambda_1}{c}\g_l,\frac{\lambda_2}{c}\g_l,\hbar<0}(u)=\nonumber\\
&\begin{cases}
u - \frac{\lambda_1}{c}\g_l - \frac{\lambda_2}{c}\g_l, & \frac{\lambda_1}{c}\g_l+ \frac{\lambda_2}{c}\g_l < u < \infty \\
0, & - \frac{\lambda_1}{c}\g_l +\frac{\lambda_2}{c}\g_l \leq u \leq \frac{\lambda_1}{c}\g_l + \frac{\lambda_2}{c}\g_l \\
u + \frac{\lambda_1}{c}\g_l - \frac{\lambda_2}{c}\g_l , & \hbar - \frac{\lambda_1}{c}\g_l +\frac{\lambda_2}{c}\g_l < u < - \frac{\lambda_1}{c}\g_l
+ \frac{\lambda_2}{c}\g_l\\
\hbar, & \hbar-\frac{\lambda_1}{c}\g_l- \frac{\lambda_2}{c}\g_l\leq u \leq \hbar - \frac{\lambda_1}{c}\g_l + \frac{\lambda_2}{c}\g_l\\
u- \frac{\lambda_1}{c}\g_l + \frac{\lambda_2}{c}\g_l, & -\infty < u <\hbar -\frac{\lambda_1}{c}\g_l - \frac{\lambda_2}{c}\g_l\label{reweighted-l1-proximalOperatorElementCompute1_negative}\\
\end{cases}
\end{align}
Fig. \ref{proximalActivation} depicts the proximal operators for $\hbar \geq 0$ and $\hbar <0$. Observe that different values of $g_l$ lead to different shapes of the proximal functions $\varPhi_{\frac{\lambda_1}{c}\g_l,\frac{\lambda_2}{c}\g_l,\hbar}(u)$ for each element $u$ of $\bu$.
\begin{figure}[t]
	\vspace{-20pt}
	\centering 
	\subfigure[$\hbar\geq 0$.]{\label{Positive}\includegraphics[width=0.33\textwidth]{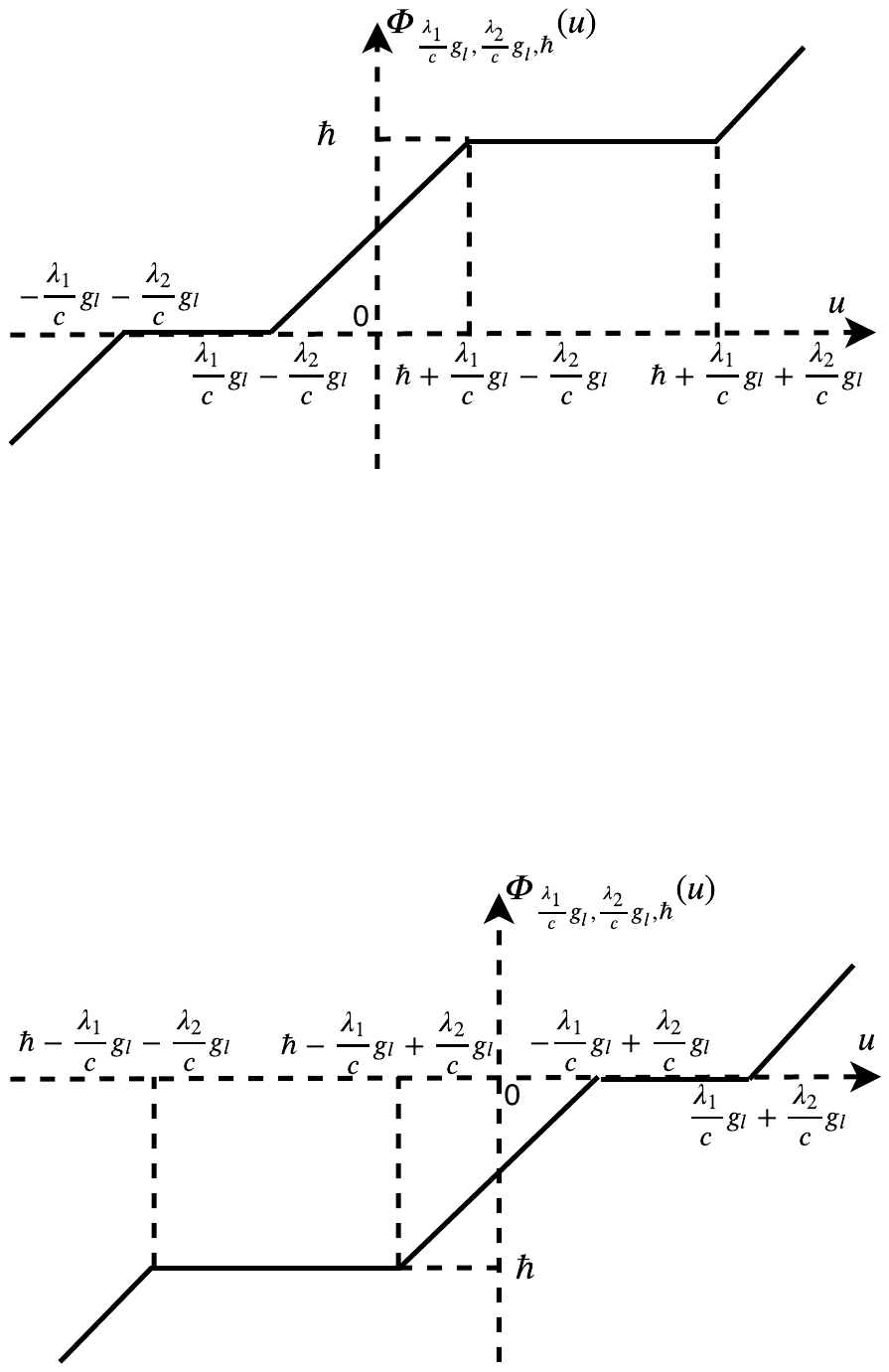}}	
	\subfigure[$\hbar< 0$.]{\label{Negative}\includegraphics[width=0.33\textwidth]{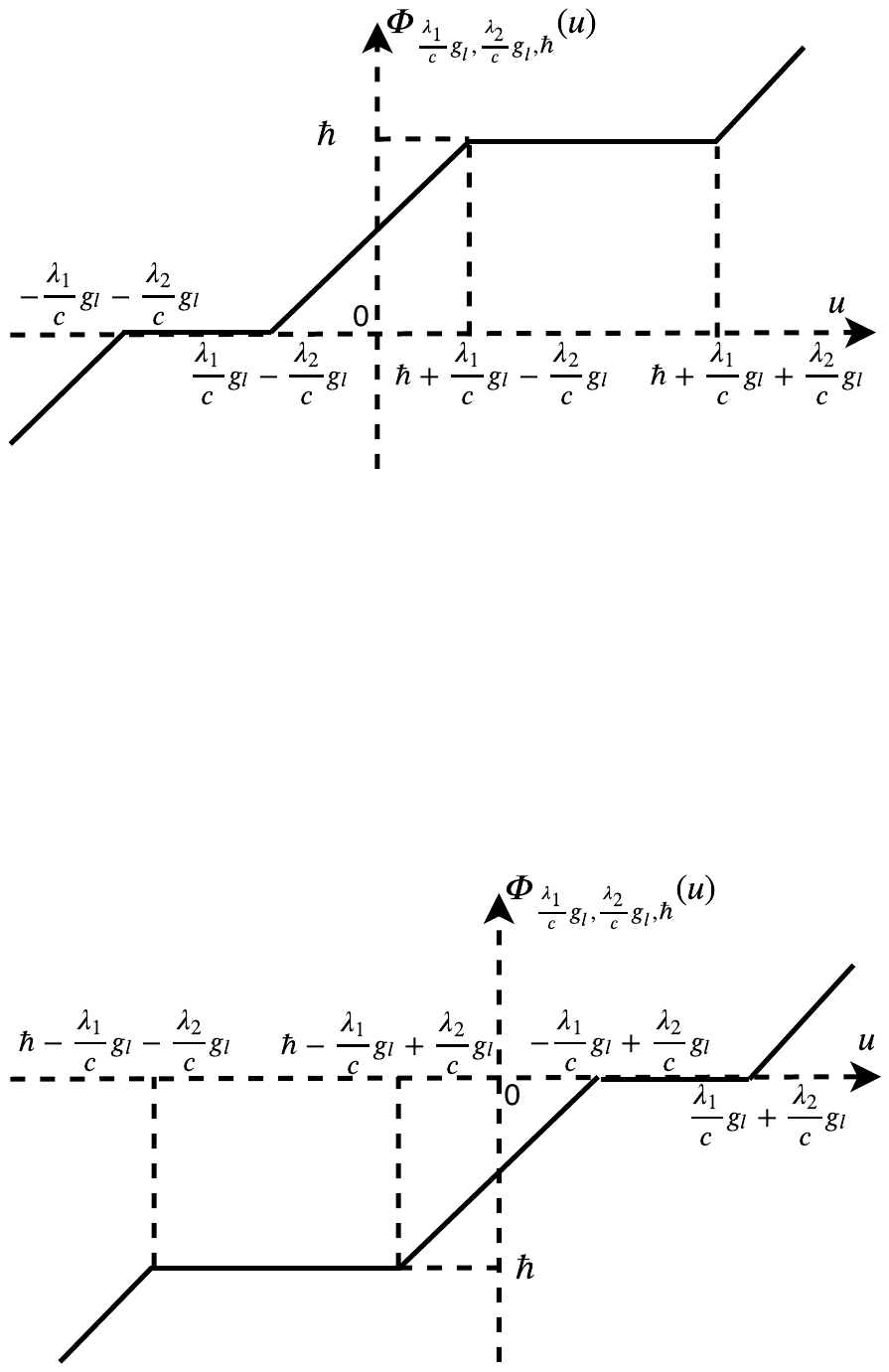}}			
	\caption{The generic form of the proximal operators for Algorithm \ref{reweighted-l1-l1-algorithm} - but also the activation function in the proposed reweighted-RNN. Note that per unit per layer $\g_l$ leads to a different activation function.}
	\label{proximalActivation}
\end{figure}

\begin{algorithm}[t!]
	\textbf{Input:} Measurements~$\bx_1,\dots,\bx_T$, measurement matrix~$\textbf{A}$, dictionary~$\textbf{D}$, affine transform~$\bG$, initial~${\bh}_0^{(d)}\equiv\bh_0$, reweighting matrices $\bZ_1,\dots,\bZ_d$ and vectors $\bg_1,\dots,\bg_d$, $c$, $\lambda_1$, $\lambda_2$.\\
	\textbf{Output:} Sequence of sparse codes~$\bh_1,\dots,\bh_T$.\\
	\For{t = 1,\dots,T}{
		$\bh_t^{(0)} = \bG{\bh}^{(d)}_{t-1}$\\
		\For{l = 1 to d}{
			$\bu = [\bZ_l - \frac{1}{c} \bZ_l\textbf{D}^{\mathrm{T}}\textbf{A}^{\mathrm{T}}
			\textbf{AD}]\bh_t^{(l-1)} + \frac{1}{c}\bZ_l\textbf{D}^{\mathrm{T}} \textbf{A}^{\mathrm{T}} \bx_t$\label{gradientDescent}\\
			$\bh_t^{(l)} = \varPhi_{\frac{\lambda_1}{c}\bg_l, \frac{\lambda_2}{c}\bg_l, \bG\bh^{(d)}_{t-1}}(\bu)$\label{proximalOperator}\\
		}
		% $\hat{\bh}_t = \bh^{(K)}_{t}$\\
	}
	\textbf{return} $\bh_1^{(d)},\dots,\bh_T^{(d)}$\\
	\caption{The proposed algorithm for sequential signal reconstruction.}
	\label{reweighted-l1-l1-algorithm}
\end{algorithm}
\vspace{-10pt}
Our iterative algorithm is given in Algorithm~\ref{reweighted-l1-l1-algorithm}. We reconstruct a sequence $\bh_1,\dots,\bh_T$ from a sequence of measurements $\bx_1,\dots,\bx_T$. For each time step $t$, Step \ref{gradientDescent} applies a gradient descent update for~$f(\bZ\bh_{t-1})$ and Step \ref{proximalOperator} applies the proximal operator $\varPhi_{\frac{\lambda_1}{c}\bg_l,\frac{\lambda_2}{c}\bg_l,\bG\bh_{t-1}^{(d)}}$ element-wise to the result. Let us compare the proposed method against the algorithm in \cite{LeArXiv19}---which resulted in the $\ell_1$-$\ell_1$-RNN---that solves the $\ell_1$-$\ell_1$ minimization in \cite{MotaTSP17} (where $\bZ_l=\bI$ and $\bg_l=\bI$). In that algorithm, the update terms in Step~\ref{gradientDescent}, namely $\bI- \frac{1}{c} \textbf{D}^{\mathrm{T}}\textbf{A}^{\mathrm{T}}
\textbf{AD}$ and $\frac{1}{c}\textbf{D}^{\mathrm{T}} \textbf{A}^{\mathrm{T}} $, and the proximal operator in Step \ref{proximalOperator} are the same for all iterations of $l$. In contrast, Algorithm \ref{reweighted-l1-l1-algorithm} uses a different $\bZ_l$ matrix per iteration to reparameterize the update terms (Step~\ref{gradientDescent}) and, through updating $\bg_l$, it applies a different proximal operator to each element $\bu$ (in Step \ref{proximalOperator}) per iteration $l$.

\textbf{The proposed reweighted-RNN}. We now describe the proposed architecture for sequential signal recovery, designed by unrolling the steps of Algorithm \ref{reweighted-l1-l1-algorithm} across the iterations $l = 1,\dots,d$ (yielding the hidden layers) and time steps $t = 1,\dots,T$.
Specifically, the $l$-th hidden layer is given by 
\begin{equation}\label{reweighted-l1-l1-RNN}
\bh_t^{(l)}\hspace{-2pt}=\hspace{-2pt}\left\{
\begin{array}{l}
\varPhi_{\frac{\lambda_1 }{c}\bg_1, \frac{\lambda_2 }{c}\bg_1, \bG\bh^{(d)}_{t-1}}\Big(\mathbf{W}_{1}\bh_{t-1}^{(d)}+\mathbf{U}_1\bx_t\Big),~\text{if}~l=1,\\
\varPhi_{\frac{\lambda_1 }{c}\bg_l, \frac{\lambda_2 }{c}\bg_l, \bG\bh^{(d)}_{t-1}}\Big(\mathbf{W}_{l}\bh_{t}^{(l-1)} + \textbf{U}_l\bx_t\Big),~~\text{if}~l>1,\\
\end{array}
\right.
\end{equation}
and the reconstructed signal at time step~$t$ is given by $\hat{\bs}_t=\bD\bh_t^{(d)}$; where $\mathbf{U}_l$, $\mathbf{W}_{l}$, $\mathbf{V}$ are defined as
\begin{align}
\mathbf{U}_l&=\frac{1}{c}\bZ_l\bD^{\mathrm{T}}\bA^{\mathrm{T}}, \forall l,\label{weightU} \\ 
\mathbf{W}_{1}&=\bZ_1\bG - \frac{1}{c} \bZ_1\textbf{D}^{\mathrm{T}}\textbf{A}^{\mathrm{T}} \textbf{AD}\bG,\label{weightW1}\\
\mathbf{W}_{l}&=\bZ_l-\frac{1}{c}\bZ_l\bD^{\mathrm{T}}\bA^{\mathrm{T}}\bA\bD, ~l>1.\label{weightWl}
\end{align}
The activation function is the proximal operator $\varPhi_{\frac{\lambda_1}{c}\bg_l,\frac{\lambda_2}{c}\bg_l,\bhbar}(\bu)$ with learnable parameters $\lambda_1$, $\lambda_2$, $c$, $\bg_l$ (see Fig. \ref{proximalActivation} for the shapes of the activation functions). 

%\begin{figure*}
%	\vspace{-20pt}
%	\centering
%	\renewcommand{\arraystretch}{1.2}		
%	\label{proposed-model}
%	\begin{tabular}{cc}
%		\adjustbox{valign=b}{\subfigure[The proposed reweighted-RNN]{\label{reweighted-RNN}
%				\includegraphics[width=0.4\textwidth,height=6cm,keepaspectratio]{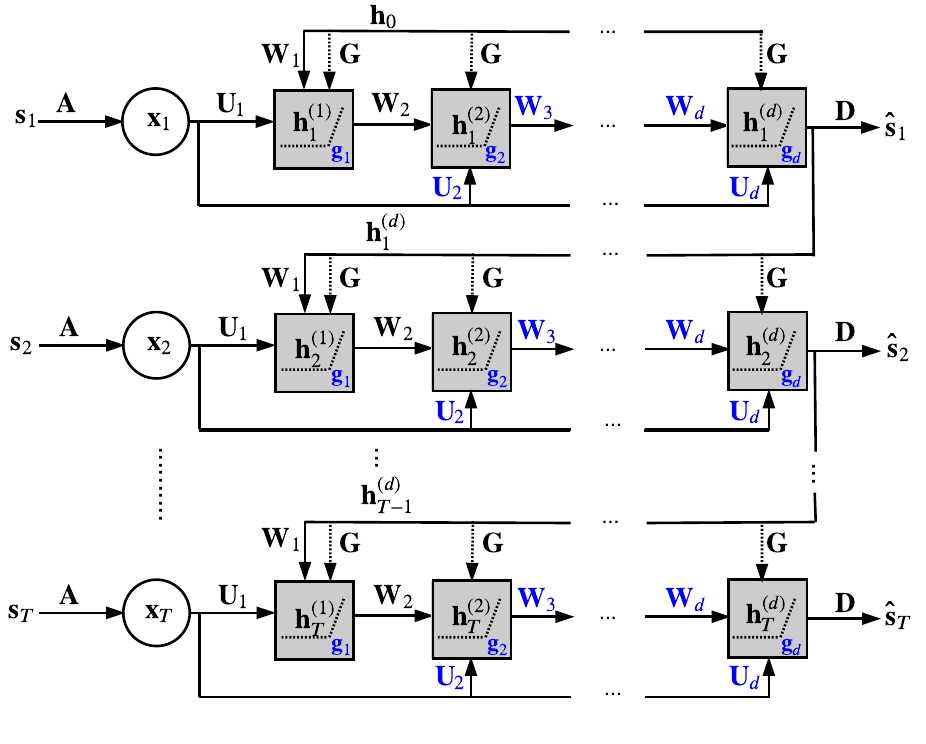}}}
%		&      
%		\adjustbox{valign=b}{\begin{tabular}{@{}c@{}}
%				\subfigure[$\ell_1$-$\ell_1$-RNN.]{\label{fig-l1-l1-RNN}
%					\includegraphics[width=0.4\textwidth,height=2cm,keepaspectratio]{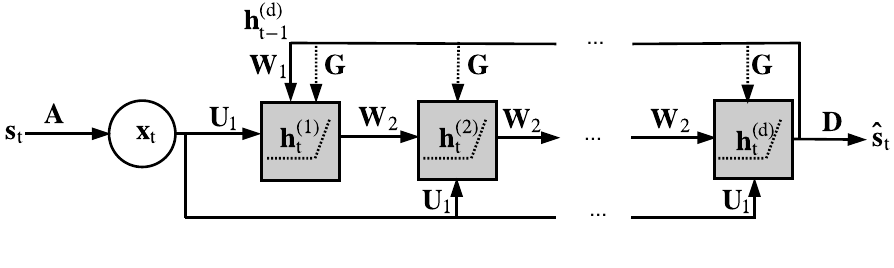}} \\
%				\subfigure[Stacked RNN.]{\label{fig-stackedRNN}
%					\includegraphics[width=0.4\textwidth,height=2cm,keepaspectratio]{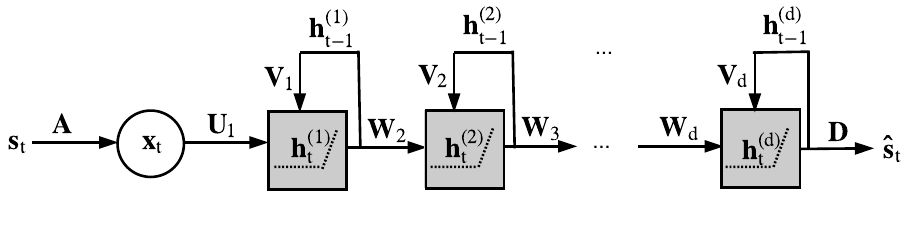}}
%		\end{tabular}}
%	\end{tabular}
%	\caption{The proposed (a) reweighted-RNN vs. (b) $\ell_1$-$\ell_1$-RNN and (c) Stacked RNN with $d$ layers.}
%	\vspace{-10pt}
%\end{figure*}
\begin{figure}[t]


	\centering
	\label{proposed-model}
	\subfigure[The proposed reweighted-RNN.]{\label{reweighted-RNN}\includegraphics[width=0.42\textwidth]{deepSTA-RNN_new.pdf}}
	\subfigure[$\ell_1$-$\ell_1$-RNN.]{\label{fig-l1-l1-RNN}\includegraphics[width=0.42\textwidth]{deepl1-l1-RNN.pdf}}		
	\subfigure[Stacked RNN.]{\label{fig-stackedRNN}\includegraphics[width=0.42\textwidth]{deepRNN.pdf}}
	\caption{The proposed (a) reweighted-RNN vs. (b) $\ell_1$-$\ell_1$-RNN and (c) Stacked RNN.}
\end{figure} 	
Fig. \ref{reweighted-RNN} depicts the architecture of the proposed reweighted-RNN. Input vectors $\bs_t$, $t=1,\dots,T$ are compressed by a linear measurement layer $\bA$, resulting in compressive measurements $\bx_{t}$. The reconstructed vectors $\hat{\bs}_t$, $t=1,\dots,T$, are obtained by multiplying linearly the hidden representation $\bh_{t}^{(d)}$ with the dictionary $\bD$. We train our network in an end-to-end fashion. During training, we minimize the loss function~$\mathcal{L}(\bth) =\underset{\bs_1,\cdots,\bs_T}{\mathbb{E}} \Big[ \sum_{t=1}^{T}\|\bs_t - \hat{\bs}_t\|_{2}^{2}\Big]$ using stochastic gradient descent on mini-batches, where the trainable parameters are $\bth= \{\bA, \bD, \bG,\bh_0, \bZ_1,\dots,\bZ_d,\bg_1,\dots,\bg_d, c, \lambda_1, \lambda_2\}$. 

We now compare the proposed reweighted-RNN [Fig. \ref{reweighted-RNN}] against the recent $\ell_1$-$\ell_1$-RNN \cite{LeArXiv19} [Fig. \ref{fig-l1-l1-RNN}]. The $l$-th hidden layer in $\ell_1$-$\ell_1$-RNN is given by
\begin{equation}\label{l1-l1-RNN}
\bh_t^{(l)}\hspace{-2pt}=\hspace{-2pt}\left\{
\begin{array}{l}
\varPhi_{\frac{\lambda_1}{c}, \frac{\lambda_2}{c}, \bG\bh^{(d)}_{t-1}}\Big(\mathbf{W}_{1}\bh_{t-1}^{(d)}+\mathbf{U}_1\bx_t\Big),~\text{if}~l=1,\\
\varPhi_{\frac{\lambda_1}{c}, \frac{\lambda_2}{c}, \bG\bh^{(d)}_{t-1}}\Big(\mathbf{W}_{2}\bh_{t}^{(l-1)} + \textbf{U}_1\bx_t\Big),~\text{if}~l>1.\\
\end{array}
\right.
\end{equation}
The proposed model has the following advantages over $\ell_1$-$\ell_1$-RNN. Firstly, $\ell_1$-$\ell_1$-RNN uses the proximal operator $\varPhi_{\frac{\lambda_1}{c}, \frac{\lambda_2}{c}, \bhbar}(\bu)$ as activation function, whose learnable parameters $\lambda_1$, $\lambda_2$ are fixed across the network. Conversely, the corresponding parameters $\frac{\lambda_1}{c}\bg_l$ and $\frac{\lambda_2}{c}\bg_l$ [see \eqref{reweighted-l1-proximalOperatorElementCompute1_positive}, \eqref{reweighted-l1-proximalOperatorElementCompute1_negative}, and Fig. \ref{proximalActivation}] in our proximal operator, $\varPhi_{\frac{\lambda_1}{c}\bg_l,\frac{\lambda_2}{c}\bg_l,\bhbar}(\bu)$, are learned for each hidden layer due to the reweighting vector $\bg_l$; hence, the proposed model has a different activation function for each unit per layer. The second difference comes from the set of parameters $\{\bW_l, \bU_l\}$ in \eqref{l1-l1-RNN} and \eqref{reweighted-l1-l1-RNN}. The $\ell_1$-$\ell_1$-RNN model uses the same $\{\bW_2, \bU_1\}$ for the second and higher layers. In contrast, our reweighted-RNN has different sets of $\{\bW_l, \bU_l\}$ per hidden layer due to the reweighting matrix $\bZ_l$. These two aspects [which are schematically highlighted in blue fonts in Fig. \ref{reweighted-RNN}] can lead to an increase in the learning capability of the proposed reweighted-RNN, especially when the depth of the model increases. 
%We highlight the parameters $\bg_l$, $\bW_l$ and $\bU_l$ of reweighted-RNN in Fig. \ref{reweighted-RNN} in blue color, which are not present in $\ell_1$-$\ell_1$-RNN [in Fig. \ref{fig-l1-l1-RNN}]. 

In comparison to a generic stacked RNN~\cite{PascanuICLR14} [Fig. \ref{fig-stackedRNN}], reweighted-RNN promotes the inherent data structure, that is, each vector $\bs_t$ has a sparse representation $\bh_t$ and consecutive $\bh_t$'s are correlated. This design characteristic of the reweighted-RNN leads to residual connections which reduce the risk of vanishing gradients during training [the same idea has been shown in several works~\cite{HeCVPR16,HuangCVPR17} in deep neural network literature]. Furthermore, in \eqref{weightU} and \eqref{weightWl}, we see a weight coupling of $\bW_l$ and $\bU_l$ (due to the shared components of $\bA$, $\bD$ and $\bZ$). This coupling satisfies the necessary condition of the convergence in \cite{ChenNIPS18} (Theorem 1). Using Theorem 2 in \cite{ChenNIPS18}, it can be shown that reweighted-RNN, in theory, needs a smaller number of iterations (i.e.,
$d$ in Algorithm \ref{reweighted-l1-l1-algorithm}) to reach convergence, compared to ISTA \cite{daubechies2004iterative} and FISTA \cite{Beck09}.

%While increasing the network expressivity, the over-parameterization of reweighted-RNN raises the question of whether our network ensures good generalization. In the next section, we provide a generalization error bound for the proposed architecture.

\section{Generalization Analysis}
While increasing the network expressivity, the over-parameterization of reweighted-RNN raises the question of whether our network ensures good generalization. In this section, we derive and analyze the generalization properties of the proposed reweighted-RNN model in comparison to state-of-the-art RNN architectures. We provide bounds on the Rademacher complexity \cite{ShwartzBook14} for functional classes of the considered deep RNNs, which are used to derive generalization error bounds for evaluating their generalization properties

\subsection{Rademacher complexity and generalization error bound}\label{generalizationBound}

%The learned algorithm as a learned hypothesis depends on the training data, its empirical measurement, to minimize a training error. The generalization depends implicitly on an algorithm used to minimize the training error and the training dataset. The algorithm choices (hypotheses) via the initialization, update/regularization rules, learning rates, and stopping conditions, etc., lead to different generalization behaviors. The studies on generalization bounds, with regard to the training sample size as well as DNNs's parameters such as depth, norms of layers, and weights, attempt to prove the generalization of DNNs. These may help us identify measurable properties of well-known DNNs. In a nutshell, we aim to get a guarantee of the following form: $\mathrm{Error_{testing}} \leq \mathrm{Error_{training}} + f(m,\bcW,\delta),$ where $m$ is the training sample size, $\bcW$ is a measure for the learning model complexity, and $\delta$ is the probability that this bound fails.
%\subsection{Notations and Definitions}	
\textbf{Notations}. Let $f_{\bcW}^{(d)}:\mathbb{R}^n\mapsto \mathbb{R}^{h}$ be the function computed by a $d$-layer network with weight parameters $\bcW$. The network $f_{\bcW}^{(d)}$ maps an input sample $\bx_i \in \mathbb{R}^n$ (from an input space $\mathcal{X}$) to an output $\by_i \in \mathbb{R}^h$ (from an output space $\mathcal{Y}$), i.e., $\by_i = f_{\bcW}^{(d)}(\bx_i)$. Let $S$ denote a training set of size $m$, i.e., $S=\{(\bx_i,\by_i)\}_{i=1}^m$ and $\mathbb{E}_{(\bx_i,\by_i)\sim S}[\cdot]$ denote an expectation over $(\bx_i,\by_i)$ from $S$. The set $S$ is drawn i.i.d. from a distribution $\mathcal{D}$, denoted as $S \sim \mathcal{D}^m$, over a space $\mathcal{Z} = \mathcal{X}\times \mathcal{Y}$. Let $\mathcal{F}$ be a (class) set of functions. Let $\ell:\mathcal{F} \times \mathcal{Z} \mapsto \mathbb{R}$ denote the loss function and $	\ell\boldsymbol{\circ}\mathcal{F} = \{z\mapsto \ell(f,z):f\in \mathcal{F}\}$. We define the true loss and the empirical (training) loss by $L_{\mathcal{D}}(f)$ and $L_S(f)$, respectively, as follows:
\begin{equation}\label{testLoss}
L_{\mathcal{D}}(f)=\mathbb{E}_{(\bx_i,\by_i)\sim \mathcal{D}}\big[\ell\big(f(\bx_i),\by_i\big)\big],
\end{equation}
and
\begin{equation}\label{trainLoss}
L_S(f)=\mathbb{E}_{(\bx_i,\by_i)\sim S}\big[\ell\big(f(\bx_i),\by_i\big)\big].
\end{equation}	
\textit{Generalization error} that is defined as a measure of how accurately a learned algorithm is able to predict outcome values for unseen data is calculated by $L_{\mathcal{D}}(f)-L_S(f)$.

\textbf{Rademacher complexity}. Let $\mathcal{F}$ be a hypothesis set of functions (neural networks). The \textit{empirical Rademacher complexity} of $\mathcal{F}$ \cite{ShwartzBook14} for a training sample set $S$ is defined as follows:
\begin{equation}\label{empiricalRademacher}
\mathfrak{R}_S(\mathcal{F})= \frac{1}{m} \underset{\bep\in \{\pm 1\}^m}{\mathbb{E}} \Bigg[\sup_{f \in \mathcal{F}}\sum_{i=1}^{m}\epsilon_if(\bx_i)\Bigg],
\end{equation}
where $\bep = (\epsilon_1,...,\epsilon_m)$, here $\epsilon_i$ is independent uniformly distributed random (Rademacher) variables from $\{\pm1\}$, according to $\mathbb{P}[\epsilon_i=1]=\mathbb{P}[\epsilon_i=-1]=1/2$. %\textit{The Rademacher complexity} is defined by $\mathfrak{R}_m(\mathcal{F})= \mathbb{E}_{S \sim \mathcal{D}^m}[\mathfrak{R}_S(\mathcal{F})]$. 

\textbf{The generalization error bound} \cite{ShwartzBook14} is derived based on the Rademacher complexity in the following theorem:
\begin{theorem}\label{GETheorem}\cite[Theorem 26.5]{ShwartzBook14}\\
	Assume that $|\ell(f,z)|\leq \eta$ for all $f \in \mathcal{F}$ and $z$. Then, for any $\delta>0$, with probability at least $1-\delta$,
	\begin{align}
	&L_{\mathcal{D}}(f) - L_S(f) \leq 2 \mathfrak{R}_S(\ell\boldsymbol{\circ} \mathcal{F}) + 4\eta\sqrt{\frac{2\log(4/\delta)}{m}}. \label{GETheoremEq}
	\end{align}
\end{theorem}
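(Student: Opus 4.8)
The stated bound is a direct instantiation of a classical result (Theorem~26.5 in \cite{ShwartzBook14}), so the plan is simply to reproduce the standard argument, which rests on one symmetrization step sandwiched between two bounded-differences (McDiarmid) concentration steps.

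First I would introduce the random variable $\phi(S)=\sup_{f\in\mathcal{F}}\big(L_{\mathcal{D}}(f)-L_S(f)\big)$, which dominates $L_{\mathcal{D}}(f)-L_S(f)$ uniformly over $\mathcal{F}$, so that it suffices to bound $\phi(S)$. Since $L_S(f)=\frac{1}{m}\sum_{i=1}^{m}\ell(f(\bx_i),\by_i)$ and $|\ell(f,z)|\le\eta$, swapping a single example of $S$ changes $L_S(f)$---and hence $\phi(S)$---by at most $2\eta/m$; McDiarmid's inequality then gives that, with probability at least $1-\delta/2$, $\phi(S)\le\mathbb{E}_{S}[\phi(S)]+2\eta\sqrt{\tfrac{\log(2/\delta)}{2m}}$.

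Next I would control $\mathbb{E}_S[\phi(S)]$ by symmetrization: introduce an independent ghost sample $S'\sim\mathcal{D}^m$, write $L_{\mathcal{D}}(f)=\mathbb{E}_{S'}[L_{S'}(f)]$, pull the expectation outside the supremum via Jensen's inequality, and insert i.i.d.\ Rademacher signs $\epsilon_i$---which is legitimate because each summand $\ell(f(\bx_i),\by_i)-\ell(f(\bx_i'),\by_i')$ is symmetric under swapping the $i$-th pair. Splitting the supremum over the two resulting sums yields $\mathbb{E}_S[\phi(S)]\le 2\,\mathbb{E}_{S'\sim\mathcal{D}^m}[\mathfrak{R}_{S'}(\ell\boldsymbol{\circ}\mathcal{F})]$, i.e.\ the \emph{population} Rademacher complexity. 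To convert this into the \emph{data-dependent} form of \eqref{GETheoremEq}, I would then note that $\mathfrak{R}_S(\ell\boldsymbol{\circ}\mathcal{F})$ is itself a function of $S$ with bounded differences $2\eta/m$, so a second McDiarmid step gives, with probability at least $1-\delta/2$, $\mathbb{E}_{S'}[\mathfrak{R}_{S'}(\ell\boldsymbol{\circ}\mathcal{F})]\le\mathfrak{R}_S(\ell\boldsymbol{\circ}\mathcal{F})+2\eta\sqrt{\tfrac{\log(2/\delta)}{2m}}$.

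Chaining these three inequalities and taking a union bound over the two failure events (each of probability $\delta/2$) yields \eqref{GETheoremEq}, after the resulting $\sqrt{\log(\cdot)/m}$ terms are absorbed into the single term $4\eta\sqrt{2\log(4/\delta)/m}$. I do not anticipate a genuine obstacle here; the only fiddly points are the bookkeeping of the constants and confidence levels (which is exactly why the $4/\delta$ and the leading factor $4$ appear) and taking care to introduce the Rademacher signs on the \emph{centered} loss differences rather than on the losses themselves. Since the statement is verbatim \cite[Theorem~26.5]{ShwartzBook14}, it may equally well be invoked directly without re-deriving it.
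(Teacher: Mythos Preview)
Your sketch is correct and is precisely the standard McDiarmid--symmetrization--McDiarmid argument from \cite{ShwartzBook14}; however, the paper does not prove this theorem at all---it is simply quoted verbatim as \cite[Theorem~26.5]{ShwartzBook14} and then applied. So your final remark is the operative one: the statement is invoked directly, and no derivation is needed here.
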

It can be remarked that the bound in \eqref{GETheoremEq} depends on the training set $S$, which is able to be applied to a number of learning problems, e.g., regression and classification, given a loss function $\ell$.

\subsection{Generalization error bounds for reweighted-RNN}

\begin{theorem}[Generalization error bound for reweighted-RNN]\label{GE-reweighted-RNN-Theorem} Let $\mathcal{F}_{d,T}:\mathbb{R}^h\times \mathbb{R}^n\mapsto \mathbb{R}^h$ denote the functional class of reweighted-RNN with~$T$ time steps, where $\|\bW_l\|_{1,\infty}\leq \alpha_l$, $\|\bU_l\|_{1,\infty}\leq \beta_l$, and $1\leq l \leq d$. Assume that the input data $\|\bX_t\|_{2,\infty}\leq \sqrt{m}B_{\bx}$, initial hidden state $\bh_0$, and the loss function is 1-Lipschitz and bounded by $\eta$. Then, for $f \in \mathcal{F}_{d,T}$ and any $\delta>0$, with probability at least $1-\delta$ over a training set $S$ of size $m$,
	\begin{align}\label{GE-reweighted-RNN-TheoremEq}
	&L_{\mathcal{D}}(f) - L_S(f) \leq 2 \mathfrak{R}_S( \mathcal{F}_{d,T}) + 4\eta\sqrt{\frac{2\log(4/\delta)}{m}},
	\end{align}
	where
	\begin{align}\label{deepreweighted-RNNBoundEqTime}
	\hspace{-8pt}\mathfrak{R}_S(\mathcal{F}_{d,T})\leq&
	\sqrt{\frac{2(4dT\log 2+\log n + \log h)}{m}}
	\nonumber
	\\	
	~~~~~&\cdot\sqrt{\Big(\sum\limits_{l=1}^{d}\beta_l\varLambda_l\Big)^2 \Big(\frac{\Lambda_0^T-1}{\Lambda_0-1}\Big)^2 B_{\bx}^2+\varLambda_{0}^{2T} \|\bh_0\|_{\infty}^2},
	\end{align}
	\vspace{-0pt}
	with $\varLambda_l$ defined as follows: $\varLambda_l=\prod\limits_{k=l+1}^{d}\alpha_k$ with $~0\leq l\leq d-1$ and $\varLambda_d=1$.
\end{theorem}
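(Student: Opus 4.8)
The plan is to split the argument in two. First, the displayed inequality \eqref{GE-reweighted-RNN-TheoremEq} is just Theorem~\ref{GETheorem} applied to $\mathcal{F}=\mathcal{F}_{d,T}$: boundedness of the loss by $\eta$ supplies the additive term $4\eta\sqrt{2\log(4/\delta)/m}$ verbatim, and since the loss is $1$-Lipschitz the (vector-valued) contraction lemma lets us replace $\mathfrak{R}_S(\ell\boldsymbol{\circ}\mathcal{F}_{d,T})$ by $\mathfrak{R}_S(\mathcal{F}_{d,T})$. Hence everything reduces to establishing the Rademacher bound \eqref{deepreweighted-RNNBoundEqTime}.

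The first ingredient for \eqref{deepreweighted-RNNBoundEqTime} is a uniform $\ell_\infty$ bound on the network outputs over $S$, obtained by unrolling the recursion \eqref{reweighted-l1-l1-RNN} in the layer index $l$ and then in the time index $t$. The two facts used are: each proximal map $\varPhi_{\frac{\lambda_1}{c}\bg_l,\frac{\lambda_2}{c}\bg_l,\bhbar}$ is separable and non-expansive (being the proximal operator of a convex, separable function it is $1$-Lipschitz coordinate-wise, hence contractive in $\|\cdot\|_\infty$), and $\|\bW_l\bv\|_\infty\le\|\bW_l\|_{1,\infty}\|\bv\|_\infty\le\alpha_l\|\bv\|_\infty$, $\|\bU_l\bx_t\|_\infty\le\beta_l B_{\bx}$. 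Composing the $d$ layers within one time step yields a recursion $\|\bh_t^{(d)}\|_\infty\le\varLambda_0\|\bh_{t-1}^{(d)}\|_\infty+\big(\sum_{l=1}^d\beta_l\varLambda_l\big)B_{\bx}$, where $\varLambda_l=\prod_{k=l+1}^d\alpha_k$ arises because the input term $\bU_l\bx_t$ injected at layer $l$ is subsequently amplified by $\bW_{l+1},\dots,\bW_d$, and $\varLambda_0=\prod_{k=1}^d\alpha_k$. Unrolling over $t=1,\dots,T$ produces the geometric factor $\frac{\varLambda_0^T-1}{\varLambda_0-1}$ in front of $\big(\sum_l\beta_l\varLambda_l\big)B_{\bx}$ and the factor $\varLambda_0^T$ in front of $\|\bh_0\|_\infty$; separating the input-driven contribution from the (sample-independent) $\bh_0$-driven one gives the radius $R=\sqrt{(\sum_l\beta_l\varLambda_l)^2(\frac{\varLambda_0^T-1}{\varLambda_0-1})^2B_{\bx}^2+\varLambda_0^{2T}\|\bh_0\|_\infty^2}$ appearing in \eqref{deepreweighted-RNNBoundEqTime}.

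The second ingredient is the logarithmic ``effective-cardinality'' factor coming from the piecewise-linear structure of the proximal activations. By \eqref{reweighted-l1-proximalOperatorElementCompute1_positive}--\eqref{reweighted-l1-proximalOperatorElementCompute1_negative} (see Fig.~\ref{proximalActivation}) each proximal activation is a coordinate-wise piecewise-linear function with at most $5<2^4$ linear pieces; with $d$ layers and $T$ time steps, the number of distinct affine regimes the unrolled network can be in on $S$ is at most $2^{4dT}$, and a union bound over the $n$ input coordinates and $h$ output coordinates contributes $\log n+\log h$. Feeding this effective log-cardinality $4dT\log2+\log n+\log h$, together with the radius $R$, into a Massart-type finite-class inequality (of the form $\mathfrak{R}_S(\mathcal{A})\le \frac{\max_{a\in\mathcal{A}}\|a\|_2}{m}\sqrt{2\log|\mathcal{A}|}$, here with $\max_a\|a\|_2\le\sqrt m\,R$ since each output is bounded by $R$) yields $\mathfrak{R}_S(\mathcal{F}_{d,T})\le R\sqrt{2(4dT\log2+\log n+\log h)/m}$, which is exactly \eqref{deepreweighted-RNNBoundEqTime}.

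The main obstacle is this last step. Unlike a feed-forward network with fixed activation breakpoints, the breakpoints of the proximal activation at layer $l$, time $t$ depend on $\bhbar=\bG\bh^{(d)}_{t-1}$, i.e.\ on the output of the previous time step and thus on all earlier weights and inputs; the affine regions are therefore cut out by nested, weight- and data-dependent inequalities rather than by fixed hyperplanes, and one must still argue that their number on $S$ is controlled by $2^{4dT}$ up to the coordinate union bounds. A secondary technical point is that $\varPhi$ is neither positively homogeneous nor centred --- its middle branch returns $\bhbar$ and it carries additive offsets $\tfrac{\lambda_1}{c}\bg_l,\tfrac{\lambda_2}{c}\bg_l$ --- so the norm recursion must be arranged so that these offsets are absorbed (or dominated) rather than propagated; once phrased through $\|\cdot\|_\infty$ and the non-expansiveness of $\varPhi$ this becomes routine, but it has to be handled with care.
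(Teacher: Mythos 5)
Your reduction of \eqref{GE-reweighted-RNN-TheoremEq} to the Rademacher bound via Theorem~\ref{GETheorem} and the contraction lemma is the same as the paper's, and the radius $R=\sqrt{(\sum_l\beta_l\varLambda_l)^2(\frac{\varLambda_0^T-1}{\varLambda_0-1})^2B_{\bx}^2+\varLambda_0^{2T}\|\bh_0\|_\infty^2}$ is the right quantity. The genuine gap is the step that actually produces the factor $\sqrt{2(4dT\log 2+\log n+\log h)/m}$: counting at most $2^{4dT}$ affine regimes of the piecewise-linear proximal activations and invoking a Massart-type finite-class bound does not work. The class $\mathcal{F}_{d,T}$ is parameterized by continuously varying weights, and fixing an activation pattern on $S$ does not make the set of output vectors finite --- within one regime the outputs still sweep a continuum as $\bW_l,\bU_l$ vary --- so Massart cannot be applied with $\log|\mathcal{A}|=4dT\log 2+\log n+\log h$; a covering/growth argument in that spirit would bring in the number of parameters, and even the regime count itself is off, since each of the $h$ coordinates, $m$ samples, $d$ layers and $T$ steps independently selects among the $5$ pieces (moreover the breakpoints are weight- and data-dependent, as you note). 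In the paper these factors have a completely different origin: the $4dT\log 2$ comes from an exponential-moment ``peeling'' over layers and time steps (a factor $4$ per layer per step from splitting $\exp(|A|+|B|)$ into four signed exponentials), the $\log n+\log h$ from replacing a coordinate-max by a coordinate-sum inside the expectation of exponentials, after which H\"older's inequality with the $\|\cdot\|_{1,\infty}$ constraints, the Rademacher MGF bound $\tfrac{e^x+e^{-x}}{2}\le e^{x^2/2}$, and optimization over the free parameter $\lambda$ give exactly \eqref{deepreweighted-RNNBoundEqTime}.

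Your first ingredient is also not as ``routine'' as claimed. The forward-pass recursion $\|\bh_t^{(d)}\|_\infty\le\varLambda_0\|\bh_{t-1}^{(d)}\|_\infty+(\sum_l\beta_l\varLambda_l)B_{\bx}$ requires $|\varPhi(u)|\le|u|$, but the operator in \eqref{reweighted-l1-proximalOperatorElementCompute1_positive}--\eqref{reweighted-l1-proximalOperatorElementCompute1_negative} is $1$-Lipschitz yet not centred: its middle branch outputs $\hbar=(\bG\bh_{t-1}^{(d)})$-components and $\varPhi(0)\neq 0$ when $\lambda_2>\lambda_1$, so an honest sup-norm bound on the outputs acquires additive terms in $\lambda_1,\lambda_2,\bg_l$ and $\bG$ that do not appear in the theorem. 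The paper never needs such a forward bound: $R$ emerges inside the peeling itself, through H\"older duality against the Rademacher sums $\sum_i\epsilon_i\mathrm{x}_{t,i,j}$ and $\sum_i\epsilon_i\mathrm{h}_{0,j}$ (controlled by $\|\bX_t\|_{2,\infty}\le\sqrt{m}B_{\bx}$ and $\|\bh_0\|_\infty$), while the contraction step uses only $1$-Lipschitzness of $\varPhi$, so the offsets never enter. As it stands, both pillars of your argument would have to be replaced by (essentially) the paper's peeling scheme to obtain \eqref{deepreweighted-RNNBoundEqTime}.
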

\begin{proof}
	The proof is given in Appendix~\ref{deepreweighted-RNNTheoremProofTime}.
\end{proof}
The generalization error in~\eqref{GE-reweighted-RNN-TheoremEq} is bounded by the Rademacher complexity, which depends on the training set $S$. If the Rademacher complexity is small, the network can be learned with a small generalization error. The bound in~\eqref{deepreweighted-RNNBoundEqTime} is in the order of the square root of the network depth $d$ multiplied by the number of time steps $T$. The bound depends on the logarithm of the number of measurements $n$ and the number of hidden units $h$. It is worth mentioning that the second square root in \eqref{deepreweighted-RNNBoundEqTime} only depends on the norm constraints and the input training data, and it is independent of the network depth $d$ and the number of time steps $T$ under the appropriate norm constraints.

To compare our model with $\ell_1$-$\ell_1$-RNN~\cite{LeArXiv19} and Sista-RNN~\cite{WisdomICASSP17}, we derive bounds on their Rademacher complexities for a time step $t$. The definitions of a functional class $\mathcal{F}_{d,t}$ for the $t^{th}$ time step of reweighted-RNN, $\ell_1$-$\ell_1$-RNN, and Sista-RNN are given in Appendix \ref{deepUnfoldRNNs}. Let $\bH_{t-1}\in \mathbb{R}^{h\times m}$ denote a matrix with columns the vectors of the previous hidden state $\{\bh_{t-1,i}\}_{i=1}^m$, and $\|\bH_{t-1}\|_{2,\infty}=\sqrt{\max\limits_{k\in \{1,\dots,h\}}\sum_{i=1}^m\mathrm{h}_{t-1,i,k}^2}\leq \sqrt{m}B_{\bh_{t-1}}$. 
\begin{corollary}\label{deepreweighted-RNNBound}
	The empirical Rademacher complexity of $\mathcal{F}_{d,t}$ for reweighted-RNN is bounded as:
	\begin{align}\label{deepreweighted-RNNBoundEq}
	\mathfrak{R}_S(\mathcal{F}_{d,t})\leq&
	\sqrt{\frac{2(4d\log 2+\log n + \log h)}{m}}
	\nonumber
	\\
	~~~~~&\cdot\sqrt{\Big(\sum\limits_{l=1}^{d}\beta_l\varLambda_l\Big)^2 B_{\bx}^2+\varLambda_{0}^2 B_{\bh_{t-1}}^2},
	\end{align}
	\vspace{-0pt}
	with $m$ the number of training samples and $\varLambda_l$ given by $\varLambda_d=1$, $\varLambda_l=\prod\limits_{k=l+1}^{d}\alpha_k$ with $~0\leq l\leq d-1$.
\end{corollary}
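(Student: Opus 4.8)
The plan is to recognise Corollary~\ref{deepreweighted-RNNBound} as the single-time-step core of Theorem~\ref{GE-reweighted-RNN-Theorem}. Viewing $\mathcal{F}_{d,t}$ as the class of maps $(\bh_{t-1}^{(d)},\bx_t)\mapsto\bh_t^{(d)}$ realised by the $d$ unrolled layers in \eqref{reweighted-l1-l1-RNN}, setting $T=1$ in \eqref{deepreweighted-RNNBoundEqTime} already reproduces \eqref{deepreweighted-RNNBoundEq}: the geometric factor collapses, $\tfrac{\varLambda_0^{T}-1}{\varLambda_0-1}=1$ and $\varLambda_0^{2T}=\varLambda_0^2$, and the role previously played by the initial state $\bh_0$ is now played by the previous hidden state, whose matrix $\bH_{t-1}$ obeys $\|\bH_{t-1}\|_{2,\infty}\le\sqrt m\,B_{\bh_{t-1}}$. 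So it suffices to carry out the depth recursion for one block; stating it as a separate corollary is convenient precisely so that this per-step quantity can be compared head to head with the corresponding bounds for $\ell_1$-$\ell_1$-RNN and Sista-RNN in the sequel.

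The first ingredient I would establish is that the activation, the proximal operator $\varPhi_{\frac{\lambda_1}{c}\bg_l,\frac{\lambda_2}{c}\bg_l,\bhbar}$, acts coordinatewise, is $1$-Lipschitz (every branch in \eqref{reweighted-l1-proximalOperatorElementCompute1_positive}--\eqref{reweighted-l1-proximalOperatorElementCompute1_negative} has slope $0$ or $1$; equivalently it is the proximal map of a convex function and hence firmly non-expansive), and is piecewise affine with a bounded number of branches per coordinate. I would then unroll \eqref{reweighted-l1-l1-RNN} over $l=1,\dots,d$, propagating $\ell_\infty$ estimates through $\|\bW_l\bv\|_\infty\le\|\bW_l\|_{1,\infty}\|\bv\|_\infty\le\alpha_l\|\bv\|_\infty$ and $\|\bU_l\bx_t\|_\infty\le\beta_l\|\bx_t\|_\infty$ together with the non-expansiveness of $\varPhi$. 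Since $\bx_t$ is re-injected at every layer while $\bh_{t-1}^{(d)}$ enters only through $\bW_1$, the contribution of $\bx_t$ at layer $l$ to $\bh_t^{(d)}$ is amplified by at most $\beta_l\prod_{k=l+1}^d\alpha_k=\beta_l\varLambda_l$ and that of $\bh_{t-1}^{(d)}$ by at most $\prod_{k=1}^d\alpha_k=\varLambda_0$. Summing the input paths and substituting $\|\bX_t\|_{2,\infty}\le\sqrt m\,B_{\bx}$ and $\|\bH_{t-1}\|_{2,\infty}\le\sqrt m\,B_{\bh_{t-1}}$ yields the effective radius $\sqrt{\big(\sum_{l=1}^d\beta_l\varLambda_l\big)^2 B_{\bx}^2+\varLambda_0^2 B_{\bh_{t-1}}^2}$ that forms the second square root of \eqref{deepreweighted-RNNBoundEq}.

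It then remains to convert this Lipschitz/growth estimate into a bound on $\mathfrak{R}_S(\mathcal{F}_{d,t})$, which I would do with a Massart-type finite-class argument, $\mathfrak{R}_S(A)\le\max_{a\in A}\|a\|_2\cdot\sqrt{2\log|A|}/m$. Conditioned on the branch pattern selected by the $d$ piecewise-affine activations, the output is an affine function of the weight matrices, so after enumerating these patterns (yielding the $4d\log 2$ term, at most $2^4$ branches per layer), the input coordinate attaining the $\|\bU_l\|_{1,\infty}$ extremum (yielding $\log n$) and the output coordinate (yielding $\log h$), the class collapses to a finite one whose elements have Euclidean norm at most $\sqrt m$ times the effective radius above; Massart's $\sqrt{2/m}$ normalisation then delivers \eqref{deepreweighted-RNNBoundEq}. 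The step I expect to be the main obstacle is exactly this combinatorial bookkeeping: one must organise the union bound so that the depth $d$ enters the metric entropy only additively (as $4d\log 2$) rather than multiplicatively, while keeping the amplification factors $\varLambda_l$ sharp, and one must check that the data-dependent shift $\bG\bh_{t-1}^{(d)}$ inside $\varPhi$ does not enlarge the complexity — being common to all $f\in\mathcal{F}_{d,t}$, it cancels in the symmetrisation and is already absorbed into the $\bh_{t-1}$ term.
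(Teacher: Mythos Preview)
Your first paragraph is exactly the paper's proof: the corollary is stated and proved in one sentence as the single-time-step specialisation of Theorem~\ref{GE-reweighted-RNN-Theorem}, with $T=1$ collapsing the geometric factor and the previous hidden state $\bh_{t-1}$ (bounded via $\|\bH_{t-1}\|_{2,\infty}\le\sqrt m\,B_{\bh_{t-1}}$) playing the role of $\bh_0$. That is all the paper does here, and you have it.

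Where your elaboration departs from the paper is in the third paragraph, and there it has a gap. The paper does \emph{not} obtain the $4d\log 2$ term by enumerating branch patterns of the piecewise-affine activation; in the proof of Theorem~\ref{GE-reweighted-RNN-Theorem} the activation is handled purely through the contraction lemma (1-Lipschitz), after which one works with the pre-activation. The factor $4^d$ arises instead from the crude inequality $\exp(|a|+|b|)\le \exp(a+b)+\exp(a-b)+\exp(-a+b)+\exp(-a-b)$, applied once per layer when peeling the $\ell_\infty$ norms off via H\"older; the $\log n$ and $\log h$ then come from replacing a maximum over coordinates by a sum inside the MGF. Your ``finite class'' reduction does not actually yield a finite class: even after fixing the branch pattern and the extremal coordinates, the weights $\bW_l,\bU_l$ remain continuous, so Massart's lemma is not directly applicable. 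The paper's route is the standard $a\le\frac{1}{\lambda}\log\exp(\lambda a)$ plus Jensen, peel layers recursively via H\"older and the sign-splitting above, then optimise over $\lambda$. Your Lipschitz propagation in the second paragraph is the right intuition for why the effective radius is $\sqrt{(\sum_l\beta_l\varLambda_l)^2B_{\bx}^2+\varLambda_0^2 B_{\bh_{t-1}}^2}$, but the conversion to a Rademacher bound should follow the MGF argument rather than a branch-counting one.
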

\begin{proof}
	The proof is a special case of Theorem~\ref{GE-reweighted-RNN-Theorem} for time step $t$.
\end{proof}	

Following the proof of Theorem~\ref{GE-reweighted-RNN-Theorem}, we can easily obtain the following Rademacher complexities for the $\ell_1$-$\ell_1$-RNN and Sista-RNN models. 
\begin{corollary}\label{deepl1-l1RNNBound}
	The empirical Rademacher complexity of $\mathcal{F}_{d,t}$ for $\ell_1$-$\ell_1$-RNN is bounded as:
	\begin{align}\label{deepl1-l1RNNBoundEq}
	\mathfrak{R}_S(\mathcal{F}_{d,t})\leq&
	\sqrt{\frac{2(4d\log 2+\log n + \log h)}{m}}
	\nonumber
	\\
	~~~~~&\cdot\sqrt{\beta_1^2\Big(\frac{\alpha_2^d-1}{\alpha_2-1}\Big)^2 B_{\bx}^2+\alpha_1^2\alpha_2^{2(d-1)} B_{\bh_{t-1}}^2}.
	\end{align}
\end{corollary}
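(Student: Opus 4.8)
The plan is to obtain Corollary~\ref{deepl1-l1RNNBound} as a direct specialization of Corollary~\ref{deepreweighted-RNNBound} (equivalently, of Theorem~\ref{GE-reweighted-RNN-Theorem} restricted to a single time step~$t$), by exploiting the fact that the $\ell_1$-$\ell_1$-RNN is exactly the weight-tied instance of reweighted-RNN. Comparing \eqref{reweighted-l1-l1-RNN} with \eqref{l1-l1-RNN}, the $\ell_1$-$\ell_1$-RNN is recovered from reweighted-RNN by setting $\bZ_l=\bI$ and $\bg_l=\bI$ for all $l$, which forces $\bW_l=\bW_2$ and $\bU_l=\bU_1$ for every $l>1$ while leaving distinct first-layer weights $\bW_1,\bU_1$. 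Hence the per-layer norm constraints $\|\bW_l\|_{1,\infty}\leq\alpha_l$, $\|\bU_l\|_{1,\infty}\leq\beta_l$ collapse to $\|\bW_1\|_{1,\infty}\leq\alpha_1$, $\|\bW_l\|_{1,\infty}\leq\alpha_2$ for $l>1$, and $\|\bU_l\|_{1,\infty}\leq\beta_1$ for all $l$; equivalently, in the notation of Corollary~\ref{deepreweighted-RNNBound} one substitutes $\alpha_l\mapsto\alpha_2$ for $2\leq l\leq d$ and $\beta_l\mapsto\beta_1$ for $1\leq l\leq d$.

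First I would re-run the peeling argument proving Theorem~\ref{GE-reweighted-RNN-Theorem} at time step~$t$: the proximal activation $\varPhi_{\frac{\lambda_1}{c}\bg_l,\frac{\lambda_2}{c}\bg_l,\bhbar}$ is $1$-Lipschitz, so the contraction step for the Rademacher complexity is unchanged, and the weight norms enter only through the products $\varLambda_l=\prod_{k=l+1}^{d}\alpha_k$ amplifying the feed-in term $\bU_l\bx_t$ at layer~$l$ (controlled via $\|\bX_t\|_{2,\infty}\leq\sqrt{m}\,B_{\bx}$) and through $\varLambda_0=\prod_{k=1}^{d}\alpha_k$ amplifying the recurrent input $\bh_{t-1}^{(d)}$ (controlled via $\|\bH_{t-1}\|_{2,\infty}\leq\sqrt{m}\,B_{\bh_{t-1}}$). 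With the tied values this yields $\varLambda_d=1$, $\varLambda_l=\alpha_2^{d-l}$ for $1\leq l\leq d-1$, and $\varLambda_0=\alpha_1\alpha_2^{d-1}$. It then remains to evaluate the two summands under the second square root of \eqref{deepreweighted-RNNBoundEq}: the geometric series gives $\sum_{l=1}^{d}\beta_l\varLambda_l=\beta_1\sum_{j=0}^{d-1}\alpha_2^{j}=\beta_1(\alpha_2^{d}-1)/(\alpha_2-1)$, and $\varLambda_0^{2}B_{\bh_{t-1}}^{2}=\alpha_1^{2}\alpha_2^{2(d-1)}B_{\bh_{t-1}}^{2}$, while the prefactor $\sqrt{2(4d\log2+\log n+\log h)/m}$ is untouched by the tying; substituting into \eqref{deepreweighted-RNNBoundEq} reproduces \eqref{deepl1-l1RNNBoundEq}.

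The step I expect to be the main obstacle is not the algebra (the geometric sum is routine) but verifying that the peeling/contraction argument of Theorem~\ref{GE-reweighted-RNN-Theorem}, written for independent per-layer weights, remains valid under weight sharing---in particular that the entropy/covering count behind the $4d\log 2$ term and the layer-wise contraction both still apply when $\bU_1$ and $\bW_2$ recur across layers. The clean way to settle this is to note that the $\ell_1$-$\ell_1$-RNN functional class is contained in the reweighted-RNN functional class subject to the above constraints, so its Rademacher complexity can only be smaller, and to check that every inequality in the original proof is monotone in the per-layer norm bounds; the stated substitution is then legitimate, and the computation of the previous paragraph completes the proof.
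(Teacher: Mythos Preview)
Your proposal is correct and matches the paper's own treatment: the paper simply states that the bound is obtained by ``following the proof of Theorem~\ref{GE-reweighted-RNN-Theorem}'' and gives no separate proof, so your specialization of Corollary~\ref{deepreweighted-RNNBound} via the substitutions $\alpha_l\mapsto\alpha_2$ for $l\ge2$ and $\beta_l\mapsto\beta_1$ for all $l$, together with the geometric-series evaluation of $\sum_{l=1}^d\beta_l\varLambda_l$ and $\varLambda_0=\alpha_1\alpha_2^{d-1}$, is exactly the intended route. Your containment observation (the tied-weight class sits inside the untied class, so its Rademacher complexity is no larger) cleanly disposes of the weight-sharing concern you flagged and is the right way to justify reusing the $4d\log 2$ prefactor without rerunning the peeling argument.
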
	
\begin{corollary}\label{deepl1-l2RNNBound}
	The empirical Rademacher complexity of $\mathcal{F}_{d,t}$ for Sista-RNN is bounded as:
	\begin{align}\label{deepl1-l2RNNBoundEq}
	&\mathfrak{R}_S(\mathcal{F}_{d,t})\leq
	\sqrt{\frac{2(4d\log 2+\log n + \log h)}{m}}\nonumber\\
	&\cdot\sqrt{\beta_1^2\Big(\frac{\alpha_2^d-1}{\alpha_2-1}\Big)^2 B_{\bx}^2+\Bigg(\alpha_1\alpha_2^{(d-1)} +\beta_2\Big(\frac{\alpha_2^{d-1}-1}{\alpha_2-1}\Big) \Bigg)^2 B_{\bh_{t-1}}^2}.
	\end{align}
\end{corollary}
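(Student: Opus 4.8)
The plan is to re-run, step for step, the argument behind Theorem~\ref{GE-reweighted-RNN-Theorem} (Appendix~\ref{deepreweighted-RNNTheoremProofTime}), but for the single time step $t$ and with the Sista-RNN recursion of the class $\mathcal{F}_{d,t}$ (Appendix~\ref{deepUnfoldRNNs}) in place of the reweighted-RNN recursion; this is exactly the specialization used to get Corollary~\ref{deepreweighted-RNNBound} from Theorem~\ref{GE-reweighted-RNN-Theorem}. First I would write the Sista-RNN hidden layers out explicitly: the first layer has the form $\bh_t^{(1)}=\phi_{\gamma}(\bW_1\bh_{t-1}^{(d)}+\bU_1\bx_t)$, and every higher layer $l>1$ has the form $\bh_t^{(l)}=\phi_{\gamma}(\bW_2\bh_t^{(l-1)}+\bU_1\bx_t+\bU_2\bh_{t-1}^{(d)})$, where $\bU_2$ is the extra weight matrix (with $\|\bU_2\|_{1,\infty}\leq\beta_2$ in the notation of the corollary) carrying the linear contribution that the $\ell_2$ coupling $\tfrac12\|\bD\bh_t-\bF\bD\bh_{t-1}\|_2^2$ of \cite{WisdomICASSP17} adds to the gradient step, and where $\|\bW_1\|_{1,\infty}\leq\alpha_1$, $\|\bW_2\|_{1,\infty}\leq\alpha_2$, $\|\bU_1\|_{1,\infty}\leq\beta_1$; the previous hidden state $\bh_{t-1}^{(d)}$ is treated as a fixed, data-dependent input with $\|\bH_{t-1}\|_{2,\infty}\leq\sqrt{m}B_{\bh_{t-1}}$, exactly as in Corollary~\ref{deepreweighted-RNNBound}.

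The two workhorses carried over from the proof of Theorem~\ref{GE-reweighted-RNN-Theorem} are: (i) the contraction/peeling step, which uses that the soft-thresholding activation $\phi_{\gamma}$ is $1$-Lipschitz componentwise to strip one nonlinearity at a time, passing from $\mathfrak{R}_S$ of $\{\bh_t^{(l)}\}$ to $\mathfrak{R}_S$ of the pre-activation linear map; and (ii) the Massart-type covering/union-bound estimate over the $\ell_{1,\infty}$-constrained rows of the weight matrices and over the $d$ layers, which yields the dimensional factor $\sqrt{2(4d\log 2+\log n+\log h)/m}$ ($4d\log 2$ from the union bound across the $d$ layers, $\log n$ from the measurement dimension, $\log h$ from the hidden dimension). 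Iterating these $d$ times and tracking operator-norm bounds, the term $\bU_1\bx_t$ entering at layer $l$ is multiplied by the $\alpha_2$-bounds of the $d-l$ layers above it, so the $\bx_t$-path accumulates into $\beta_1(1+\alpha_2+\cdots+\alpha_2^{d-1})=\beta_1\frac{\alpha_2^d-1}{\alpha_2-1}$; the $\bh_{t-1}^{(d)}$-path accumulates into $\alpha_1\alpha_2^{d-1}$ from the first-layer term $\bW_1\bh_{t-1}^{(d)}$ plus $\beta_2(1+\alpha_2+\cdots+\alpha_2^{d-2})=\beta_2\frac{\alpha_2^{d-1}-1}{\alpha_2-1}$ from the per-layer terms $\bU_2\bh_{t-1}^{(d)}$ at layers $l=2,\dots,d$. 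Substituting $\|\bX_t\|_{2,\infty}\leq\sqrt{m}B_{\bx}$ and $\|\bH_{t-1}\|_{2,\infty}\leq\sqrt{m}B_{\bh_{t-1}}$ and collecting the two paths under a single square root of the shape $\sqrt{a^2B_{\bx}^2+b^2B_{\bh_{t-1}}^2}$ produces exactly \eqref{deepl1-l2RNNBoundEq}.

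The one genuinely new point relative to Theorem~\ref{GE-reweighted-RNN-Theorem} and Corollary~\ref{deepl1-l1RNNBound}, and the step I expect to require the most care, is that in Sista-RNN the previous hidden state reaches the output along \emph{two} distinct families of paths (once through $\bW_1$ at the first layer, and once through $\bU_2$ at every subsequent layer), so when collapsing the recursion one must add these two geometric contributions \emph{before} forming the coefficient of $B_{\bh_{t-1}}$, arriving at $\alpha_1\alpha_2^{d-1}+\beta_2\frac{\alpha_2^{d-1}-1}{\alpha_2-1}$ rather than leaving the two separate or double-counting them. Everything else—the Lipschitz peeling, the $\ell_{1,\infty}$-covering estimate, and the geometric-series bookkeeping for the $\bx_t$-path—is identical to the reweighted-RNN argument of Theorem~\ref{GE-reweighted-RNN-Theorem} specialized to $T=1$, and in particular the factor $4d\log 2$ is unchanged since it depends only on $d$, on the $1$-Lipschitz piecewise-linear nature of the activation, and on the ambient dimensions $n,h$, not on the specific shape of the proximal map.
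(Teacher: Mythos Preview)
Your proposal is correct and follows essentially the same route as the paper: the paper does not give a separate proof of Corollary~\ref{deepl1-l2RNNBound} but simply states that it is obtained by ``following the proof of Theorem~\ref{GE-reweighted-RNN-Theorem}'' with the Sista-RNN function class of Appendix~\ref{deepUnfoldRNNs}, which is exactly the peeling-plus-MGF argument you outline, and your identification of the extra $\bU_2\bh_{t-1}^{(d)}$ path (yielding the additive $\beta_2\frac{\alpha_2^{d-1}-1}{\alpha_2-1}$ in the $B_{\bh_{t-1}}$ coefficient) is precisely the new bookkeeping required. One small terminological remark: the dimensional factor $\sqrt{2(4d\log 2+\log n+\log h)/m}$ in Appendix~\ref{deepreweighted-RNNTheoremProofTime} arises from the exponential-moment/log-sum-exp technique (Jensen, H\"older, the sign-splitting $e^{|a|}\leq e^{a}+e^{-a}$, and optimization over $\lambda$) rather than a covering argument per se, but the mechanics you describe are the right ones.
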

By contrasting~\eqref{deepreweighted-RNNBoundEq} with \eqref{deepl1-l1RNNBoundEq} and \eqref{deepl1-l2RNNBoundEq}, we see that the complexities of $\ell_1$-$\ell_1$-RNN and Sista-RNN have a polynomial dependence on $\alpha_1$, $\beta_1$ and $\alpha_2$, $\beta_2$ (the norms of first two layers), whereas the complexity of reweighted-RNN has a polynomial dependence on $\alpha_1,\dots,\alpha_d$ and $\beta_1,\dots,\beta_d$ (the norms of all layers). This over-parameterization offers a flexible way to control the generalization error of reweighted-RNN. We derive empirical generalization errors in Fig. \ref{fig:mseDepth} demonstrating that increasing the depth of reweighted-RNN still ensures the low generalization error.

\subsection{Comparison with existing generalization bounds}
Recent works have established generalization bounds for RNN models with a single recurrent layer ($d=1$) using Rademacher complexity (see FastRNN in~\cite{KusupatiNIPS18}) or PAC-Bayes theory (see SpectralRNN in~\cite{ZhangICML18}). We re-state these generalization bounds below and apply Theorem~\ref{GE-reweighted-RNN-Theorem} with $d=1$ to compare with our bound for reweighted-RNN.

\textbf{FastRNN} \cite{KusupatiNIPS18}. The hidden state $\bh_t$ of FastRNN is updated as follows:
\begin{align}\label{FastRNN}
\tilde{\bh}_t&=\phi(\bW\bh_{t-1}+\bU\bx_t)\nonumber\\
\bh_t&=a\tilde{\bh}_t + b\bh_{t-1},
\end{align}
where $0\leq a,b \leq 1$ are trainable parameters parameterized by the sigmoid function. Under the assumption that $a+b=1$, the Rademacher complexity $\mathfrak{R}_S(\mathcal{F}_{T})$ of the class $\mathcal{F}_{T}$ of FastRNN \cite{KusupatiNIPS18}, with $\|\bW\|_F\leq \alpha_F$, $\|\bU\|_F\leq \beta_F$, and $\|\bx_t\|_2\leq B$, is given by
\begin{align}\label{FastRNNRademacherAssumption1}
\mathfrak{R}_S(\mathcal{F}_{T})&\leq
\frac{2a}{\sqrt{m}}B\beta_F\Big(\frac{(1+a(2\alpha_F-1))^{T+1}-1}{a(2\alpha_F-1)}\Big).
\end{align}	
Alternatively, under the additional assumption that $a\leq \frac{1}{2(2\alpha_F-1)T}$, the bound in \cite{KusupatiNIPS18} becomes:
\begin{align}\label{FastRNNRademacherAssumption2}
\mathfrak{R}_S(\mathcal{F}_{T})&\leq
\frac{2a}{\sqrt{m}}B\beta_F\Big(\frac{2a(2\alpha_F-1)(T+1)-1}{(2\alpha_F-1)}\Big).
\end{align}	
%due to following the inequality $(1+x)^t\leq 1+2xt$ for $x(t-1)\leq 1/2$.

\textbf{SpectralRNN} \cite{ZhangICML18}. The hidden state $\bh_t$ and output $\by_t \in \mathbb{R}^{n_{\by}}$ of SpectralRNN are computed as: 
\begin{align}\label{SpectralRNN}
\bh_t=&\phi(\bW\bh_{t-1}+\bU\bx_t)\nonumber\\
\by_t=&\bY \bh_{t}, 
\end{align}
where $\bY\in \mathbb{R}^{n_{\by}\times h}$. The generalization error in~\cite{ZhangICML18} is derived for a classification problem. For any $\delta>0, \gamma>0$, with probability $\geq 1-\delta$ over a training set $S$ of size $m$, the generalization error \cite{ZhangICML18} of SpectralRNN is bounded by
\begin{align}\label{SpectralBound}
&\bigO\Bigg(\sqrt{\frac{\frac{B^2 T^4 \xi \ln(\xi)}{\gamma^2}(\|\bW\|_F^2+\|\bU\|_F^2+\|\bY\|_F^2)\cdot\zeta +\ln \frac{m}{\delta}}{m}}\Bigg)
,
\end{align}	
where $\zeta = \max\{\|\bW\|_2^{2T-2},1\}\max\{\|\bU\|_2^2,1\}\max\{\|\bY\|_2^2,1\}$ and $\xi=\max\{n,n_{\by},h\}$.

\textbf{Reweighted-RNN}. Based on Theorem~\ref{GE-reweighted-RNN-Theorem}, under the assumption that the initial hidden state $\bh_0=\mathbf{0}$, the Rademacher complexity of reweighted-RNN with $d=1$ is bounded as
\begin{align}\label{deepreweighted-RNNBoundEq1LayerTime}
\mathfrak{R}_S(\mathcal{F}_{1,T})&\leq
\sqrt{\frac{4T\log 2+\log n + \log h}{m}}\Big(\sqrt{2}\beta_1\frac{\alpha_1^T-1}{\alpha_1-1} B_{\bx} \Big).
\end{align}
\vspace{-0pt}

We observe that the bound of SpectralRNN in \eqref{SpectralBound} depends on $T^2$, whereas the bound of FastRNN either grows exponentially with $T$ \eqref{FastRNNRademacherAssumption1} or is proportional to $T$ \eqref{FastRNNRademacherAssumption2}. Our bound \eqref{deepreweighted-RNNBoundEq1LayerTime} depends on $\sqrt{T}$, given that the second factor in \eqref{deepreweighted-RNNBoundEq1LayerTime} is only dependent on the norm constraints $\alpha_1,~\beta_1$ and the input training data; meaning that it is tighter than those of SpectralRNN and FastRNN in terms of the number of time steps.

\vspace{-0pt}
\section{Experimental results}	
\vspace{-0pt}

\subsection{Video frame reconstruction from compressive measurements}
\vspace{-0pt}
{\renewcommand{\arraystretch}{1.1}		
	\begin{table*}[th!] 
		\caption{Average PSNR results [dB] on the test set
			with different CS rates.}
		\label{tab:result-rate}
		\centering
		\begin{scriptsize}
			\begin{tabular}{ c| c c c c c c c c c }
				\hline
				\hspace{-4pt}\textbf{CS Rate}\hspace{-4pt}& \textbf{sRNN} & \textbf{LSTM} & \textbf{GRU} & \textbf{IndRNN \cite{LiCVPR18}} & \textbf{FastRNN \cite{KusupatiNIPS18}} & \hspace{-4pt}\textbf{SpectralRNN \cite{ZhangICML18}}\hspace{-4pt} & \textbf{Sista-RNN} & \textbf{$\ell_1$-$\ell_1$-RNN} & \textbf{Ours} \\
				\hline 
				\hspace{-4pt}\textbf{0.1}\hspace{-4pt} & 25.11 & 24.58 & 25.18 & 25.68 & 25.21 &\hspace{-4pt} 25.15 \hspace{-4pt}& 25.16 & 24.68 & \textbf{26.25} \\
				\hline
				\hspace{-4pt}\textbf{0.2}\hspace{-4pt} & 31.14 & 29.46 & 31.19 & 32.90 & 32.05 &\hspace{-4pt} 31.65 \hspace{-4pt}& 31.53 & 30.79 & \textbf{34.19} \\
				\hline
				\hspace{-4pt}\textbf{0.3}\hspace{-4pt} & 35.38 & 32.91 & 36.49 & 37.12 & 36.40 &\hspace{-4pt}36.89\hspace{-4pt}& 36.96 & 37.77 & \textbf{42.39} \\
				\hline
				\hspace{-4pt}\textbf{0.4}\hspace{-4pt} & 38.05 & 34.95 & 39.47 & 40.84 & 39.21 &\hspace{-4pt} 40.22 \hspace{-4pt}& 39.57 & 40.35 & \textbf{46.03} \\
				\hline
				\hspace{-4pt}\textbf{0.5}\hspace{-4pt} & 39.34 & 36.28 & 41.12 & 45.49 & 41.87 &\hspace{-4pt} 41.36 \hspace{-4pt}& 41.56 & 43.35 & \textbf{48.70} \\
				\hline 
			\end{tabular}
		\end{scriptsize}
	\end{table*}	
	\vspace{-0pt}
	{\renewcommand{\arraystretch}{1.2}		
		\begin{table*}[h] 
			\caption{Average PSNR results [dB] on the test set
				with different network widths $h$ (CS rate is 0.2,~$d=3$).}
			\label{tab:result-width}
			\centering
			\begin{scriptsize}
				\begin{tabular}{ c | c c c c c c c c c }
					\hline
					\hspace{-1pt}\textbf{$h$}\hspace{-4pt} & \textbf{sRNN} & \textbf{LSTM} & \textbf{GRU} & \textbf{IndRNN \cite{LiCVPR18}} & \textbf{FastRNN \cite{KusupatiNIPS18}} & \textbf{SpectralRNN \cite{ZhangICML18}} & \textbf{Sista-RNN} & \textbf{$\ell_1$-$\ell_1$-RNN} & \textbf{Ours}\\
					\hline 
					% &$2^7 $&$ 2^8$ & $2^9$&$2^{10}$&$2^{11}$&$2^{12}$\\
					% \hline 
					\hspace{-1pt}$\textbf{2}^\textbf{7}$\hspace{-4pt} & 23.35 & 22.87 & 23.55 & 23.82 & 23.83 & 22.92 & 23.86 & 23.90 & \textbf{28.09} \\
					\hline
					\hspace{-1pt}$\textbf{2}^\textbf{8}$\hspace{-4pt} & 25.81 & 23.88 & 26.67 & 27.10 & 26.71 & 24.46 & 29.64 & 29.55 & \textbf{31.46} \\
					\hline
					\hspace{-1pt}$\textbf{2}^\textbf{9}$\hspace{-4pt} & 28.72 & 26.83 & 30.29 & 32.03 & 29.92 & 30.23 & 31.30 & 30.61 & \textbf{33.61} \\
					\hline 
					\hspace{-1pt}$\textbf{2}^\textbf{10}$\hspace{-4pt} & 31.14 & 29.46 & 31.19 & 32.90 & 32.05 & 31.65 & 31.53 & 30.79 & \textbf{34.19} \\
					\hline
					\hspace{-1pt}$\textbf{2}^\textbf{11}$\hspace{-4pt} & 29.91 & 29.30 & 31.15 & 33.10 & 30.80 & 31.68& 31.82 & 30.45 & \textbf{34.80}\\
					\hline 
					\hspace{-1pt}$\textbf{2}^\textbf{12}$\hspace{-4pt} & 29.71 & 29.08 & 30.93 & 32.47 & 24.26 &29.26 & 31.63 &30.09 & \textbf{34.98}\\
					\hline
				\end{tabular}
			\end{scriptsize}
		\end{table*}	
		{\renewcommand{\arraystretch}{1.1}		
			\begin{table*}[h] 
				\label{results}
				\caption{Average PSNR results [dB] on the test set
					with different network depths $d$ (a CS rate is 0.2,~$h=2^{10}$).}
				\label{tab:result-depth}
				\centering
				\begin{scriptsize}
					\begin{tabular}{ c | c c c c c c c c c }
						\hline
						\hspace{-1pt}\textbf{$d$} \hspace{-4pt}& \textbf{sRNN} & \textbf{LSTM} & \textbf{GRU} & \textbf{IndRNN \cite{LiCVPR18}} & \textbf{FastRNN \cite{KusupatiNIPS18}} & \textbf{SpectralRNN \cite{ZhangICML18}} & \textbf{Sista-RNN} & \textbf{$\ell_1$-$\ell_1$-RNN} & \textbf{Ours}\\
						\hline 
						\hspace{-1pt}\textbf{1} \hspace{-4pt}& 27.52 & 27.76 & 27.61 & \textbf{30.12} & 29.32 & 29.62 & 28.41 & 28.49 & 29.19 \\ 
						\hline 
						\hspace{-1pt}\textbf{2} \hspace{-4pt}& 29.21 & 29.46 & 29.68 & \textbf{32.73} & 30.84 & 31.37 & 30.67 & 30.19 & 32.12 \\
						\hline
						\hspace{-1pt}\textbf{3} \hspace{-4pt}& 31.14 & 22.29 & 31.19 & 32.90 & 32.05 & 31.65 & 31.53 & 30.79 & \textbf{34.19} \\
						\hline
						\hspace{-1pt}\textbf{4} \hspace{-4pt}& 31.64 & 16.50 & 29.26 & 20.65 & 31.07 & 31.10 & 32.56 & 31.80 & \textbf{35.99} \\
						\hline
						\hspace{-1pt}\textbf{5} \hspace{-4pt}& 16.50 & 26.66 & 16.50 & 25.17 & 20.10 & 30.52 & 33.07 & 32.50 & \textbf{36.91} \\
						\hline
						\hspace{-1pt}\textbf{6} \hspace{-4pt}& 22.28 & 16.50 & 16.50 & 20.90 & 19.37 & 29.56 & 31.99& 32.00& \textbf{38.90} \\
						\hline 
					\end{tabular}
				\end{scriptsize}
			\end{table*}		
			\vspace{-0pt}		
We assess the proposed RNN model in the task of video-frame reconstruction from compressive measurements. The performance is measured using the peak signal-to-noise ratio (PSNR) between the reconstructed $\hat{\bs}_t$ and the original frame $\bs_t$. We use the moving MNIST dataset~\cite{SrivastavaICML15}, which contains 10K video sequences of equal length (20 frames per sequence). Similar to the setup in~\cite{LeArXiv19}, the dataset is split into training, validation, and test sets of 8K, 1K, and 1K sequences, respectively. In order to reduce the training time and memory requirements, we downscale the frames from $64\times 64$ to $16\times 16$ pixels using bilinear decimation. After vectorizing, we obtain sequences of $\bs_1,\dots,\bs_T\in \mathbb{R}^{256}$. Per sequence, we obtain measurements $\bx_1,\dots,\bx_T \in \mathbb{R}^{n}$ using a trainable linear sensing matrix $\bA\in \mathbb{R}^{n\times n_0}$, with $T=20$, $n_0 = 256$ and $n < n_0$.

We compare the reconstruction performance of the proposed reweighted-RNN model against deep-unfolding RNN models, namely, $\ell_1$-$\ell_1$-RNN~\cite{LeArXiv19}, Sista-RNN~\cite{WisdomICASSP17}, and stacked-RNN models, that is, sRNN~\cite{ELMAN90}, LSTM~\cite{hochreiter1997long}, GRU~\cite{cho2014learning}, FastRNN~\cite{KusupatiNIPS18}\footnote{\cite{KusupatiNIPS18} also proposed FastGRNN; we found that, in our application scenario, the non-gated variant (the FastRNN) consistently outperformed FastGRNN. As such, we do not include results with the latter.}, IndRNN~\cite{LiCVPR18} and SpectralRNN~\cite{ZhangICML18}. For the vanilla RNN, LSTM and GRU, the native Pytorch cell implementations were used. The unfolding-based methods were implemented in Pytorch, with Sista-RNN and $\ell_1$-$\ell_1$-RNN tested by reproducing the experiments in~\cite{WisdomICASSP17,LeArXiv19}. For FastRNN, IndRNN, and SpectralRNN cells, we use the publically available Tensorflow implementations. While Sista-RNN, $\ell_1$-$\ell_1$-RNN and reweighted-RNN have their own layer-stacking schemes derived from unfolding minimization algorithms, we use the stacking rule in~\cite{pascanu2013construct} [see Fig~\ref{fig-stackedRNN}] to build deep networks for other RNN architectures.

Our default settings are: a compressed sensing (CS) rate of $n/n_0=0.2$, $d=3$ hidden layers\footnote{In our experiments, the 2-layer LSTM network outperforms the 3-layer one (see Table~\ref{tab:result-depth}), the default setting for LSTM is thus using 2 layers.} with $h=2^{10}$ hidden units per layer. In each set of experiments, we vary each of these hyper-parameters while keeping the other two fixed. For the unfolding methods, the overcomplete dictionary $\bD\in \mathbb{R}^{n_0\times h}$ is initialized with the discrete cosine transform (DCT) with varying dictionary sizes of $h=\{2^7,~2^8,~2^9,~2^{10},~2^{11},~2^{12}\}$ (corresponding to a number of hidden neurons in the other methods). For initializing $\lambda_1,\lambda_2$ [see \eqref{seqquentialProblem}, \eqref{reweighted-l1-l1minimization}], we perform a random search in the range of [$10^{-5}$,~3.0] in the validation set. To avoid the problem of exploding gradients, we clip the gradients during backpropagation such that the $\ell_2$-norms are less than or equal to 0.25. We do not apply weight decay regularization as we found it often leads to worse performance, especially since gradient clipping is already used for training stability. We train the networks for 200 epochs using the Adam optimizer with an initial learning rate of 0.0003, and a batch size of 32. During training, if the validation loss does not decrease for 5 epochs, we reduce the learning rate to 0.3 of its current value.

Table \ref{tab:result-rate} summarizes the reconstruction results for different CS rates $n/n_0$. The reweighted-RNN model systematically outperforms the other models, often by a large margin. Table~\ref{tab:result-width} shows similar improvements for various dimensions of hidden units. Table~\ref{tab:result-depth} shows that IndRNN delivers higher reconstruction performance than our model when a small number of hidden layers ($d = 1, 2$) is used. Moreover, when the depth increases, reweighted-RNN surpasses all other models. Our network also has fewer trainable parameters compared to the popular variants of RNN. At the default settings, reweighted-RNN, the stacked vanilla RNN, the stacked LSTM, and the stacked GRU have 4.47M, 5.58M, 21.48M, and 16.18M parameters, respectively.

In our experiments, we use the publically available Tensorflow implementations for FastRNN\footnote{Code available at https://github.com/microsoft/EdgeML}, IndRNN\footnote{Code available at https://github.com/batzner/indrnn}, and SpectralRNN\footnote{Code available at https://github.com/zhangjiong724/spectral-RNN} cells. While Sista-RNN, $\ell_1$-$\ell_1$-RNN and reweighted-RNN have their own layer-stacking schemes derived from unfolding minimization algorithms, we use the stacking rule in~\cite{pascanu2013construct} [see Fig~\ref{fig-stackedRNN}] to build deep networks for other RNN models. 

			\begin{figure}[h]
	\centering
	\subfigure[sRNN \cite{ELMAN90}.]{\label{subfig:rnn}\includegraphics[width=0.24\textwidth]{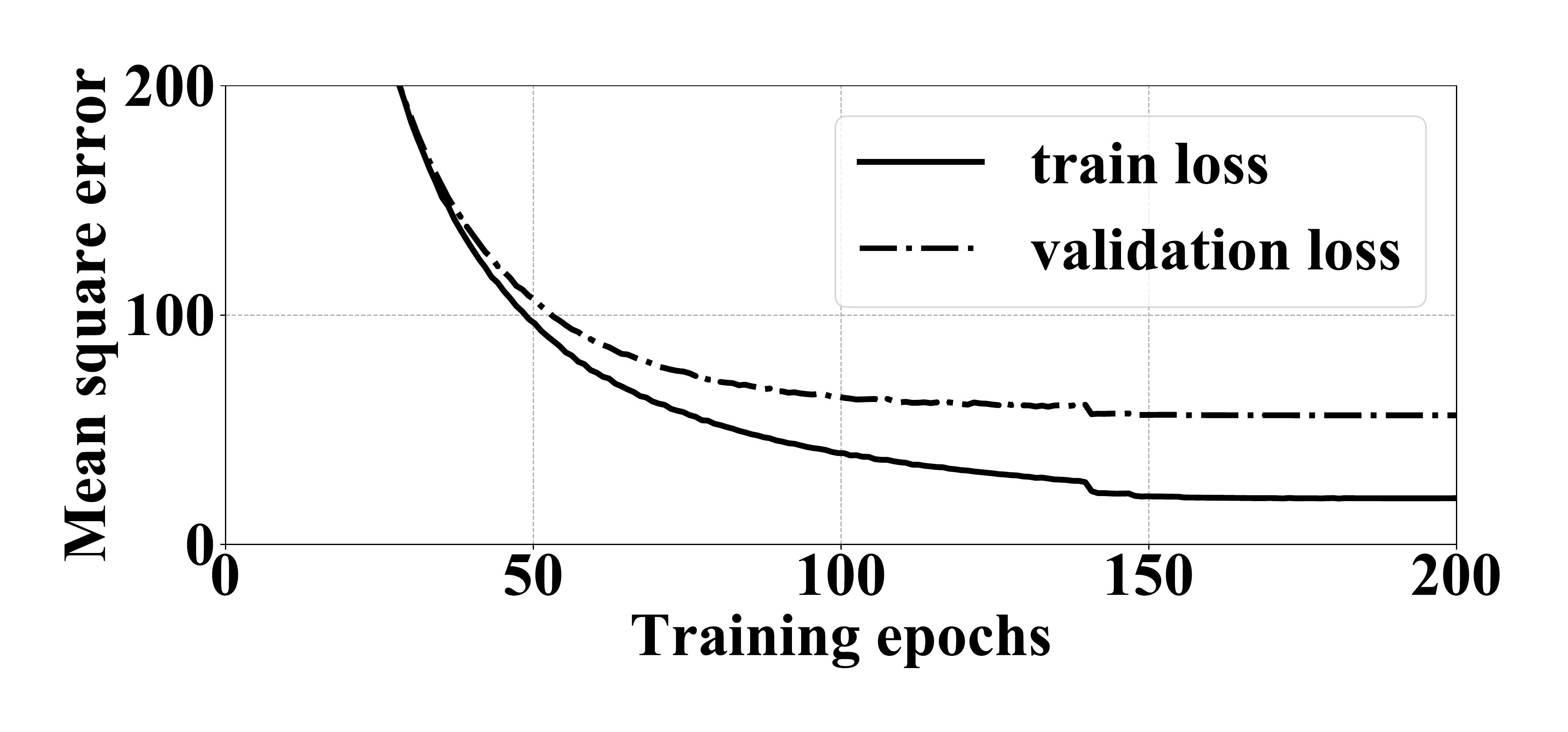}}
	\subfigure[LSTM \cite{hochreiter1997long}.]{\label{subfig:lstm}\includegraphics[width=0.24\textwidth]{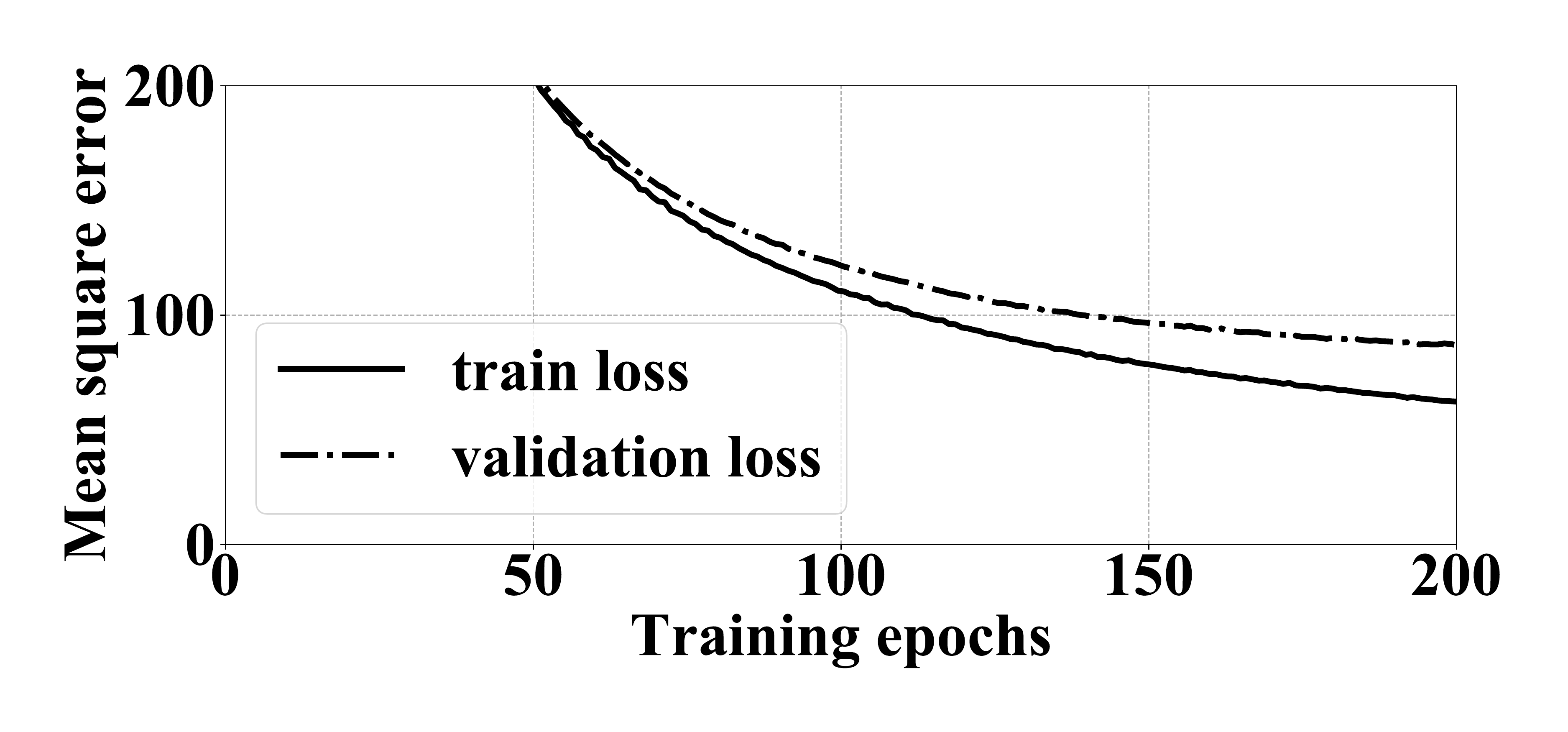}}
	\subfigure[GRU \cite{cho2014learning}.]{\label{subfig:gru}\includegraphics[width=0.24\textwidth]{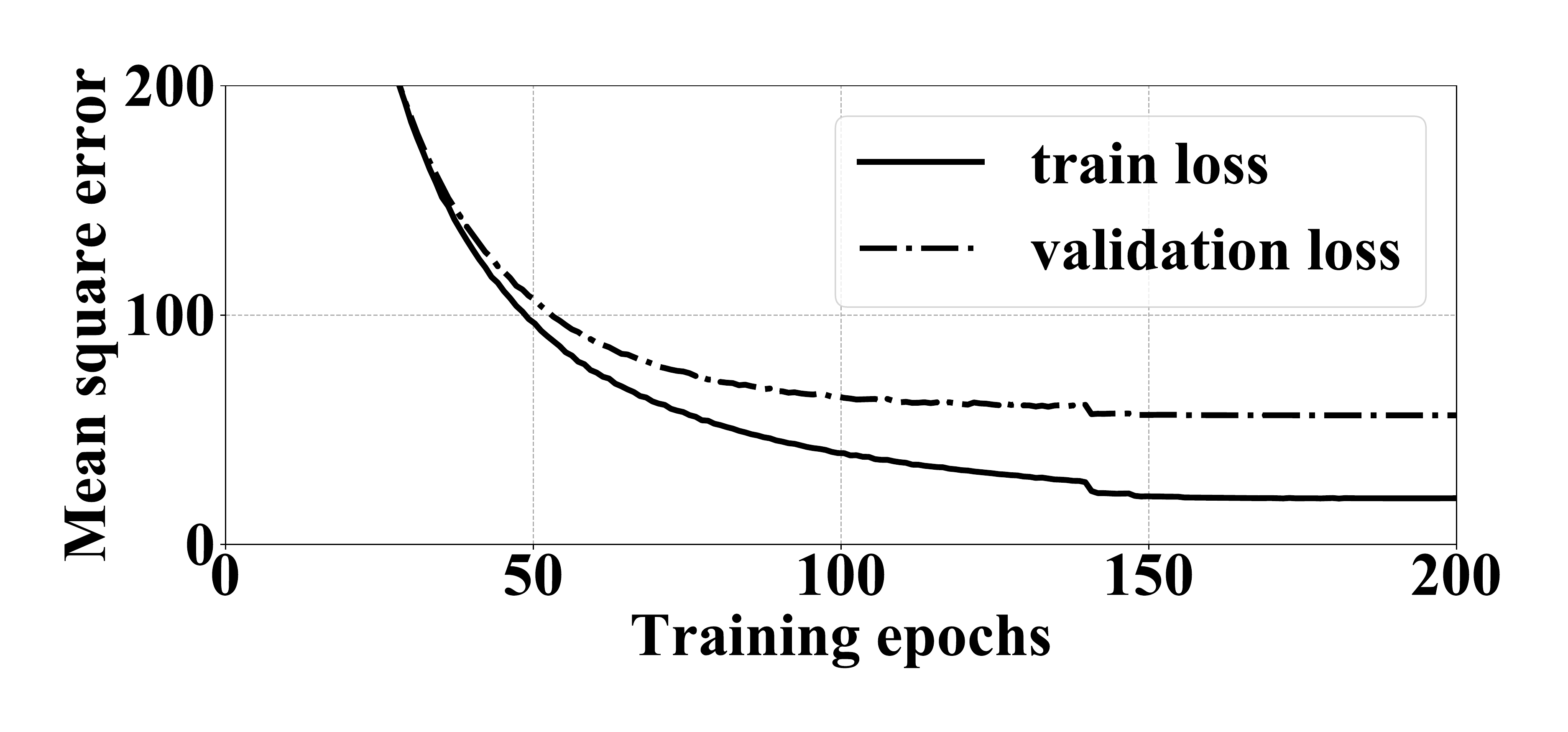}}
	\subfigure[IndRNN \cite{LiCVPR18}.]{\label{subfig:indrnn}\includegraphics[width=0.24\textwidth]{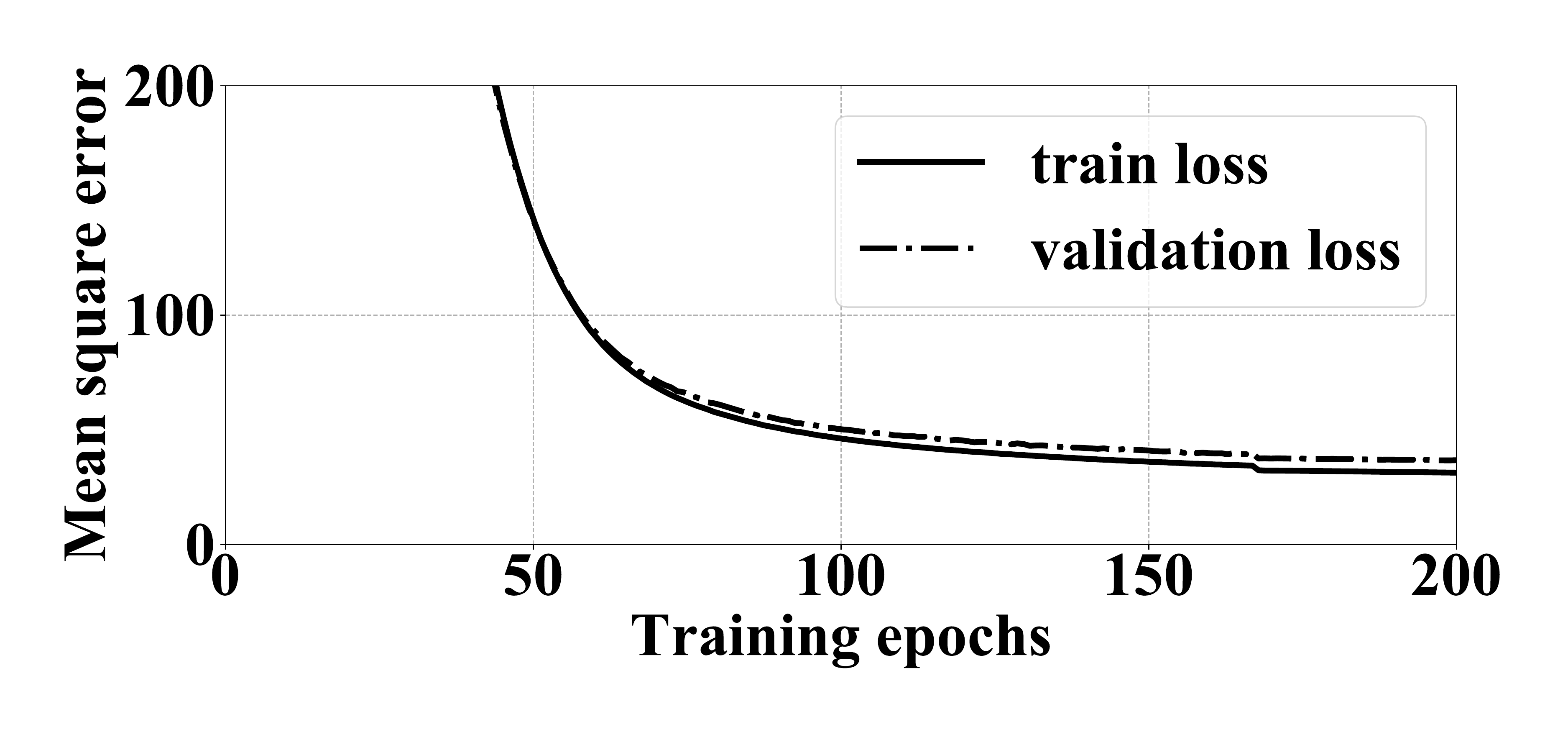}}			
	\subfigure[FastRNN \cite{KusupatiNIPS18}.]{\label{subfig:fastrnn}\includegraphics[width=0.24\textwidth]{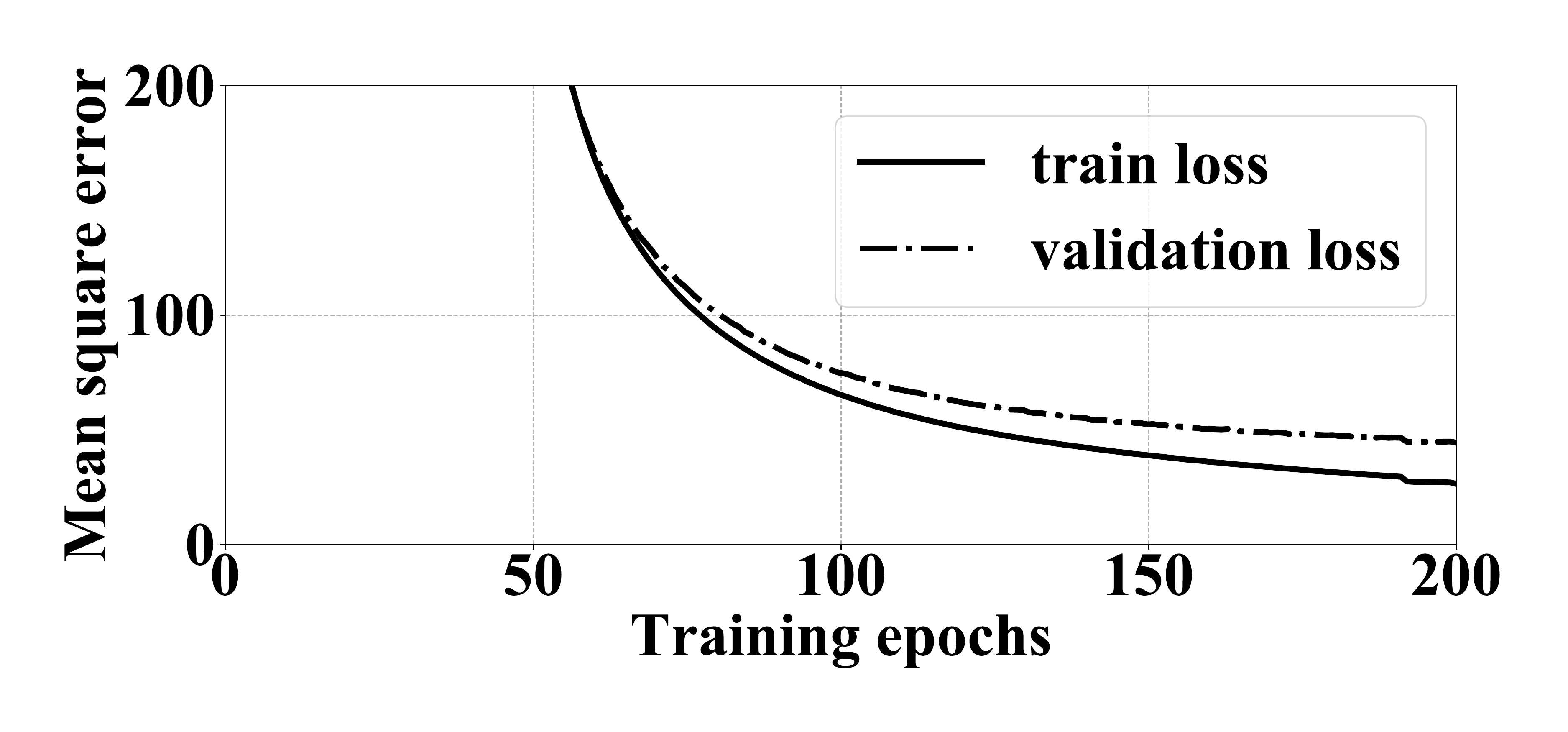}}
	\subfigure[SpectralRNN \cite{ZhangICML18}.]{\label{subfig:spectral}\includegraphics[width=0.24\textwidth]{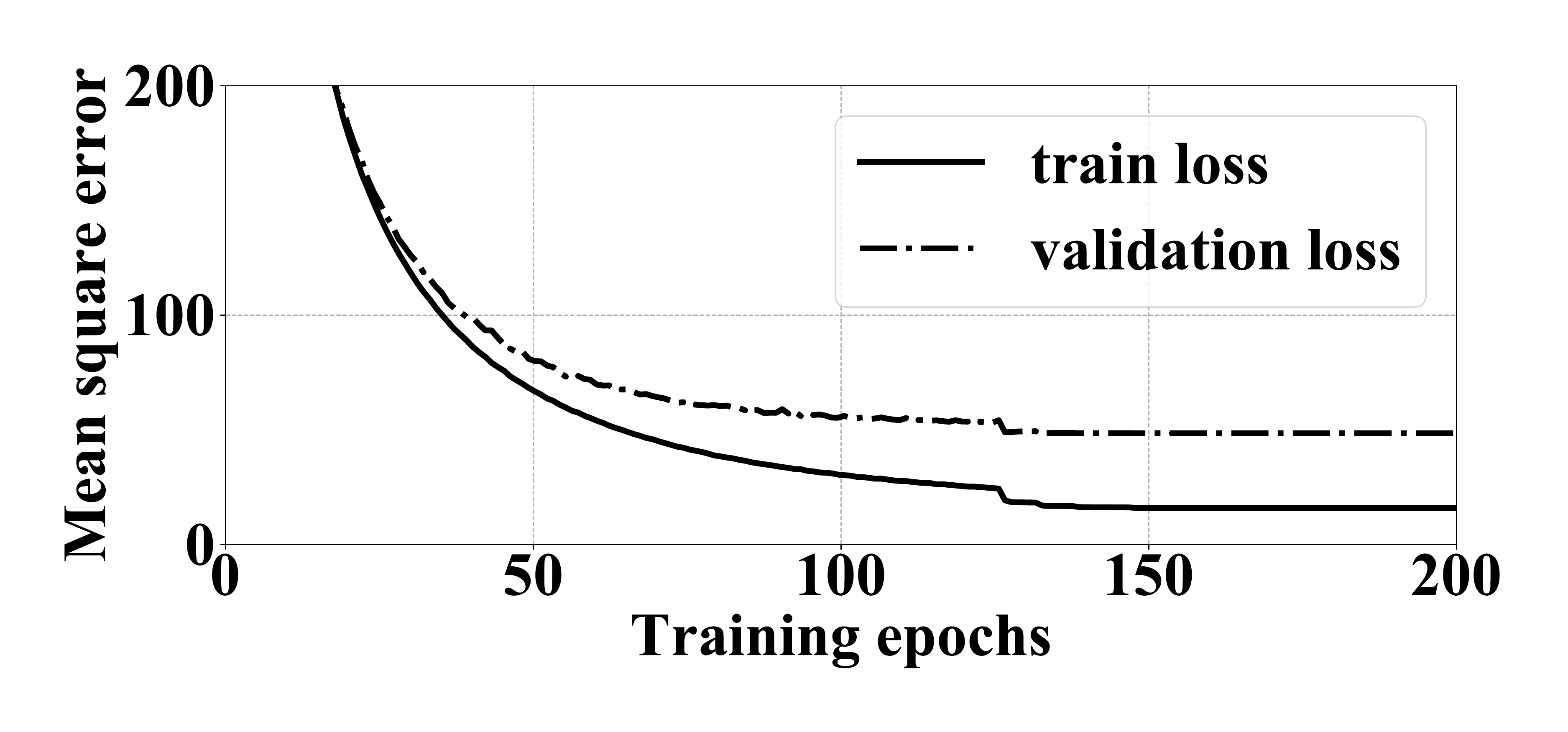}}
	\subfigure[Sista-RNN \cite{WisdomICASSP17}.]{\label{subfig:sista}\includegraphics[width=0.24\textwidth]{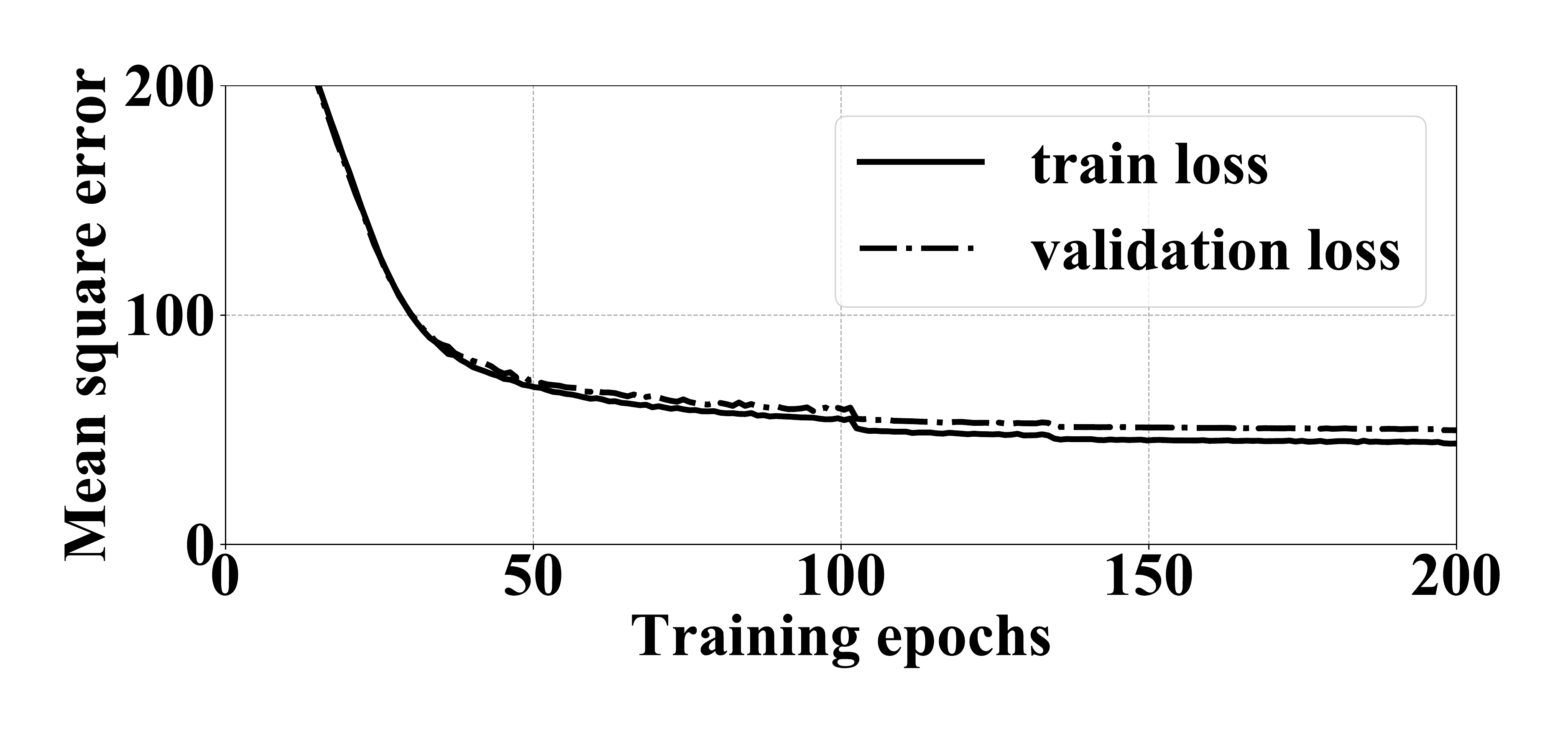}}
	\subfigure[$\ell_1$-$\ell_1$-RNN \cite{LeArXiv19}.]{\label{subfig:l1l1}\includegraphics[width=0.24\textwidth]{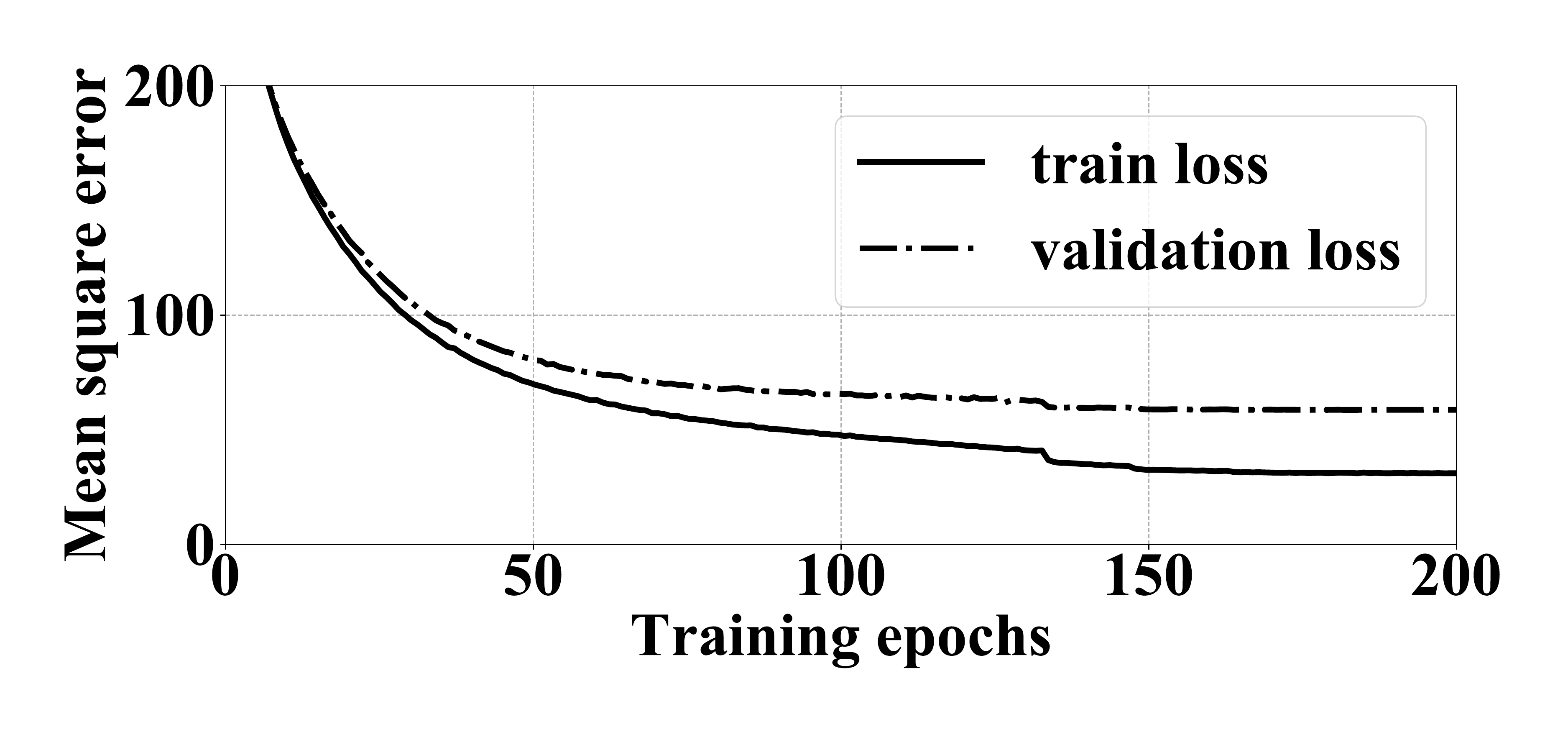}}
	\subfigure[reweighted-RNN.]{\label{subfig:reweighted}\includegraphics[width=0.24\textwidth]{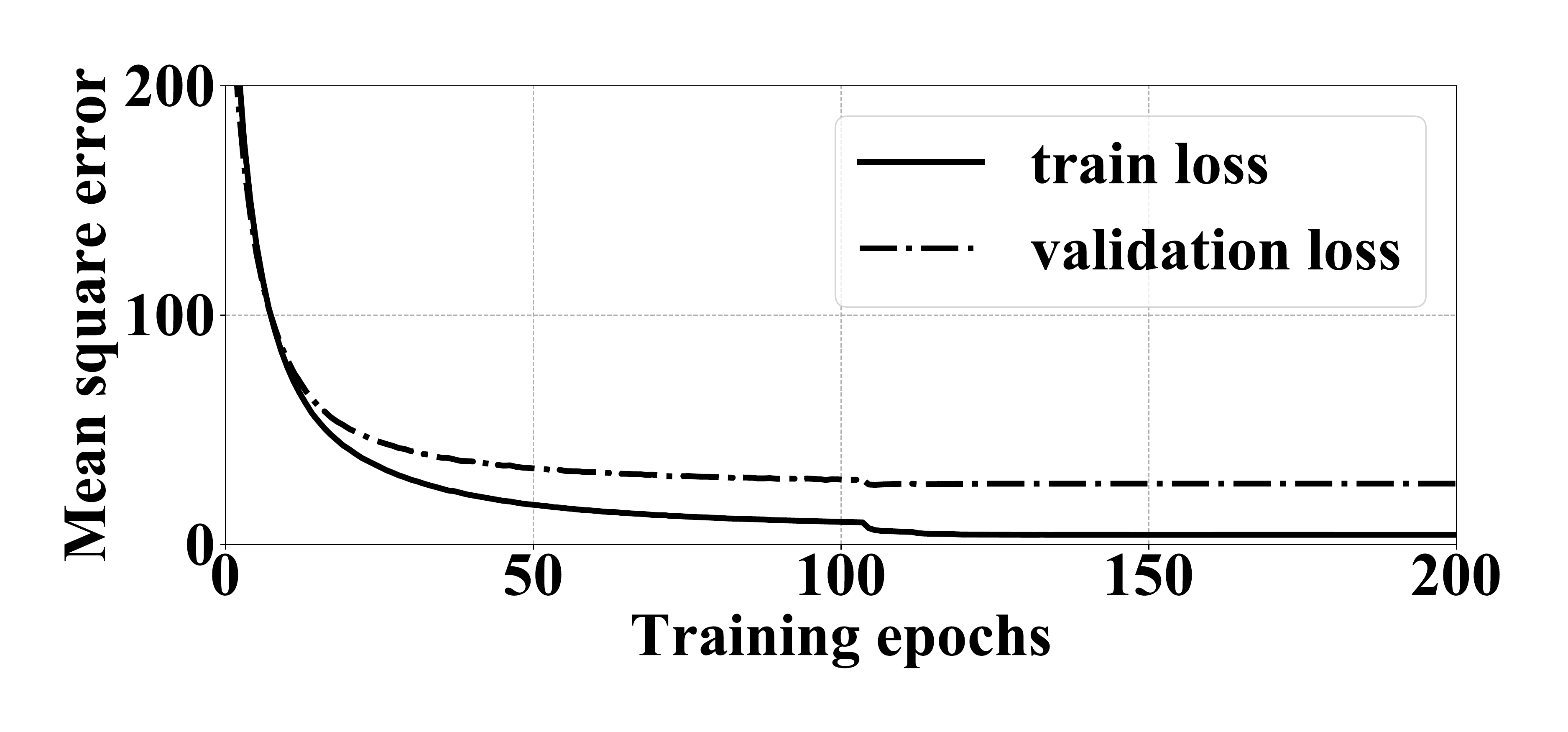}}
	\caption{Average mean square error vs. training epoches on the training and the validation sets for the default setting (a CS rate is $0.2$,~$d=3$,~$h=2^{10}$).}
	\label{fig:mse}
\end{figure} 	
\begin{figure}[h]
	\centering
	\subfigure[CS Rate is 0.1.]{\label{subfig:rnn}\includegraphics[width=0.24\textwidth]{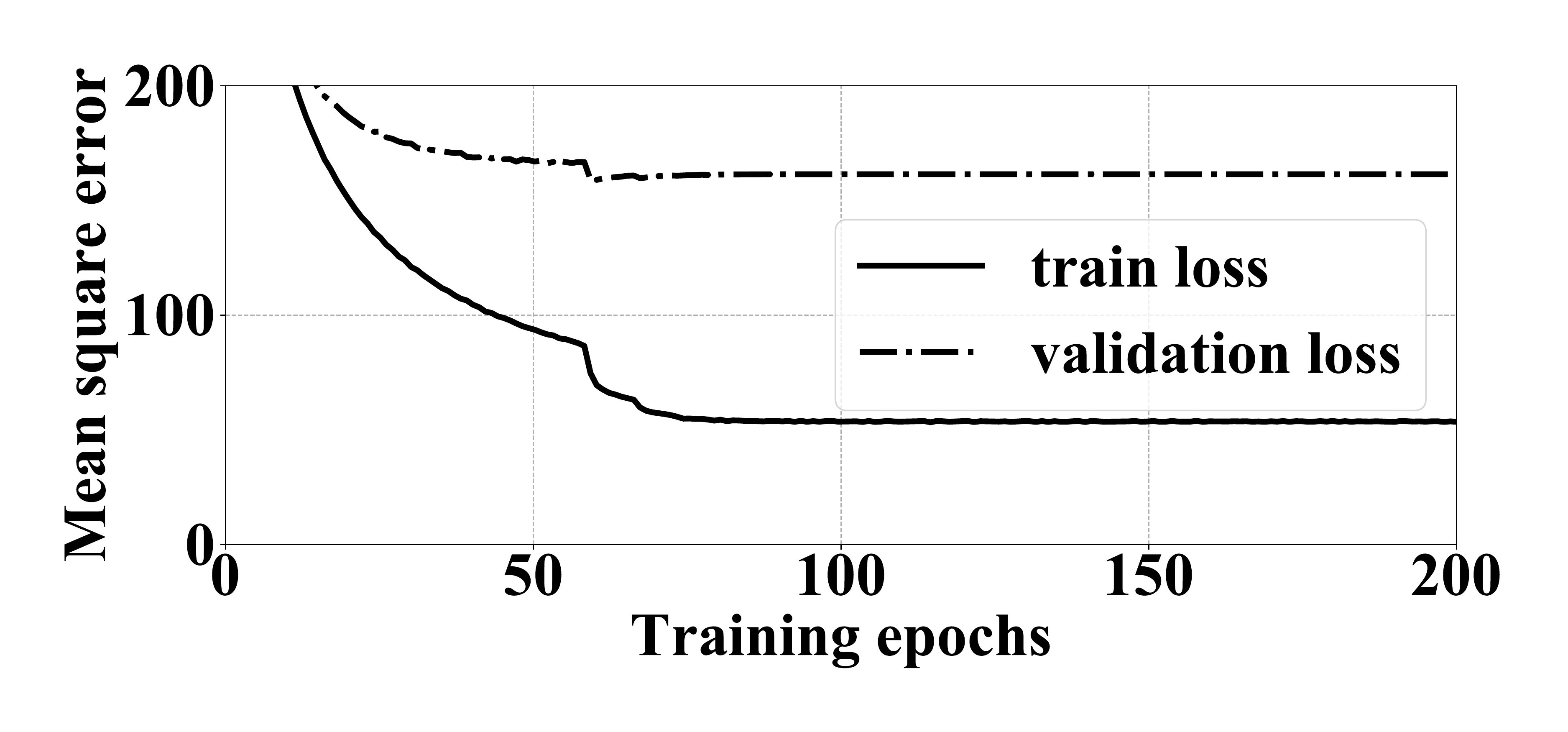}}
	\subfigure[CS Rate is 0.2.]{\label{subfig:reweighted}\includegraphics[width=0.24\textwidth]{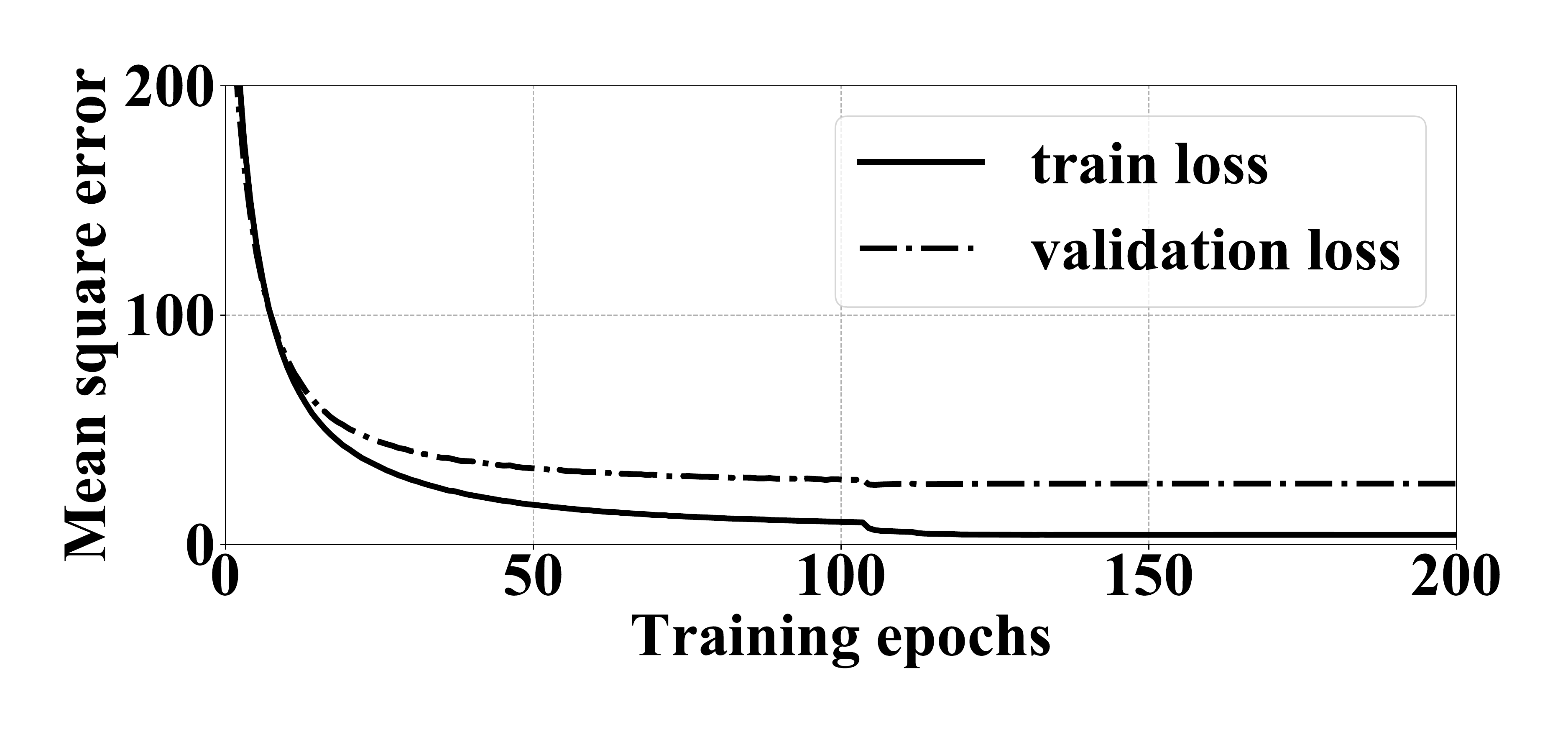}}
	\subfigure[CS Rate is 0.3.]{\label{subfig:rnn}\includegraphics[width=0.24\textwidth]{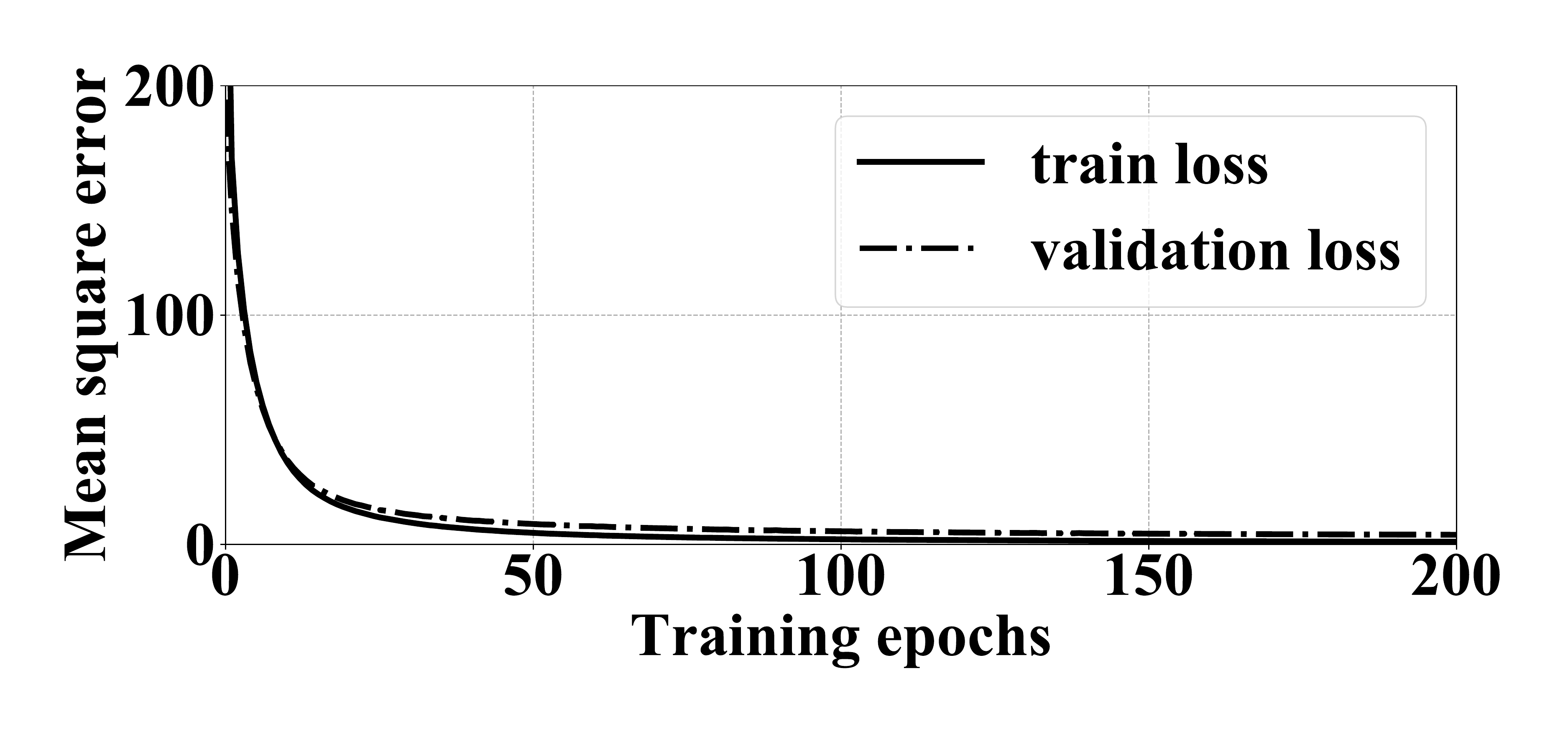}}
	\subfigure[CS Rate is 0.4.]{\label{subfig:rnn}\includegraphics[width=0.24\textwidth]{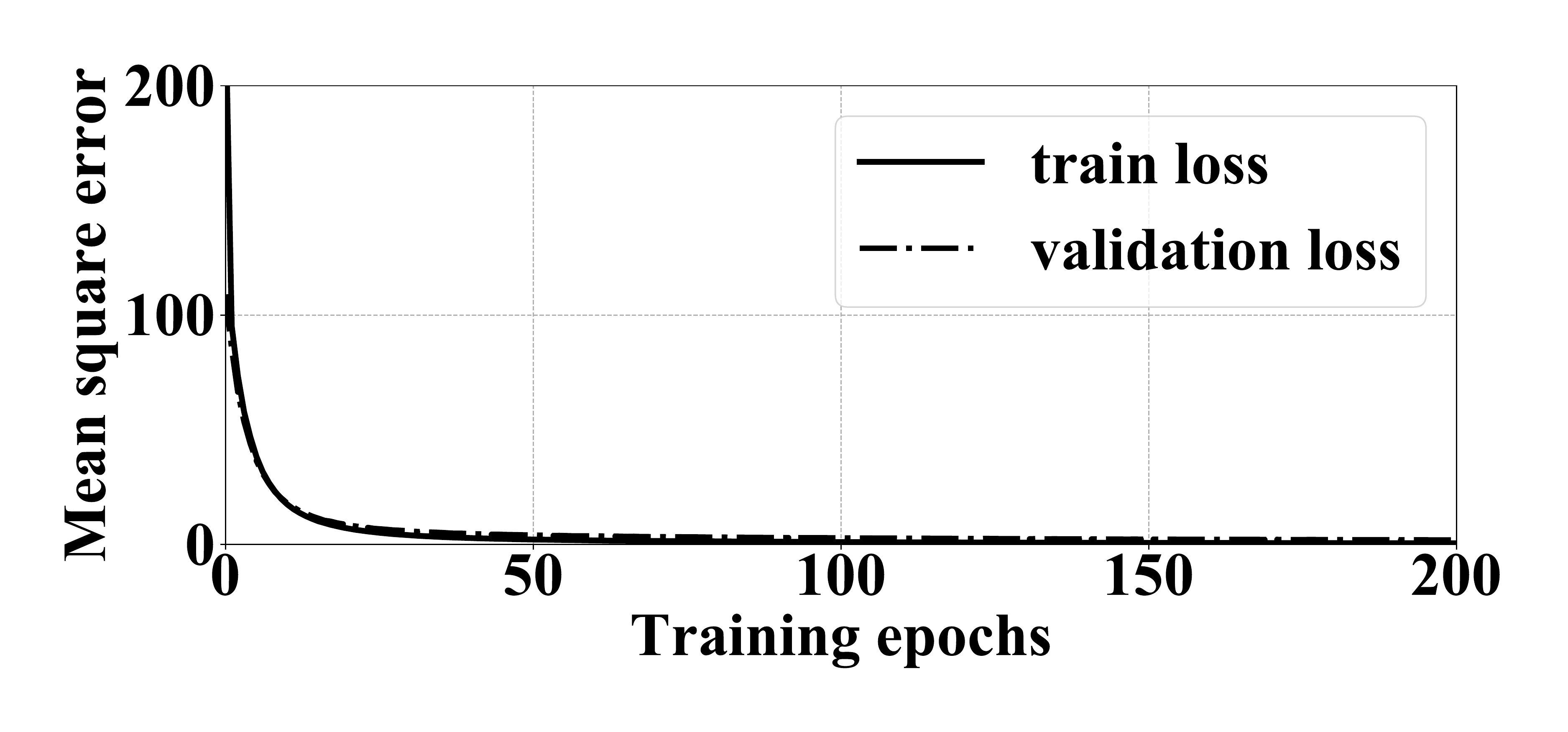}}
	\subfigure[CS Rate is 0.5.]{\label{subfig:rnn}\includegraphics[width=0.24\textwidth]{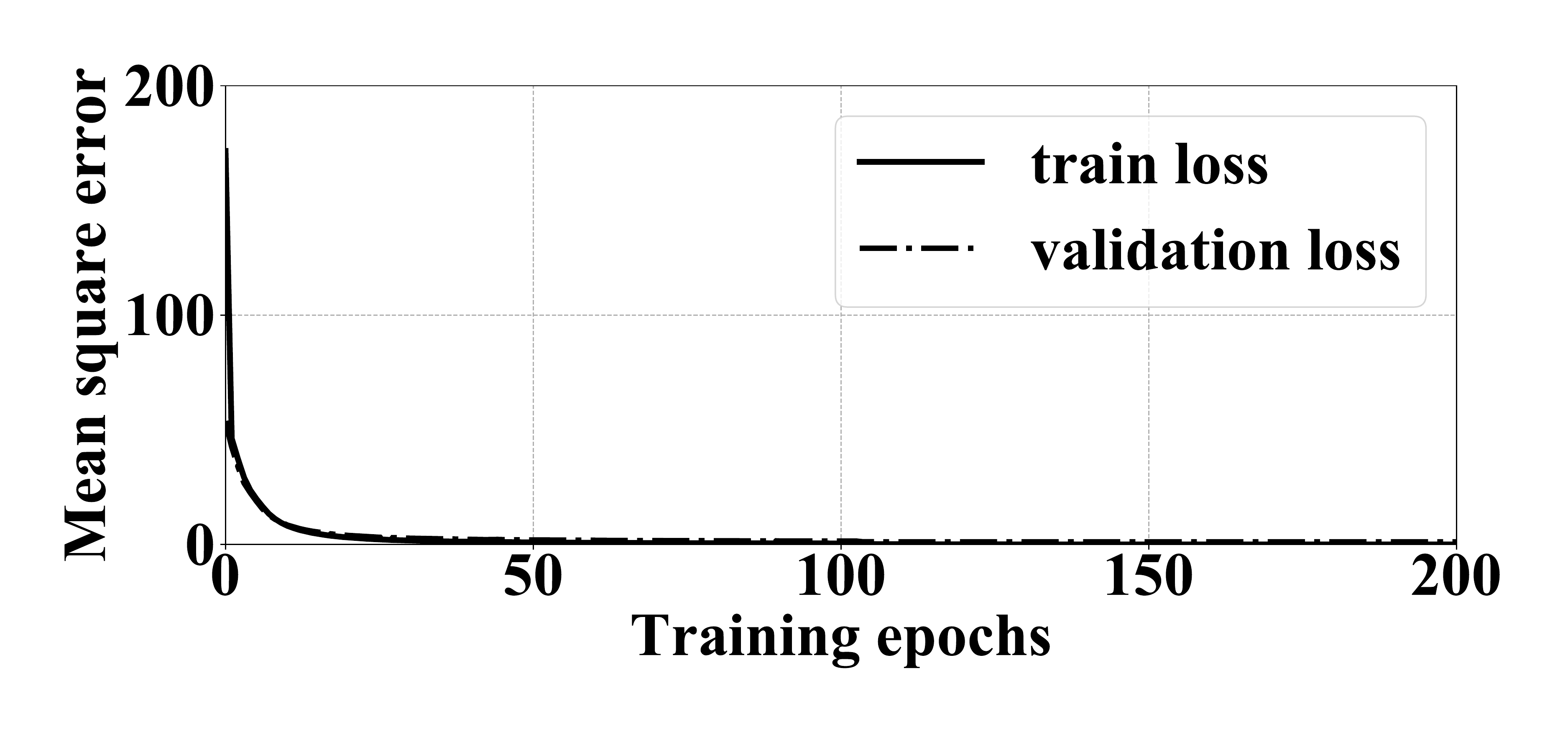}}
	\caption{Reweighted-RNN: Average mean square error vs. training epoches on the training and the validation sets with different CS rates ($d=3$,~$h=2^{10}$).}
	\label{fig:mseRate}
\end{figure} 
\begin{figure}[h]
	\centering
	\subfigure[$h=2^7$.]{\label{subfig:rnn}\includegraphics[width=0.24\textwidth]{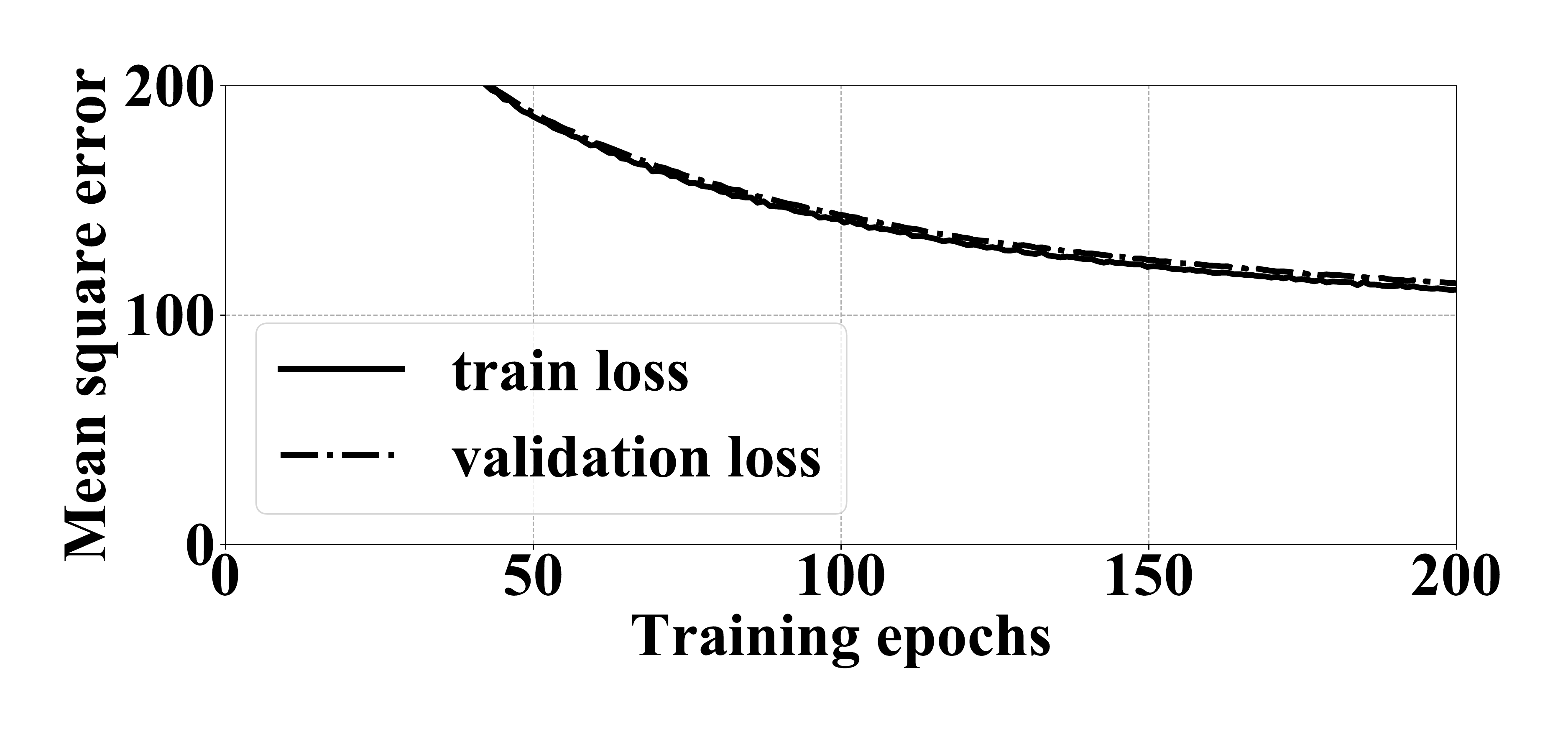}}
	\subfigure[$h=2^8$.]{\label{subfig:reweighted}\includegraphics[width=0.24\textwidth]{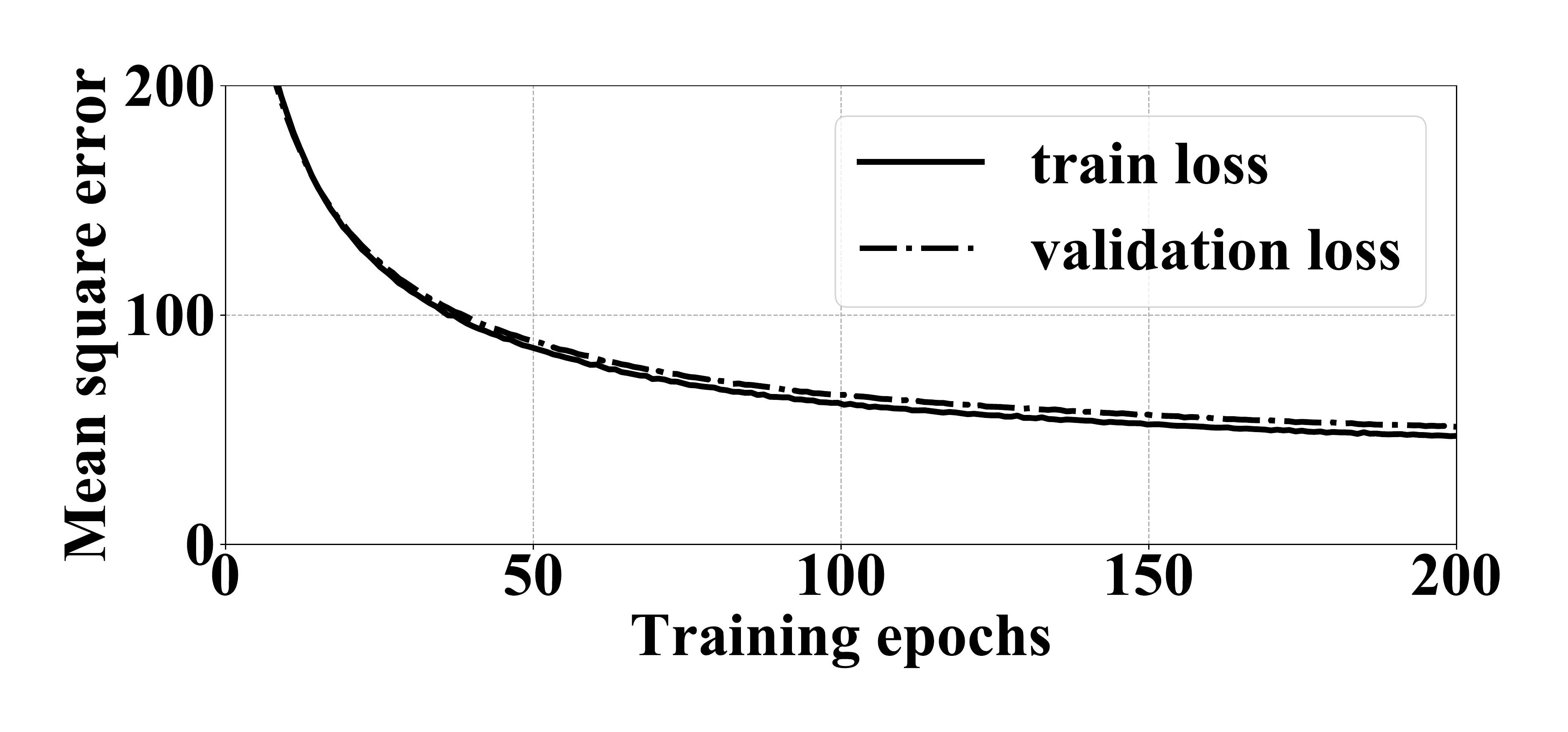}}
	\subfigure[$h=2^9$.]{\label{subfig:rnn}\includegraphics[width=0.24\textwidth]{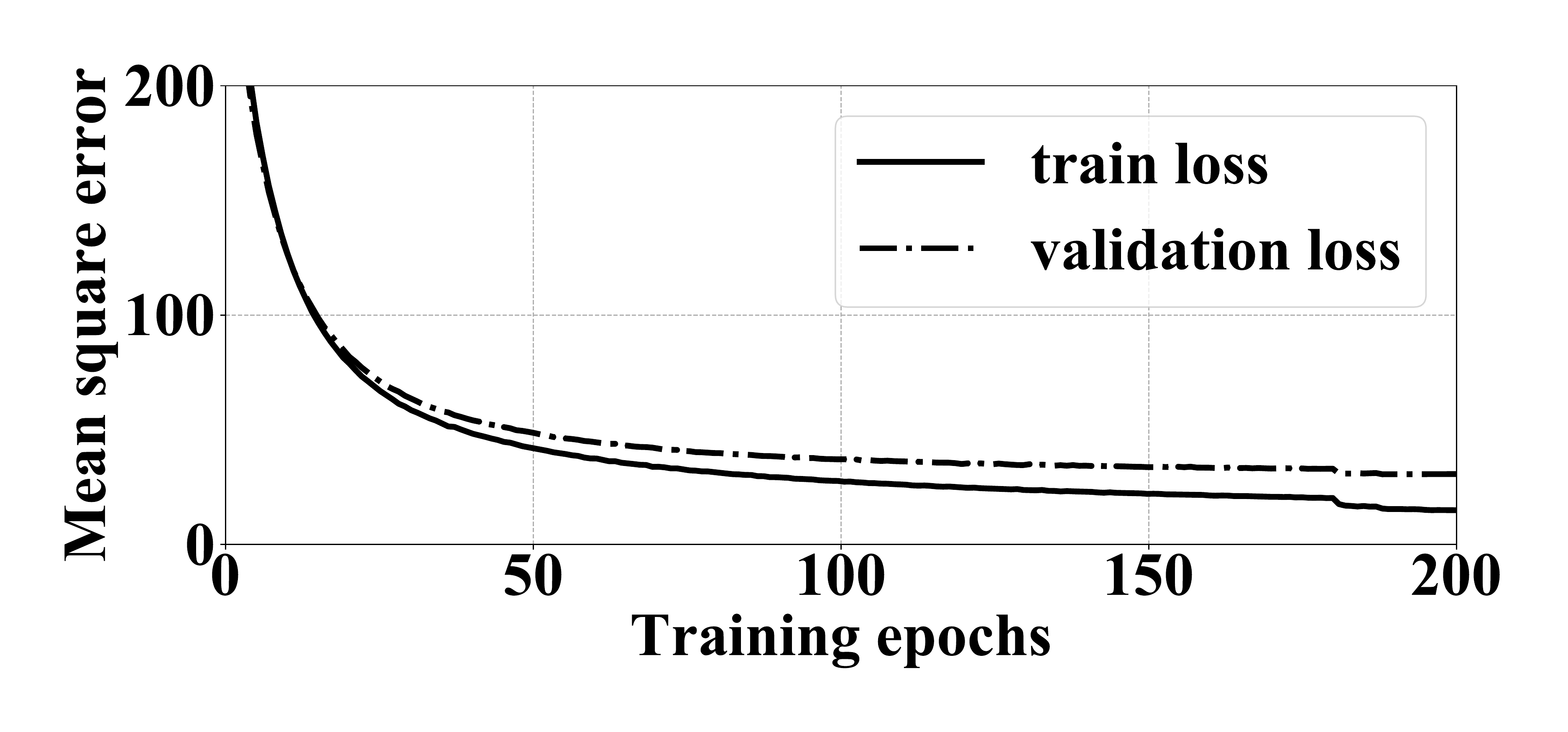}}
	\subfigure[$h=2^{10}$.]{\label{subfig:rnn}\includegraphics[width=0.24\textwidth]{20.pdf}}
	\subfigure[$h=2^{11}$.]{\label{subfig:rnn}\includegraphics[width=0.24\textwidth]{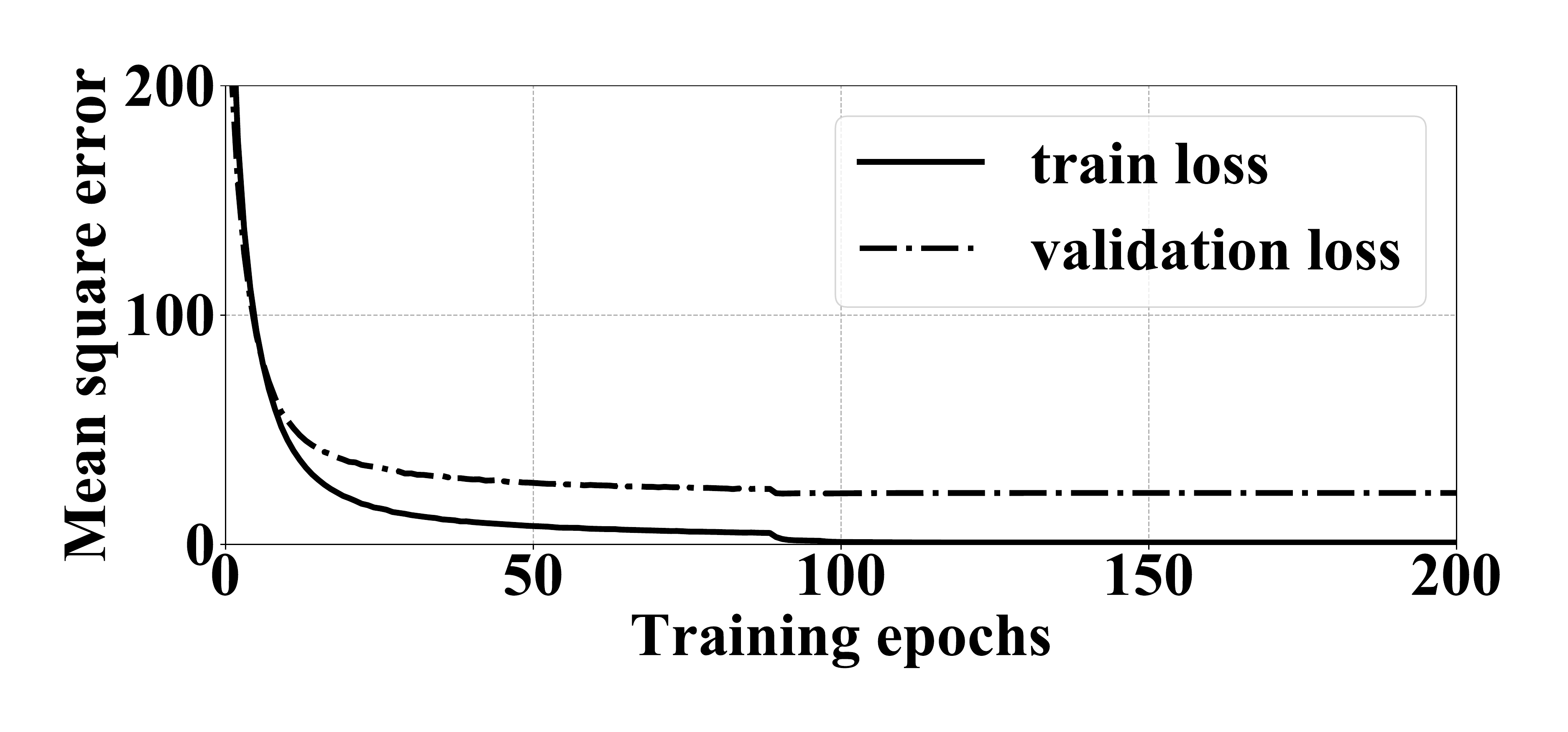}}
	\subfigure[$h=2^{12}$.]{\label{subfig:rnn}\includegraphics[width=0.24\textwidth]{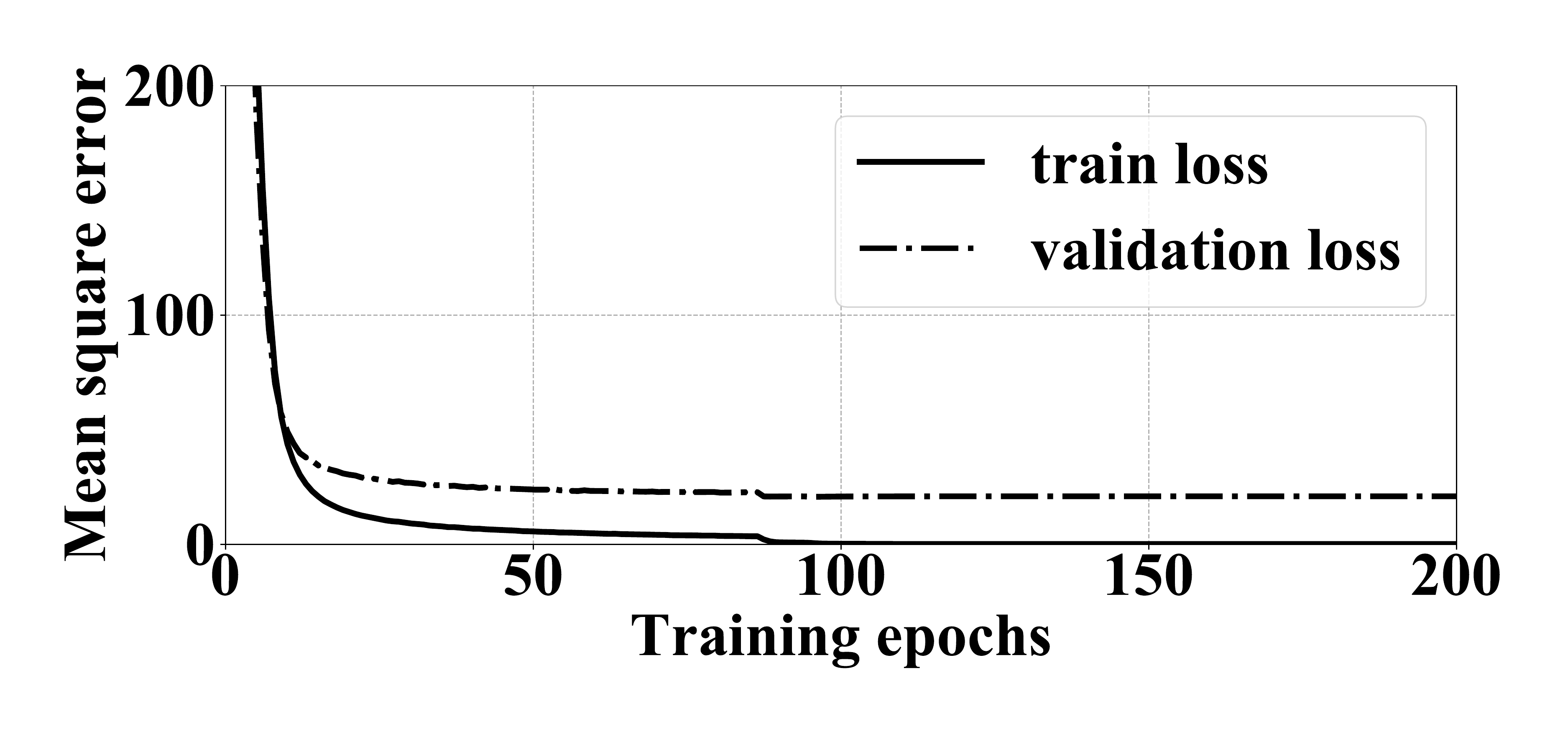}}
	\caption{Reweighted-RNN: Average mean square error vs. training epoches on the training and the validation sets with different network widths $h$ (a CS rates is $0.2$,~$d=3$).}
	\label{fig:mseWidth}
\end{figure} 
\begin{figure}[h]
	\centering
	\subfigure[$d=1$.]{\label{subfig:rnn}\includegraphics[width=0.24\textwidth]{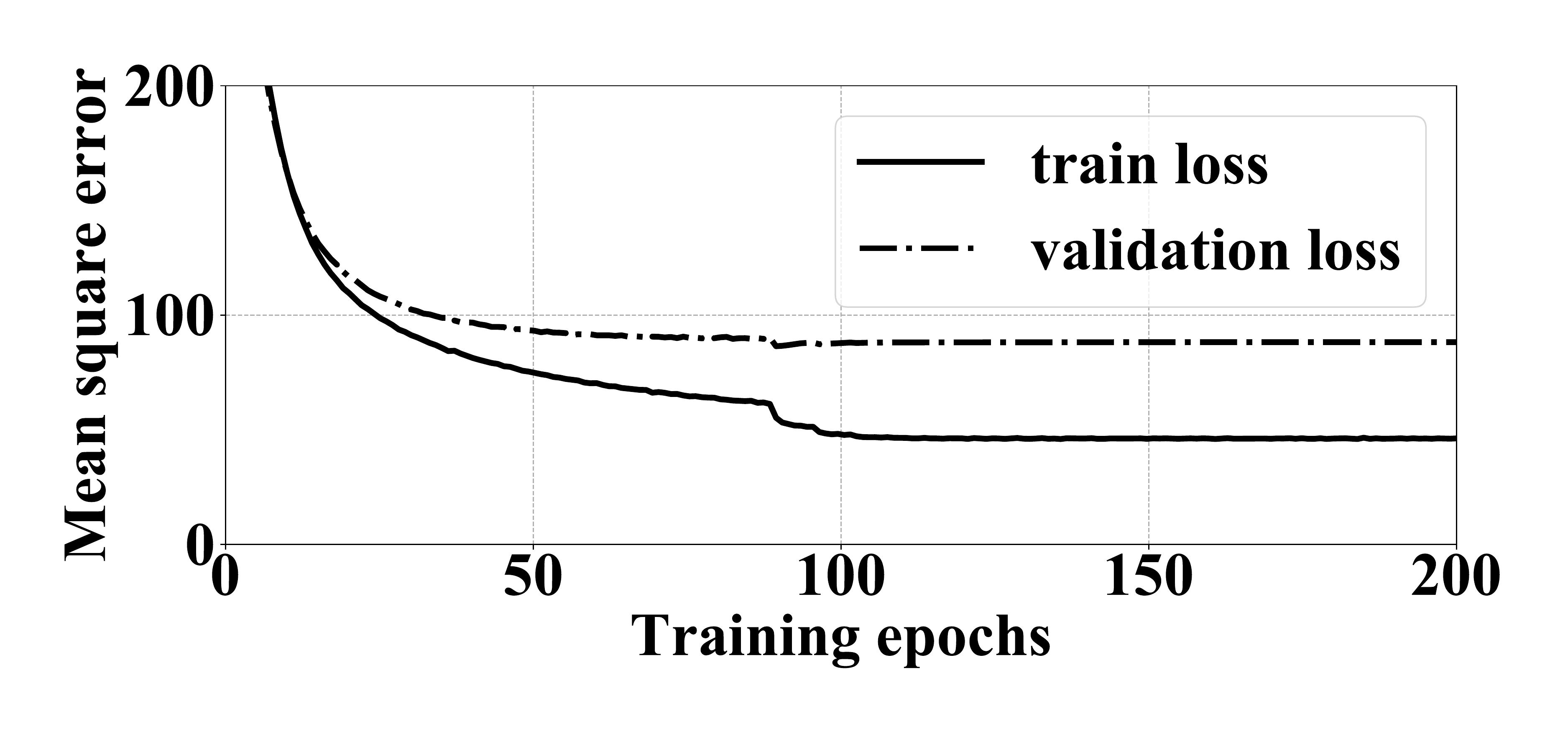}}
	\subfigure[$d=2$.]{\label{subfig:reweighted}\includegraphics[width=0.24\textwidth]{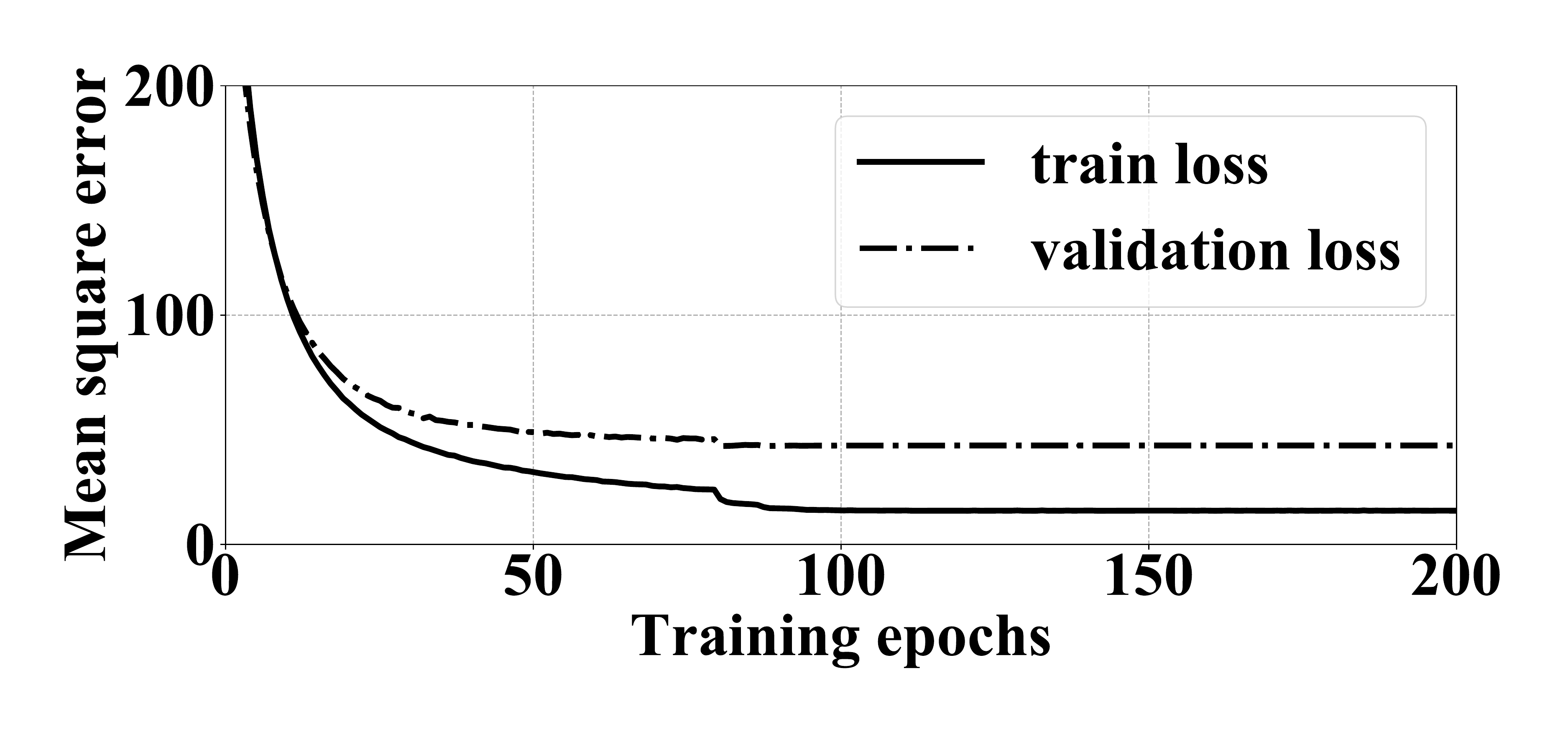}}
	\subfigure[$d=3$.]{\label{subfig:rnn}\includegraphics[width=0.24\textwidth]{20.pdf}}
	\subfigure[$d=4$.]{\label{subfig:rnn}\includegraphics[width=0.24\textwidth]{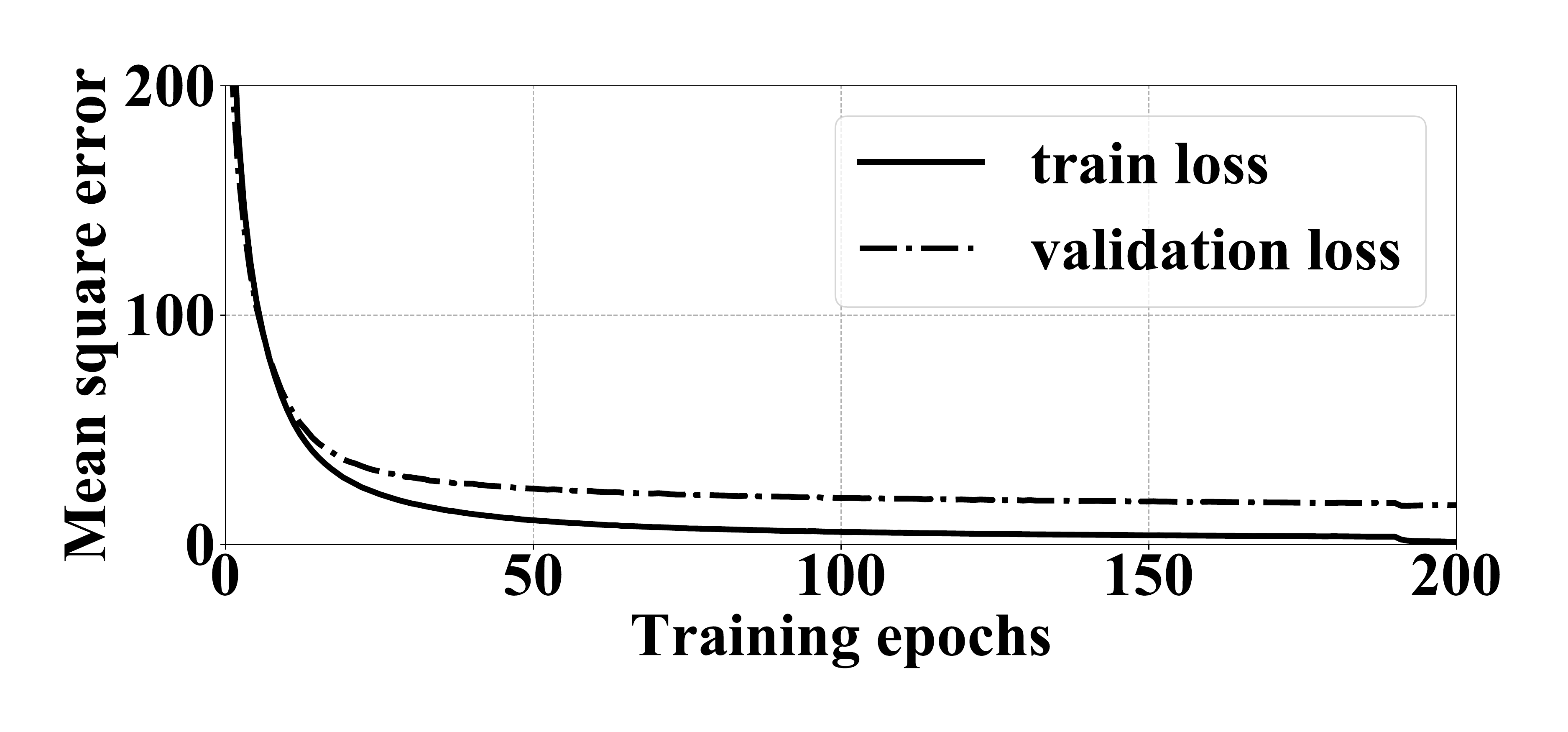}}
	\subfigure[$d=5$.]{\label{subfig:rnn}\includegraphics[width=0.24\textwidth]{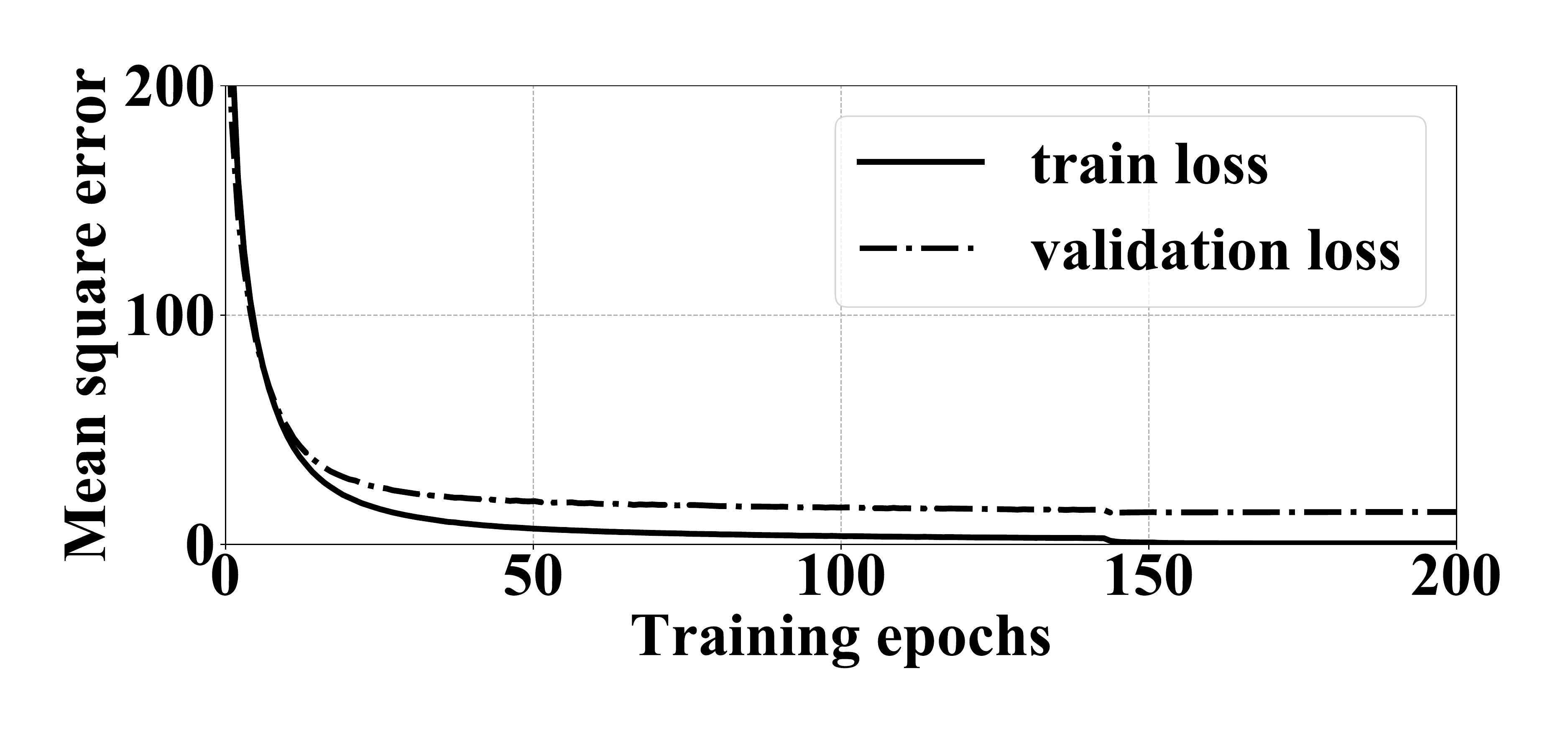}}
	\subfigure[$d=6$.]{\label{subfig:rnn}\includegraphics[width=0.24\textwidth]{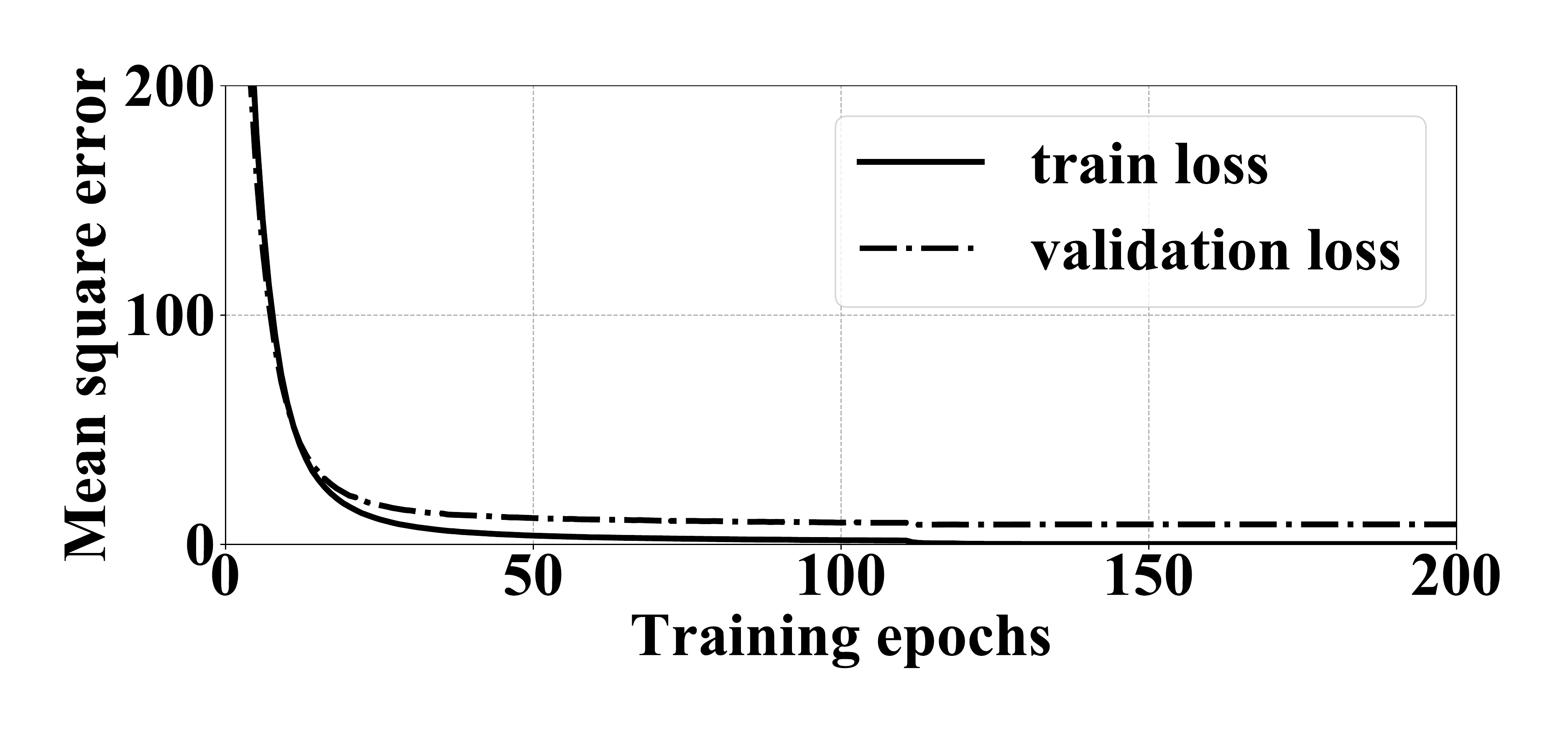}}
	\caption{Reweighted-RNN: Average mean square error vs. training epoches on the training and the validation sets with different network depths $d$ (a CS rate is $0.2$,~$h=2^{10}$).}
	\label{fig:mseDepth}
\end{figure} 
Figure~\ref{fig:mse} shows the learning curves of all methods under the default setting. It can be seen that reweighted-RNN achieves the lowest mean square error on both the training and validation sets. It can also be observed that the unfolding methods converge faster than the stacked RNNs, with the proposed reweighted-RNN being the fastest. More experimental results for the proposed reweighted-RNN are provided to illustrate the learning curves, which measure the average mean square error vs. the training epochs between the original frames and their reconstructed counterparts, with different CS rates [Fig. \ref{fig:mseRate}], different network depths $d$ [Fig. \ref{fig:mseDepth}], and different network widths $h$ [Fig.~\ref{fig:mseWidth}].	

Since we use different frameworks to implement the RNNs used in our benchmarks, we do not report and compare the computational time for training of the models. Specifically, we rely on the Tensorflow implementations from the authors of Independent-RNN, Fast-RNN and Spectral RNN, while the rest is written in Pytorch. Furthermore, even among Pytorch models, the vanilla RNN, LSTM, and GRU cells are written in CuDNN (default Pytorch implementations), so that they are significantly faster in training than the others. This does not mean that these networks have better runtime complexities, but rather more efficient implementations. However, an important comparison could be made between $\ell_1$-$\ell_1$-RNN~\cite{LeArXiv19} (as the baseline method) and Reweighted-RNN due to their similarities in implementations. At the default settings, it takes 3,521 seconds and 2,985 seconds to train Reweighted-RNN and $\ell_1$-$\ell_1$-RNN~\cite{LeArXiv19}, respectively.

	\subsection{Additional tasks}
We test our model on three popular tasks for RNNs, namely the sequential pixel MNIST classification, the adding task, and the copy task~\cite{LeArXiv15,ArjovskyICML16,ZhangICML18}.

\textbf{Sequential pixel MNIST and permuted pixel MNIST classification}. This task aims to classify MNIST images to a class label. MNIST images are formed by a 28$\times$28 gray-scale image with a label from 0 to 9. We use the reweighted-RNN along with a softmax for category classification. We set $d=5$ layers and $h = 256$ hidden units for the reweighted-RNN. We consider two scenarios: the first one where the pixels of each MNIST image are read in the order from left-to-right and bottom-to-top and the second one where the pixels of each MNIST image are randomly permuted. The classification accuracy results are shown in Fig.~\ref{pixelMNIST} (for pixel MNIST) and Fig. \ref{pixelPermutedMNIST} (for permuted pixel MNIST).

\begin{figure}[h!]
	\centering 
	\subfigure[Pixel MNIST.]{\label{pixelMNIST}\includegraphics[width=0.24\textwidth]{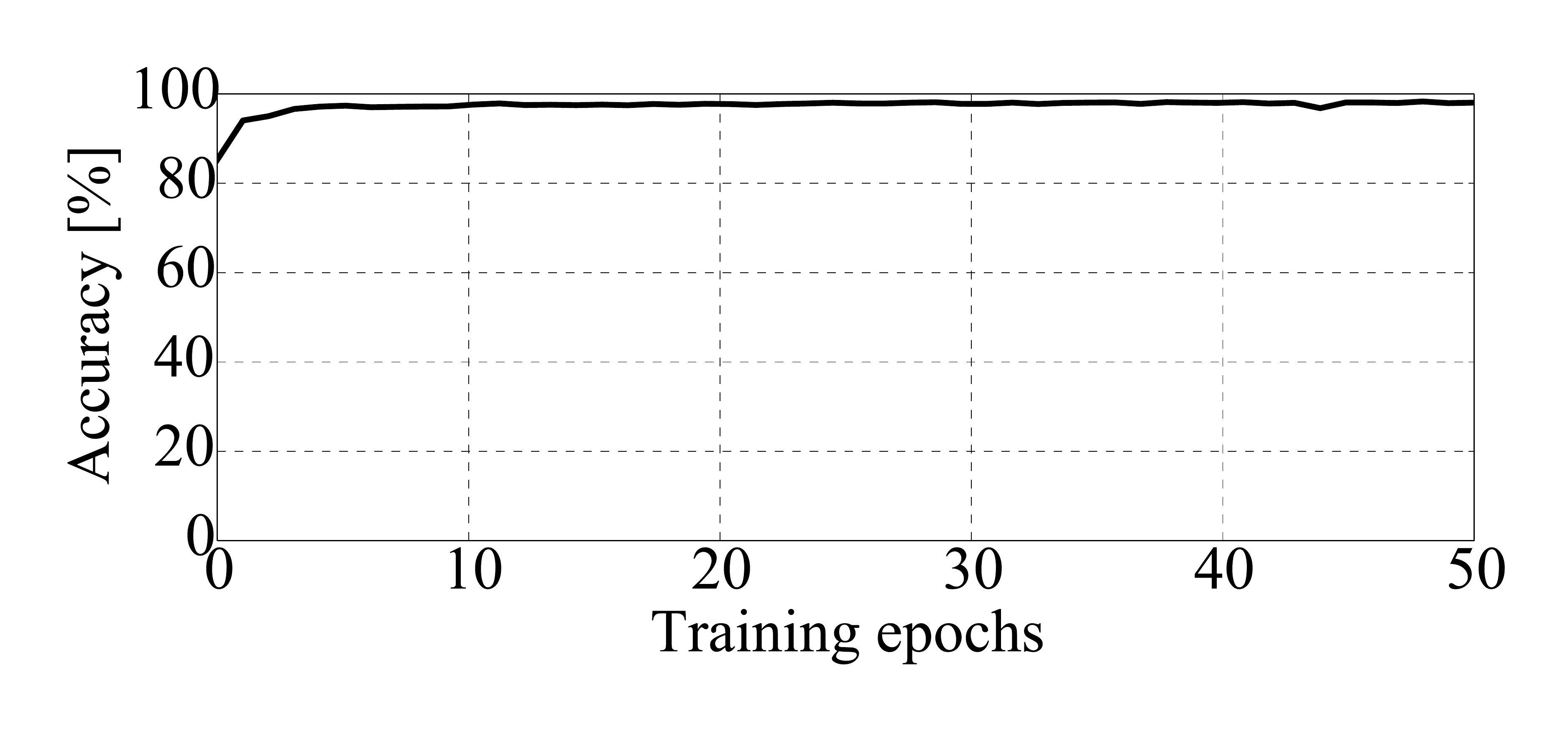}}
	\subfigure[Permuted pixel MNIST.]{\label{pixelPermutedMNIST}\includegraphics[width=0.24\textwidth]{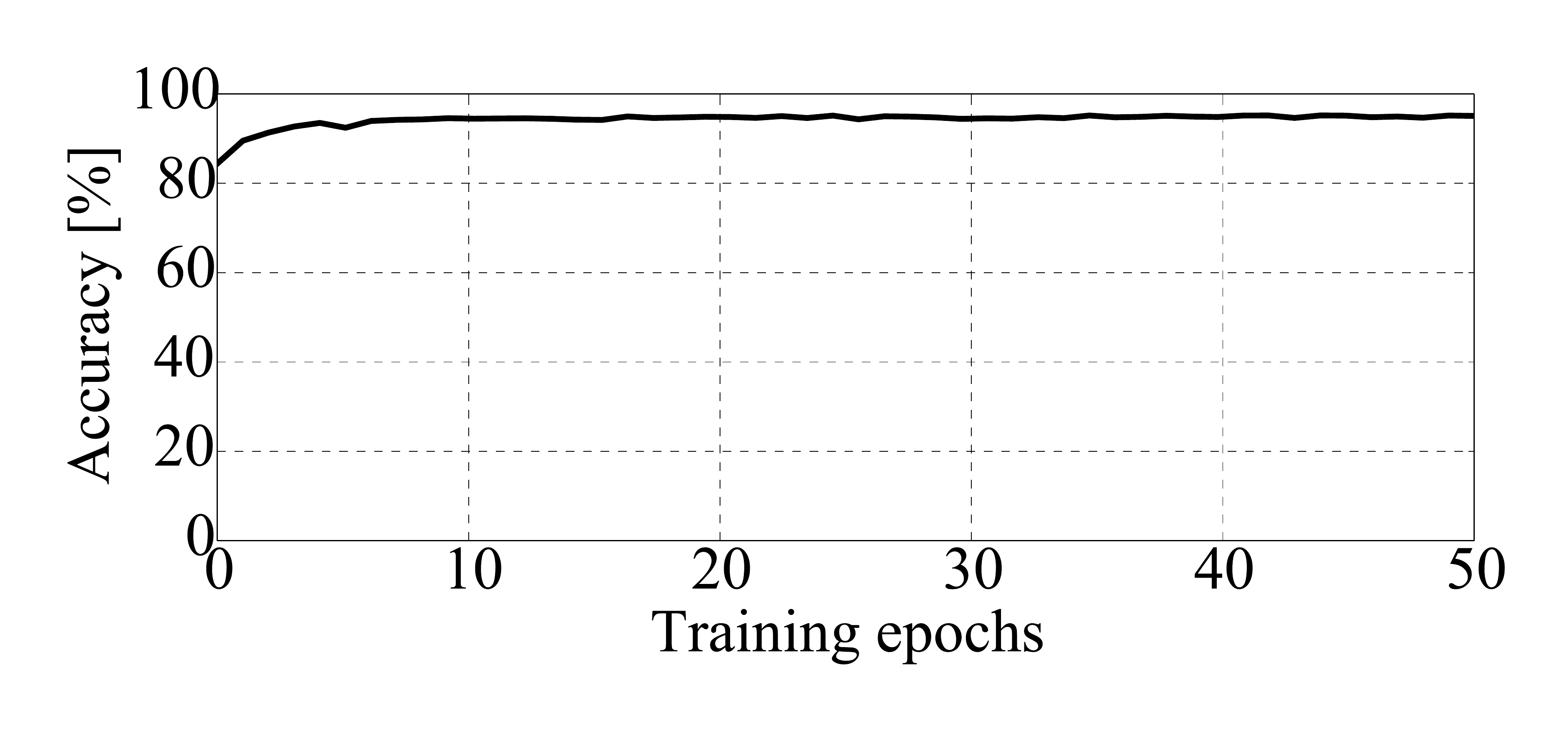}}			
	\caption{Reweighted-RNN on the (a) pixel-MNIST classification and (b) permuted pixel MNIST classification: Average classification accuracy vs. training epoches on the validation set.}\label{MNISTclassification}
\end{figure} 
\begin{figure}[h!]
	\centering 
	\subfigure[Adding task.]{\label{addingTask}\includegraphics[width=0.24\textwidth]{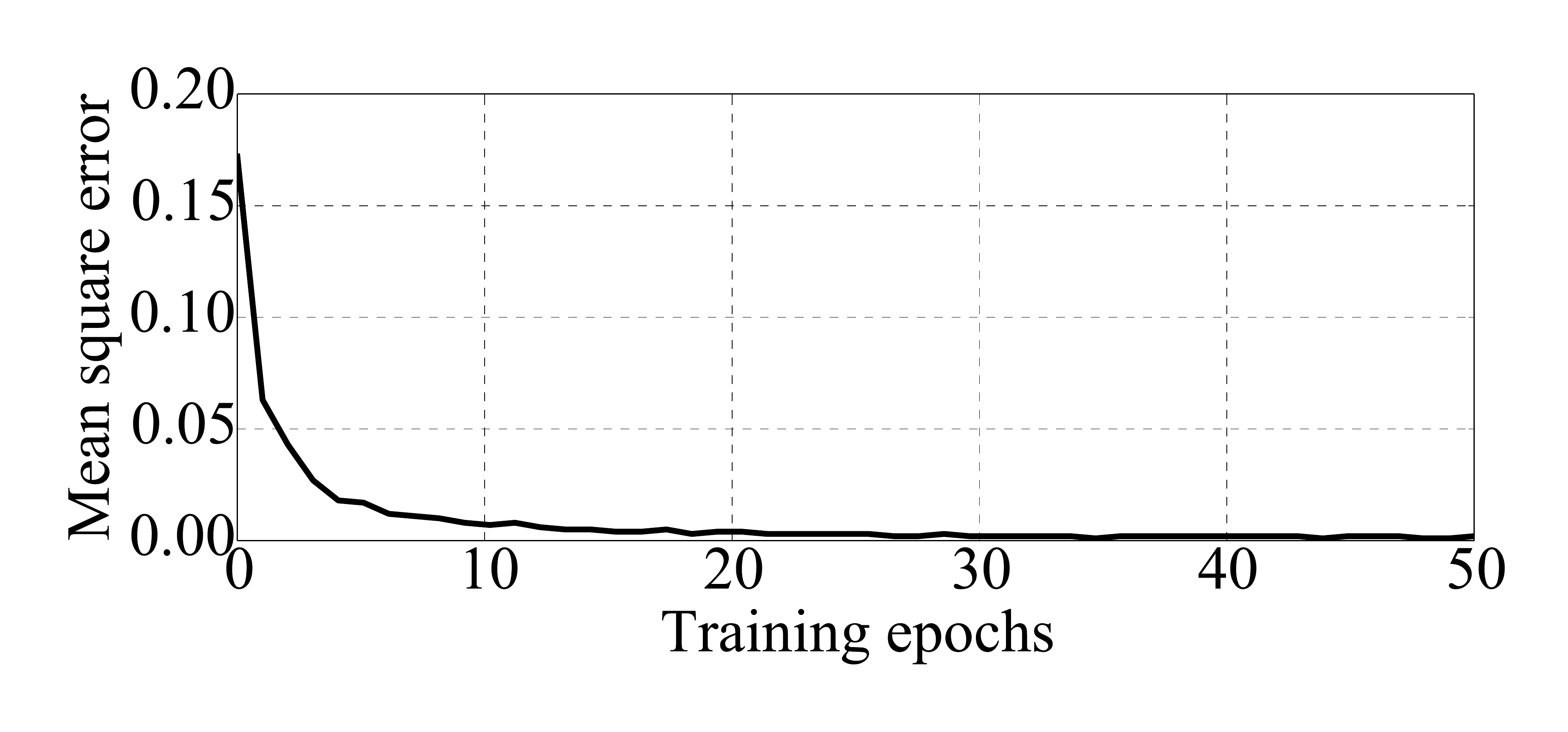}}
	\subfigure[Copy task.]{\label{copyMemory}\includegraphics[width=0.24\textwidth]{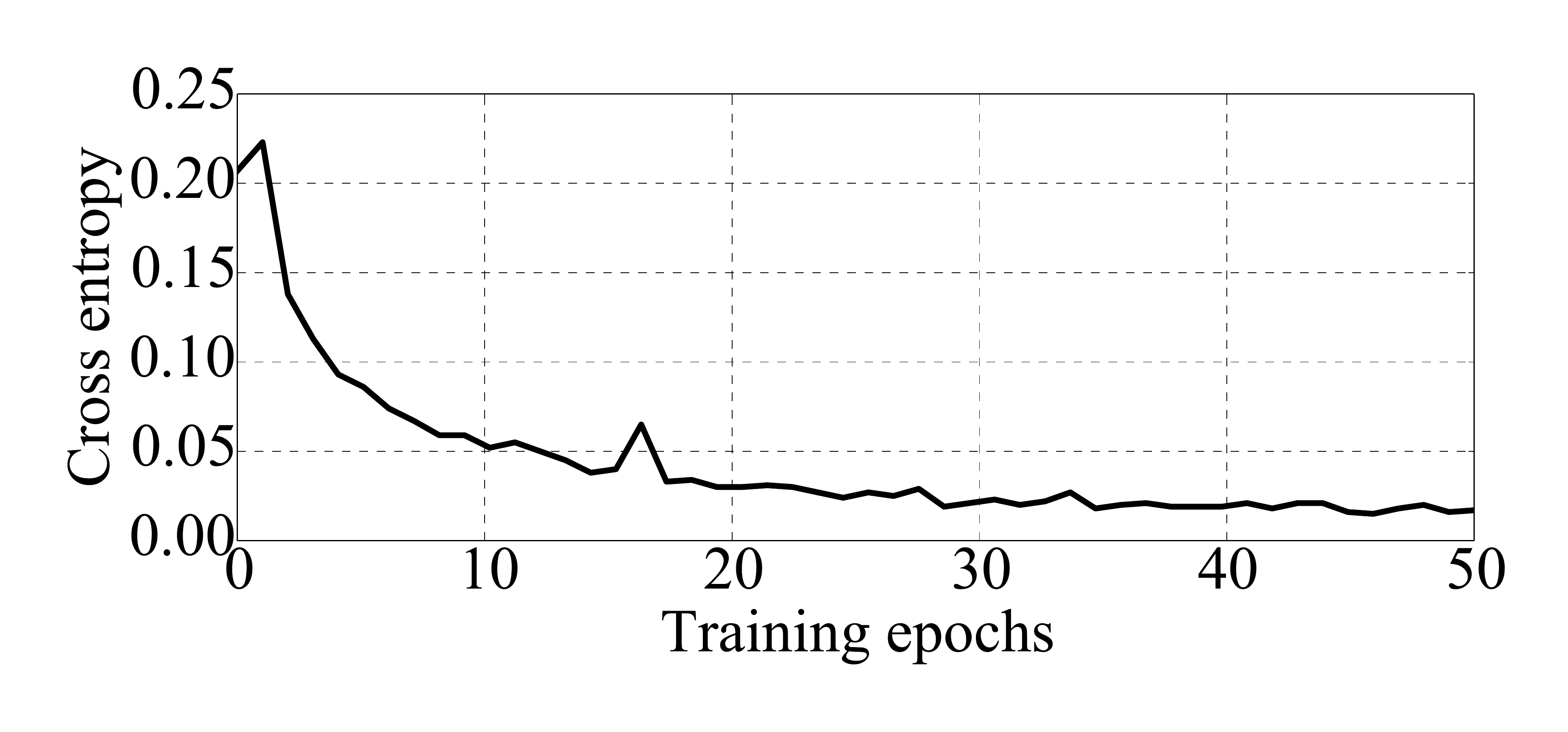}}			
	\caption{Reweighted-RNN on the (a) adding task with average mean square error and the (b) copy task with average cross entropy vs. training epoches on the validation set.}\label{adding and copy tasks}
\end{figure} 

\textbf{Adding Task}. The task inputs two sequences of length $T$. The first sequence consists of entries that are uniformly sampled from $[0,~1]$. The second sequence comprises two entries of 1 and the remaining entries of 0, in which the first entry of 1 is randomly located in the first half of the sequence and the second entry of 1 is randomly located in the second half. The output is the sum of the two entrie of the first sequence, where are located in the same posisions of the entries of 1 in the second sequence. We also use the reweighted-RNN with $d=5$ layers and $h = 256$ hidden units for the input sequences of length $T=300$. Fig.~\ref{addingTask} shows the mean square error versus training epoches on the validation set.

\textbf{Copy task}. We consider an input sequence $\bx \in \mathbb{A}^{T+20}$~\cite{ZhangICML18}, where $\mathbb{A}=\{a_0,\cdots,a_9\}$. $x_0,\cdots,x_9$ are uniformly sampled from $\{a_0,\cdots,a_7\}$, $x_{T+10}=a_9$, and the remaining $x_i$ are set to $a_8$. The purpose of this task is to copy $x_0,\cdots,x_9$ to the end of the output sequence $\by \in \mathbb{A}^{T+20}$ given a time lag $T$, i.e., $\{y_{T+10},\cdots,y_{T+19}\}\equiv\{x_0,\cdots,x_9\}$ and the remaining $y_i$ are equal to $a_8$. We set the reweighted-RNN $d=5$ layers and $h = 256$ hidden units for the input sequences of a time lag $T=100$. Fig.~\ref{copyMemory} shows the cross entropy versus training epoches on the validation set.
	
	\section{Conclusions}
	\vspace{-0pt}
	We designed a novel deep RNN by unfolding an algorithm that solves a reweighted $\ell_1$-$\ell_1$ minimization problem. The proposed reweighted-RNN model has high network expressivity due to \textit{per-unit learnable activation functions} and \textit{over-parameterized weights}. We also established the generalization error bound for the proposed model via Rademacher complexity. We showed that reweighted-RNN has good generalization properties and its error bound is tighter than existing ones concerning the number of time steps. Experimentation on the task of sequential video frame reconstruction suggests that our model (\textit{i}) outperforms various state-of-the-art RNNs in terms of accuracy and convergence speed. (\textit{ii}) is capable of stacking many hidden layers resulting in a better learning capability.

% if have a single appendix:
%\appendix[Proof of the Zonklar Equations]
% or
%\appendix  % for no appendix heading
% do not use \section anymore after \appendix, only \section*
% is possibly needed

% use appendices with more than one appendix
% then use \section to start each appendix
% you must declare a \section before using any
% \subsection or using \label (\appendices by itself
% starts a section numbered zero.)
%

%\section*{Acknowledgment}
%
%
%The authors would like to thank...

		\bibliographystyle{IEEEtran}
\bibliography{./IEEEfull,./IEEEabrv,./bibliography}		
% Can use something like this to put references on a page
% by themselves when using endfloat and the captionsoff option.
\ifCLASSOPTIONcaptionsoff
  \newpage
\fi

		\appendices
		%\section{Proofs}
		\section{Proximal operator for the reweighted $\ell_1$-$\ell_1$ minimization problem}\label{solvingReweightedl1-l1Minimization}
		
		\begin{proposition}\label{propReweighted-l1-l1}
			The proximal operator $\varPhi_{\frac{\lambda_1}{c}\g,\frac{\lambda_2}{c}\g,\hbar}(u)$ in \eqref{reweighted-l1-proximalOperator} for the reweighted $\ell_1$-$\ell_1$ minimization problem~\eqref{reweighted-l1-l1minimization}, for which $g(v) = \lambda_1\g| v| + \lambda_2\g|v - \hbar|$, is given by
			\begin{align}
			&\varPhi_{\frac{\lambda_1}{c}\g,\frac{\lambda_2}{c}\g,\hbar\geq 0}(u)\nonumber
			\\
			&=\begin{cases}
			u - \frac{\lambda_1}{c}\g - \frac{\lambda_2}{c}\g, & \hbar + \frac{\lambda_1}{c}\g+ \frac{\lambda_2}{c}\g < u < \infty \\
			\hbar, & \hbar + \frac{\lambda_1}{c}\g - \frac{\lambda_2}{c}\g \leq u \leq \hbar + \frac{\lambda_1}{c}\g + \frac{\lambda_2}{c}\g \\
			u - \frac{\lambda_1}{c}\g + \frac{\lambda_2}{c}\g, & \frac{\lambda_1}{c}\g - \frac{\lambda_2}{c}\g <u < \hbar + \frac{\lambda_1}{c}\g
			- \frac{\lambda_2}{c}\g\\
			0, & -\frac{\lambda_1}{c}\g- \frac{\lambda_2}{c}\g\leq u \leq \frac{\lambda_1}{c}\g- \frac{\lambda_2}{c}\g\\
			u + \frac{\lambda_1}{c}\g + \frac{\lambda_2}{c}\g, & -\infty < u < -\frac{\lambda_1}{c}\g - \frac{\lambda_2}{c}\g\\
			\end{cases}\label{reweighted-l1-proximalOperatorElementCompute1}\\
			&\varPhi_{\frac{\lambda_1}{c}\g,\frac{\lambda_2}{c}\g,\hbar<0}(u)
			\nonumber
			\\
			&=\begin{cases}
			u - \frac{\lambda_1}{c}\g - \frac{\lambda_2}{c}\g, & \frac{\lambda_1}{c}\g+ \frac{\lambda_2}{c}\g < u < \infty \\
			0, & - \frac{\lambda_1}{c}\g +\frac{\lambda_2}{c}\g \leq u \leq \frac{\lambda_1}{c}\g + \frac{\lambda_2}{c}\g \\
			u + \frac{\lambda_1}{c}\g - \frac{\lambda_2}{c}\g , & \hbar - \frac{\lambda_1}{c}\g +\frac{\lambda_2}{c}\g < u < - \frac{\lambda_1}{c}\g
			+ \frac{\lambda_2}{c}\g\\
			\hbar, & \hbar-\frac{\lambda_1}{c}\g- \frac{\lambda_2}{c}\g\leq u \leq \hbar - \frac{\lambda_1}{c}\g + \frac{\lambda_2}{c}\g\\
			u- \frac{\lambda_1}{c}\g + \frac{\lambda_2}{c}\g, & -\infty < u <\hbar -\frac{\lambda_1}{c}\g - \frac{\lambda_2}{c}\g\label{reweighted-l1-proximalOperatorElementCompute2}\\
			\end{cases}
			\end{align}
		\end{proposition}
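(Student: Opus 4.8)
The statement is separable, so it suffices to treat one scalar coordinate: fix $u,\hbar\in\R$ and set $a:=\tfrac{\lambda_1}{c}\g$ and $b:=\tfrac{\lambda_2}{c}\g$, noting $a,b>0$ since $\g>0$ and $c,\lambda_1,\lambda_2>0$. By the definition \eqref{reweighted-l1-proximalOperator}, $\varPhi_{a,b,\hbar}(u)$ is the minimizer over $v\in\R$ of
\[
\psi(v)\;:=\;\tfrac1c g(v)+\tfrac12(v-u)^2\;=\;a|v|+b|v-\hbar|+\tfrac12(v-u)^2 .
\]
The first thing I would record is that $\psi$ is $1$-strongly convex, hence possesses a \emph{unique} minimizer $v^\star$, characterized by the first-order condition $0\in\partial\psi(v^\star)=a\,\partial|v^\star|+b\,\partial|v^\star-\hbar|+(v^\star-u)$, where $\partial|t|=\{\mathrm{sign}\,t\}$ for $t\neq0$ and $\partial|0|=[-1,1]$. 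The rest is a finite case analysis driven by the two kinks of $\psi$, located at $0$ and $\hbar$.

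\textbf{Case $\hbar\geq0$.} I would split according to which of the five regimes $v^\star>\hbar$, $v^\star=\hbar$, $0<v^\star<\hbar$, $v^\star=0$, $v^\star<0$ the minimizer lies in. On each \emph{open} regime $\psi$ is a single parabola and stationarity is the linear equation $a\sigma_1+b\sigma_2+v^\star-u=0$, with $\sigma_1,\sigma_2\in\{\pm1\}$ the (constant) signs of $v^\star$ and of $v^\star-\hbar$ on that regime; solving gives $v^\star$ affine in $u$, and demanding that this value actually lie in the chosen regime converts into an interval of admissible $u$. For instance $v^\star>\hbar\ (\geq0)$ yields $v^\star=u-a-b$, valid iff $u>\hbar+a+b$; $0<v^\star<\hbar$ yields $v^\star=u-a+b$, valid iff $a-b<u<\hbar+a-b$; and $v^\star<0$ yields $v^\star=u+a+b$, valid iff $u<-a-b$. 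At a kink the subdifferential is an interval, so the optimality condition becomes a closed $u$-interval on which $v^\star$ is pinned: $v^\star=\hbar\ (>0)$ requires $\hbar+a-b\leq u\leq\hbar+a+b$, and $v^\star=0$ (so $v^\star-\hbar=-\hbar<0$) requires $-a-b\leq u\leq a-b$. Assembling the five branches gives exactly \eqref{reweighted-l1-proximalOperatorElementCompute1}; because $a,b>0$ and $\hbar\geq0$ the five $u$-intervals are consecutive, cover $\R$, and agree at their shared endpoints, so uniqueness of $v^\star$ closes the argument. (When $\hbar=0$ the three middle branches collapse and one simply recovers the soft-threshold $v^\star=\mathrm{sign}(u)\max\{|u|-(a+b),0\}$.)

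\textbf{Case $\hbar<0$, and the expected difficulty.} Here I would either rerun the identical five-regime computation with the kinks now at $\hbar<0$ and $0$, or — more economically — use the reflection identity $\psi(v;u,\hbar)=\psi(-v;-u,-\hbar)$, which forces $\varPhi_{a,b,\hbar}(u)=-\varPhi_{a,b,-\hbar}(-u)$; plugging the formula already proved at the positive argument $-\hbar>0$ and simplifying yields \eqref{reweighted-l1-proximalOperatorElementCompute2}. There is no real analytic obstacle in any of this — it is elementary one-variable convex calculus. The only step that needs genuine care is the bookkeeping: correctly enumerating the regimes from the positions of the two kinks and verifying that the induced $u$-intervals are ordered, non-overlapping, and exhaustive (which is precisely where $a,b>0$ and the sign of $\hbar$ enter), so that the resulting piecewise map is well defined and continuous.
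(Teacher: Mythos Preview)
Your proposal is correct and follows the same overall structure as the paper: both treat the scalar problem, split the analysis according to where the minimizer sits relative to the two kinks $0$ and $\hbar$, and solve the affine stationarity equation on each open regime to obtain the three linear branches. The execution differs at the kink points and in the second case. At $v^\star=0$ and $v^\star=\hbar$ the paper does not use subdifferentials; instead it invokes the elementary bound $|p-q|\geq|p|-|q|$ to produce a lower envelope for $\varphi$ and then argues that this envelope is minimized at the kink. Your route via $0\in\partial\psi(v^\star)$ is cleaner: it delivers the closed $u$-intervals $[-a-b,\,a-b]$ and $[\hbar+a-b,\,\hbar+a+b]$ directly, with no auxiliary inequality, and the strong-convexity observation guarantees uniqueness up front. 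For $\hbar<0$ the paper simply asserts the computation is similar, whereas your reflection identity $\varPhi_{a,b,\hbar}(u)=-\varPhi_{a,b,-\hbar}(-u)$ reduces it to the case already proved. Both arguments are elementary; yours is more systematic and makes the exhaustiveness and continuity of the piecewise formula explicit, which the paper leaves implicit.
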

		\begin{proof}
			We compute the proximal operator $\varPhi_{\frac{\lambda_1}{c}\g,\frac{\lambda_2}{c}\g,\hbar}(u)$ \eqref{reweighted-l1-proximalOperatorElementCompute1} for $\hbar\geq 0$, it is similar for $\hbar<0$. From \eqref{reweighted-l1-proximalOperator}, $\varPhi_{\frac{\lambda_1}{c}\g,\frac{\lambda_2}{c}\g,\hbar}(u)$ is expressed by:
			\begin{align}\label{reweighted-l1-proximalOperatorCompute}
			&\varPhi_{\frac{\lambda_1}{c}\g,\frac{\lambda_2}{c}\g,\hbar}(u)
			\nonumber
			\\
			& = \argmin_{v \in\mathbb{R}}\Big\{\varphi(v):=\frac{\lambda_1}{c}\g| v| + \frac{\lambda_2}{c}\g|v - \hbar|+\frac{1}{2}|v-u|^{2}\Big\}.
			\end{align}
			
			We consider the $\partial \varphi(v)/\partial v$. When $v$ is located in one of the intervals, $v\in\{(-\infty,0),(0,\hbar),(\hbar,\infty)\}$, where $\partial \varphi(v)$ exists. Taking the derivative of $\varphi(v)$ in these intervals delivers
			\vspace{-0.8pt}
			\begin{equation}\label{reweighted-l1-proximalOperatorElementDerivative}
			\frac{\partial \varphi(v)}{\partial v}=\frac{\lambda_1}{c}\g\cdot \mathrm{sign}(v)+ \frac{\lambda_2}{c}\g\cdot \mathrm{sign}(v-\hbar) + (v-u),
			\end{equation}
			where $\mathrm{sign}(.)$ is a sign function. When setting $\partial \varphi(v)/\partial v= 0$ to minimize $\varphi(v)$, we derive:
			\begin{align}\label{partialZero}
			v=
			\begin{cases}
			u - \frac{\lambda_1}{c}\g - \frac{\lambda_2}{c}\g, & \hbar < v < \infty \\
			u - \frac{\lambda_1}{c}\g + \frac{\lambda_2}{c}\g, & 0< v < \hbar \\
			u + \frac{\lambda_1}{c}\g + \frac{\lambda_2}{c}\g, & -\infty < v <0\\
			\end{cases}
			\end{align}
			
			From \eqref{reweighted-l1-proximalOperatorCompute} and \eqref{partialZero}, we have
			\begin{align}\label{reweighted-l1-proximalX}
			&\varPhi_{\frac{\lambda_1}{c}\g,\frac{\lambda_2}{c}\g,\hbar}(u)
			\nonumber
			\\
			&=
			\begin{cases}
			u - \frac{\lambda_1}{c}\g - \frac{\lambda_2}{c}\g, & \hbar + \frac{\lambda_1}{c}\g+ \frac{\lambda_2}{c}\g < u < \infty \\
			u - \frac{\lambda_1}{c}\g + \frac{\lambda_2}{c}\g, & \frac{\lambda_1}{c}\g - \frac{\lambda_2}{c}\g < u < \hbar + \frac{\lambda_1}{c}\g
			- \frac{\lambda_2}{c}\g\\
			u + \frac{\lambda_1}{c}\g + \frac{\lambda_2}{c}\g, & -\infty < u < -\frac{\lambda_1}{c}\g - \frac{\lambda_2}{c}\g\\
			\end{cases}
			\end{align}
			
			In the remaining
			range value of $u$, namely, $-\frac{\lambda_1}{c}\g - \frac{\lambda_2}{c}\g\leq u \leq \frac{\lambda_1}{c}\g - \frac{\lambda_2}{c}\g$ and $\hbar + \frac{\lambda_1}{c}\g- \frac{\lambda_2}{c}\g \leq u \leq \hbar + \frac{\lambda_1}{c}\g+ \frac{\lambda_2}{c}\g$, we prove that the minimum of $\varphi(v)$ \eqref{reweighted-l1-proximalOperatorCompute} is obtained when $v=0$ and $v=\hbar$, respectively.
			
			Let us recall $\varphi(v)$ in \eqref{reweighted-l1-proximalOperatorCompute} as
			\begin{equation}\label{reweighted-l1-proximalOperatorElementLemma}
			\varphi(v)=\frac{\lambda_1}{c}\g| v| + \frac{\lambda_2}{c}\g|v - \hbar|+\frac{1}{2}|v-u|^{2}
			\end{equation}
			Applying the inequality $|a-b|\geq |a|-|b|$, where $a,b\in \mathbb{R}$, to \eqref{reweighted-l1-proximalOperatorElementLemma}, we obtain:
			\begin{align}\label{reweighted-l1-proximalOperatorElementLemmaInequality}
			\varphi(v)&\geq\frac{\lambda_1}{c}\g| v|+\frac{\lambda_2}{c}\g|v| - \frac{\lambda_2}{c}\g|\hbar|+\frac{1}{2}v^{2}-vu+\frac{1}{2}u^{2}\nonumber\\
			&\geq |v|\Big(\frac{\lambda_1}{c}\g+\frac{\lambda_2}{c}\g -|u| \Big)+\frac{1}{2}v^{2} -\frac{\lambda_2}{c}\g|\hbar|+\frac{1}{2}u^{2}
			\end{align}
			For $-\frac{\lambda_1}{c}\g - \frac{\lambda_2}{c}\g\leq u \leq \frac{\lambda_1}{c}\g - \frac{\lambda_2}{c}\g$, from \eqref{reweighted-l1-proximalOperatorElementLemmaInequality}, $\varphi(v)$ is minimal when $v=0$, due to $\frac{\lambda_1}{c}\g+\frac{\lambda_2}{c}\g -|u|\geq 0$.
			
			Similarly, for $\hbar + \frac{\lambda_1}{c}\g- \frac{\lambda_2}{c}\g \leq u \leq \hbar + \frac{\lambda_1}{c}\g+ \frac{\lambda_2}{c}\g$, i.e., $\frac{\lambda_1}{c}\g- \frac{\lambda_2}{c}\g \leq u-\hbar \leq \frac{\lambda_1}{c}\g+ \frac{\lambda_2}{c}\g$, we have
			\begin{align}\label{reweighted-l1-proximalOperatorElementLemmaInequality2}
			\varphi(v)\geq&\frac{\lambda_1}{c}\g| v-\hbar|-\frac{\lambda_1}{c}\g|\hbar| + \frac{\lambda_2}{c}\g|v-\hbar|+\frac{1}{2}(v-\hbar)^{2}
			\nonumber
			\\
			&-|v-\hbar||u-\hbar|+\frac{1}{2}(u-\hbar)^{2}\nonumber\\
			\geq &|v-\hbar|\Big(\frac{\lambda_1}{c}\g+\frac{\lambda_2}{c}\g -|u-\hbar| \Big)+\frac{1}{2}(v-\hbar)^{2} 
			\nonumber
			\\
			&-\frac{\lambda_1}{c}\g|\hbar|+\frac{1}{2}(u-\hbar)^{2}.
			\end{align}			
			From \eqref{reweighted-l1-proximalOperatorElementLemmaInequality2}, $\varphi(v)$ is minimal when $v=\hbar$, since $\frac{\lambda_1}{c}\g+\frac{\lambda_2}{c}\g -|u-\hbar|\geq 0$. As the results, we conclude the proof.
		\end{proof}

		%\section*{References}
		
		%\section{Background}
		
		%\section{Deep unfolding RNNs}\label{RademacherComplexityConverter}
		%Deep neural networks (DNNs) have achieved great success in a wide range of applications. However, understanding how the deep neural nets that work well on unseen data despite having more parameters than the number of training samples is still a mystery.
		
		%\subsubsection{Learning problems using deep neural networks}

		\section{Deep unfolding RNNs}\label{deepUnfoldRNNs}
		We define the proposed reweighted-RNN, $\ell_1$-$\ell_1$-RNN, and Sista-RNN in more details as follows:
		
		\textbf{The proposed reweighted-RNN}. Let $\bh_t^{(l)}$ be the hidden states in layer $l$ evolving in time step $t$. We write reweighted-RNN recursively as $\bh_t^{(1)}=f_{\bcW,\bcU}^{(1)}(\bh_{t-1}^{(d)},\bx_t)=\varPhi(\mathbf{W}_{1}\bh_{t-1}^{(d)}+\mathbf{U}_1\bx_t)$ and $\bh_t^{(l)}=f_{\bcW,\bcU}^{(l)}(\bh_{t-1}^{(d)},\bx_t)=\varPhi\Big(\bW_l f_{\bcW,\bcU}^{(l-1)}(\bh_{t-1}^{(d)},\bx_t)+\mathbf{U}_l\bx_t\Big)$, where $\varPhi$ is an activation function. The hidden state is updated as depicted in \eqref{reweighted-l1-l1-RNN}.
		%	\begin{align}\label{deepreweighted-RNN}
		%	\bh_t^{(l)}\hspace{-2pt}=\hspace{-2pt}\left\{
		%	\begin{array}{l}
		%	\varPhi_{\frac{\lambda_1 }{c}\bg_1, \frac{\lambda_2 }{c}\bg_1, \bG\bh^{(d)}_{t-1}}\Big(\mathbf{W}_{1}\bh_{t-1}^{(d)}+\mathbf{U}_1\bx_t\Big),~\text{if}~l=1,\\
		%	\varPhi_{\frac{\lambda_1 }{c}\bg_l, \frac{\lambda_2 }{c}\bg_l, \bG\bh^{(d)}_{t-1}}\Big(\mathbf{W}_{l}\bh_{t}^{(l-1)} + \textbf{U}_l\bx_t\Big),~\text{if}~l>1,\\
		%	\end{array}
		%	\right.
		%	\end{align}	
		%	where $\varPhi$ can be an element-wise nonlinearity function, e.g., soft thresholding functions as in \eqref{stackedRNN-Output} and $\bx_t$ is an input at time $t$. 
		The real-valued family of functions, $\mathcal{F}_{d,t}:\mathbb{R}^h\times \mathbb{R}^n\mapsto \mathbb{R}$, for the functions $f_{\bcW,\bU}^{(d)}$ in layer $d$ is defined by: 
		\begin{align}\label{deepreweighted-RNNFamily}
		\mathcal{F}_{d,t}=\Big\{& (\bh_{t-1}^{(d)},\bx_t) \mapsto \varPhi(\bw_{d}^{\mathrm{T}} f_{\bcW,\bcU}^{(d-1)}(\bh_{t-1}^{(d)},\bx_t)+\bu_d^{\mathrm{T}}\bx_t):
		\nonumber
		\\
		&\|\bW_d\|_{1,\infty}\leq \alpha_d ,~\|\bU_d\|_{1,\infty}\leq \beta_d \Big\},
		\end{align}
		where $\alpha_{l},\beta_l$ are nonnegative hyper-parameters for layer $l$, where~ $1< l\leq d$. In layer $l=1$,	the real-valued family of functions, $\mathcal{F}_{1,t}:\mathbb{R}^h\times \mathbb{R}^n\mapsto \mathbb{R}$, for the functions $f_{\bcW,\bcU}^{(1)}$ is defined by: 
		\begin{align}\label{deepreweighted-RNNFamilyLayer1}
		\mathcal{F}_{1,t}=\Big\{ &(\bh_{t-1}^{(d)},\bx_t) \mapsto \varPhi(\bw_{1}^{\mathrm{T}}\bh_{t-1}^{(d)}+\bu_1^{\mathrm{T}}\bx_t):
		\nonumber
		\\
		& \|\bW_1\|_{1,\infty}\leq \alpha_1 ,~\|\bU\|_{1,\infty}\leq \beta_1 \Big\},
		\end{align}
		where $\alpha_{1},\beta_1$ are nonnegative hyper-parameters. We denote the input layer as $f_{\bcW,\bcU}^{(0)}=\bh_{t-1}^{(d)}$, in particular, at $t=1$, $\bh_0^{(l)}\equiv \bh_0$.
		
		\textbf{$\ell_1$-$\ell_1$-RNN}. The hidden state $\bh_t^{(l)}$ is updated as shown in \eqref{l1-l1-RNN}. The real-valued family of functions, $\mathcal{F}_{d,t}:\mathbb{R}^h\times \mathbb{R}^n\mapsto \mathbb{R}$, for the function $f_{\bcW,\bU}^{(d)}$ in layer $d$ is defined by: 
		\begin{align}\label{deepl1-l1RNNFamily}
		\mathcal{F}_{d,t}=\Big\{& (\bh_{t-1}^{(d)},\bx_t) \mapsto \varPhi(\bw_{2}^{\mathrm{T}} f_{\bcW,\bcU}^{(d-1)}(\bh_{t-1}^{(d)},\bx_t)+\bu_1^{\mathrm{T}}\bx_t): 
		\nonumber
		\\
		&\|\bW_2\|_{1,\infty}\leq \alpha_2 ,~\|\bU_1\|_{1,\infty}\leq \beta_1 \Big\},
		\end{align}
		where $\alpha_{2},\beta_1$ are nonnegative hyper-parameters for layer $l$, where~ $1< l\leq d$. In layer $l=1$,	the real-valued family of functions, $\mathcal{F}_{1,t}:\mathbb{R}^h\times \mathbb{R}^n\mapsto \mathbb{R}$, for the functions $f_{\bcW,\bcU}^{(1)}$ is defined by: 
		\begin{align}\label{deepl1-l1RNNFamilyLayer1}
		\mathcal{F}_{1,t}=\Big\{& (\bh_{t-1}^{(d)},\bx_t) \mapsto \varPhi(\bw_{1}^{\mathrm{T}}\bh_{t-1}^{(d)}+\bu_1^{\mathrm{T}}\bx_t):
		\nonumber
		\\
		& \|\bW_1\|_{1,\infty}\leq \alpha_1 ,~\|\bU\|_{1,\infty}\leq \beta_1 \Big\},
		\end{align}
		where $\alpha_{1},\beta_1$ are nonnegative hyper-parameters.
		
		\textbf{Sista-RNN}.	The hidden state $\bh_t^{(l)}$ is updated by:
		\begin{equation}\label{deepl1-l2RNN}
		\bh_t^{(l)}\hspace{-2pt}=\hspace{-2pt}\left\{
		\begin{array}{l}
		\phi\Big(\mathbf{W}_{1}\bh_{t-1}^{(d)}+\mathbf{U}_1\bx_t\Big),~~~~~~~~~~~~~~~~~~~~~~l=1,\\
		\phi\Big(\bW_2\bh_{t}^{(l-1)}+\mathbf{U}_1\bx_t+\bU_2\bh_{t-1}^{(d)}\Big),~l>1,\\
		\end{array}
		\right.
		\end{equation}
		
		The real-valued family of functions, $\mathcal{F}_{d,t}:\mathbb{R}^h\times \mathbb{R}^n\mapsto \mathbb{R}$, for the functions $f_{\bcW,\bU}^{(d)}$ in layer $d$ is defined by: 
		\begin{align}\label{deepl1-l2RNNFamily}
		\mathcal{F}_{d,t}&
		\nonumber
		\\
		=\Big\{& (\bh_{t-1}^{(d)},\bx_t) \mapsto \phi\Big(\bw_{2}^{\mathrm{T}} f_{\bcW,\bcU}^{(d-1)}(\bh_{t-1}^{(d)},\bx_t)+\bu_1^{\mathrm{T}}\bx_t+\bu_2^{\mathrm{T}}\bh_{t-1}^{(d)}\Big) \nonumber\\
		&~:\|\bW_2\|_{1,\infty}\leq \alpha_2,\|\bU\|_{1,\infty}\leq \beta_1,~\|\bU\|_{2,\infty}\leq \beta_2 \Big\},
		\end{align}
		where $\alpha_{2}, \beta_1,\beta_2$ are nonnegative hyper-parameters. In layer $l=1$,	
		\begin{align}\label{deepl1-l2RNNFamilyLayer1}
		\mathcal{F}_{1,t}=\Big\{& (\bh_{t-1}^{(d)},\bx_t) \mapsto \phi\Big(\bw_{1}^{\mathrm{T}} \bh_{t-1}^{(d)}+\bu_1^{\mathrm{T}}\bx_t\Big): 
		\nonumber
		\\
		&\|\bW_1\|_{1,\infty}\leq \alpha_1 ,~\|\bU\|_{1,\infty}\leq \beta_1 \Big\},
		\end{align}
		where $\alpha_{1}, \beta_1$ are nonnegative hyper-parameters.

	\section{Supports for Rademacher complexity calculus}
	The contraction lemma in \cite{ShwartzBook14} shows the Rademacher complexity of the composition of a class of functions with $\rho$-Lipschitz functions.
	\begin{lemma}\label{ContractionLemma}\cite[Lemma 26.9---Contraction lemma]{ShwartzBook14}\\
		Let $\mathcal{F}$ be a set of functions, $\mathcal{F}=\{f:\mathcal{X}\mapsto \mathbb{R}\}$, and $\varPhi_1,...,\varPhi_m$, $\rho$-Lipschitz functions, namely, $|\varPhi_i(\alpha)-\varPhi_i(\beta)|\leq \rho|\alpha - \beta|$ for all $\alpha,\beta \in \mathbb{R}$ for some $\rho>0$. For any sample set $S$ of $m$ points $\bx_1,...,\bx_m \in \mathcal{X}$, let $(\varPhi\boldsymbol{\circ} f)(\bx_i)=\varPhi(f(\bx_i))$. Then,
		\begin{align}\label{contraction}
		\frac{1}{m} \underset{\bep\in \{\pm 1\}^m}{\mathbb{E}}& \Bigg[\sup_{f \in \mathcal{F}}\sum_{i=1}^{m}\epsilon_i (\varPhi\boldsymbol{\circ} f)(\bx_i)\Bigg]\nonumber\\
		&\leq
		\frac{\rho}{m} \underset{\bep\in \{\pm 1\}^m}{\mathbb{E}} \Bigg[\sup_{f \in \mathcal{F}}\sum_{i=1}^{m}\epsilon_i f(\bx_i)\Bigg],
		\end{align}
		alternatively, $\mathfrak{R}_S(\boldsymbol{\varPhi}\boldsymbol{\circ} \mathcal{F}) \leq \rho\mathfrak{R}_S(\mathcal{F})$, where $\boldsymbol{\varPhi}$ denotes $\varPhi_1(\bx_1),...,\varPhi_m(\bx_m)$ for $S$.
	\end{lemma}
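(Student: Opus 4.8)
The plan is to prove the inequality by peeling off the composed functions one Rademacher coordinate at a time, which reduces the whole statement to a single-coordinate replacement. It suffices to show that, holding $\epsilon_2,\dots,\epsilon_m$ fixed and averaging only over $\epsilon_1\in\{\pm1\}$, one may replace $\varPhi_1\boldsymbol{\circ} f$ by $\rho f$ in the first summand without decreasing the value; iterating this replacement across all $m$ coordinates and then pulling the positive constant $\rho$ out of the supremum yields the claimed bound $\mathfrak{R}_S(\boldsymbol{\varPhi}\boldsymbol{\circ}\mathcal{F})\le\rho\,\mathfrak{R}_S(\mathcal{F})$.

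First I would introduce, for a fixed realization of $\epsilon_2,\dots,\epsilon_m$, the $f$-dependent remainder $a(f)=\sum_{i\ge2}\epsilon_i\varPhi_i(f(\bx_i))$ and rewrite the average over the single sign $\epsilon_1$ as
\begin{equation}
\mathbb{E}_{\epsilon_1}\Big[\sup_{f}\big(\epsilon_1\varPhi_1(f(\bx_1))+a(f)\big)\Big]=\tfrac12\sup_{f,f'}\big(\varPhi_1(f(\bx_1))-\varPhi_1(f'(\bx_1))+a(f)+a(f')\big),
\end{equation}
where the supremum now ranges over ordered pairs $(f,f')\in\mathcal{F}\times\mathcal{F}$. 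This ``two-function'' rewriting of an average of two suprema is the standard device that exposes a \emph{difference} of activation values, on which the Lipschitz hypothesis can act.

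Next I would apply the $\rho$-Lipschitz property $\varPhi_1(f(\bx_1))-\varPhi_1(f'(\bx_1))\le\rho\,|f(\bx_1)-f'(\bx_1)|$ and then remove the absolute value: since the supremum is over all ordered pairs and the remainder $a(f)+a(f')$ is symmetric under the swap $f\leftrightarrow f'$, for each pair one may relabel so that $f(\bx_1)-f'(\bx_1)\ge0$, whence the supremum with $\rho\,|f(\bx_1)-f'(\bx_1)|$ equals the supremum with the signed quantity $\rho\big(f(\bx_1)-f'(\bx_1)\big)$. Reversing the two-function identity then gives
\begin{equation}
\mathbb{E}_{\epsilon_1}\Big[\sup_{f}\big(\epsilon_1\varPhi_1(f(\bx_1))+a(f)\big)\Big]\le\mathbb{E}_{\epsilon_1}\Big[\sup_{f}\big(\rho\,\epsilon_1 f(\bx_1)+a(f)\big)\Big],
\end{equation}
which is exactly the single-coordinate replacement. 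Taking expectation over $\epsilon_2,\dots,\epsilon_m$ and repeating the argument successively for $\varPhi_2,\dots,\varPhi_m$ (each time absorbing the already-replaced terms into the remainder $a(f)$) leaves $\mathbb{E}_{\bep}[\sup_f\sum_i\rho\,\epsilon_i f(\bx_i)]=\rho\,\mathbb{E}_{\bep}[\sup_f\sum_i\epsilon_i f(\bx_i)]$, and dividing by $m$ gives the stated inequality.

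I expect the delicate point to be the justification for dropping the absolute value: one must argue carefully that the symmetry of the ordered-pair supremum, combined with the symmetry of $a(f)+a(f')$, legitimately converts $|f(\bx_1)-f'(\bx_1)|$ into the \emph{signed} difference, rather than merely bounding it from above in a lossy way. A secondary technical matter is that the suprema need not be attained, so all manipulations---in particular splitting the supremum of a sum into a supremum over pairs---should be justified via approximating sequences; this is routine but must be stated to keep the argument rigorous.
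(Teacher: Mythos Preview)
Your proof is correct and is precisely the standard peeling argument for the contraction lemma. Note that the paper does not actually prove this statement: it merely quotes it from \cite[Lemma~26.9]{ShwartzBook14} as a supporting tool, so there is no in-paper proof to compare against. Your argument coincides with the textbook proof the paper cites, including the two-function rewriting of the single-coordinate average and the symmetry trick for removing the absolute value.
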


	\begin{proposition}\cite[Proposition A.1---H\"{o}lder's inequality]{MohriBook18}\label{HolderIneq}\\
		Let $p,q \geq 1$ be conjugate: $\frac{1}{p}+\frac{1}{q}=1$. Then, for all $\bx,\by \in \mathbb{R}^n$,
		\begin{equation}\label{HolderIneqEQ}
		\|\bx\cdot\by\|_1\leq \lVert\bx\rVert_p\lVert\by\rVert_q,
		\end{equation}
		with the equality when $|\mathrm{y}_i|=|\mathrm{x}_i|^{p-1}$ for all $i\in[1,n]$. 	 
		
	\end{proposition}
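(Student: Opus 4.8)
The plan is to reduce the inequality to a one-dimensional (pointwise) estimate and then sum. The workhorse is \emph{Young's inequality}: for nonnegative reals $a,b$ and conjugate exponents $p,q$, one has $ab \leq \frac{a^p}{p} + \frac{b^q}{q}$, with equality precisely when $a^p = b^q$. First I would dispose of the degenerate cases. If either $\lVert\bx\rVert_p = 0$ or $\lVert\by\rVert_q = 0$, then $\bx$ or $\by$ is the zero vector and both sides of~\eqref{HolderIneqEQ} vanish; and if $p=1$ (so $q=\infty$), the claim is immediate from $\sum_i |\mathrm{x}_i\,\mathrm{y}_i| \leq \big(\max_j |\mathrm{y}_j|\big)\sum_i |\mathrm{x}_i|$. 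This leaves the main case $1<p<\infty$ with both norms strictly positive.

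Second, I would establish Young's inequality from the concavity of the logarithm. For positive $a,b$, applying concavity with weights $\frac1p,\frac1q$ to the two points $a^p,b^q$ gives $\log\!\big(\frac{a^p}{p} + \frac{b^q}{q}\big) \geq \frac1p\log a^p + \frac1q\log b^q = \log(ab)$, and exponentiating yields the inequality. Strict concavity of $\log$ forces equality exactly when $a^p = b^q$, which pins down the equality case.

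Third, I would normalize and sum. Setting $a_i = |\mathrm{x}_i|/\lVert\bx\rVert_p$ and $b_i = |\mathrm{y}_i|/\lVert\by\rVert_q$, Young's inequality applied coordinatewise gives $\frac{|\mathrm{x}_i\,\mathrm{y}_i|}{\lVert\bx\rVert_p\lVert\by\rVert_q} \leq \frac{|\mathrm{x}_i|^p}{p\,\lVert\bx\rVert_p^p} + \frac{|\mathrm{y}_i|^q}{q\,\lVert\by\rVert_q^q}$. Summing over $i\in[1,n]$ and using $\sum_i |\mathrm{x}_i|^p = \lVert\bx\rVert_p^p$ and $\sum_i |\mathrm{y}_i|^q = \lVert\by\rVert_q^q$, the right-hand side collapses to $\frac1p + \frac1q = 1$, whence $\lVert\bx\cdot\by\rVert_1 = \sum_i |\mathrm{x}_i\,\mathrm{y}_i| \leq \lVert\bx\rVert_p\lVert\by\rVert_q$.

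Finally, for the equality condition I would trace the equality case of Young through the sum: equality holds iff $a_i^p = b_i^q$ for every $i$, i.e.\ $|\mathrm{y}_i|^q/\lVert\by\rVert_q^q = |\mathrm{x}_i|^p/\lVert\bx\rVert_p^p$. Since $\frac1p+\frac1q=1$ gives $p/q = p-1$, this proportionality reads $|\mathrm{y}_i| \propto |\mathrm{x}_i|^{p-1}$, which reduces to the stated $|\mathrm{y}_i| = |\mathrm{x}_i|^{p-1}$ under the accompanying normalization $\lVert\by\rVert_q^q = \lVert\bx\rVert_p^p$. The only genuinely delicate step is the proof of Young's inequality together with the careful bookkeeping of its equality case across the summation; the remainder is routine normalization and termwise addition.
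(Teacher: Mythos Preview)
Your proof is correct and is the standard textbook derivation of H\"older's inequality via Young's inequality and normalization. Note, however, that the paper does not supply its own proof of this proposition at all: it is simply quoted from \cite{MohriBook18} as a supporting tool for the Rademacher-complexity calculations, so there is nothing to compare against. Your argument stands on its own as a complete and correct proof of the cited statement.
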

	
	\textbf{Supported inequalities}:
	\begin{itemize}
		\item [(i)]	If A, B are sets of positive real numbers, then:
		\begin{equation}\label{supIneq}
		\sup(AB) = \sup(A)\cdot\sup(B).
		\end{equation}
		\item[(ii)] 		Given $x \in \mathbb{R}$, we have:
		\begin{equation}\label{expIneq}
		\frac{\exp(x)+\exp(-x)}{2}\leq \exp(x^2/2).
		\end{equation}		
		\item [(iii)]		Let $X$ and $Y$ be random variables, the Cauchy–Bunyakovsky–Schwarz inequality gives:
		\begin{equation}\label{expectIneq}
		(\mathbb{E}[XY])^2 \leq \mathbb{E}[X^2]\cdot\mathbb{E}[Y^2].
		\end{equation}	
		\item 	[(iv)]	If $\psi$ is a convex function, the Jensen's inequality gives:
		\begin{equation}\label{expectConvexIneq}
		\psi(\mathbb{E}[X]) \leq \mathbb{E}[\psi(X)]. 
		\end{equation}			
	\end{itemize}

%\appendix
%\pagenumbering{roman}
\onecolumn
%	\textbf{\large Supplement: A Deep Recurrent Neural Network via Unfolding Reweighted $\ell_1$-$\ell_1$ Minimization}
%	\hrule

\section{Proof of Theorem \ref{GE-reweighted-RNN-Theorem}}\label{deepreweighted-RNNTheoremProofTime}
\begin{proof}
	%For simplicity, we omit indices of $\bh_{t-1}^{(d)},\bx_t$ as $\bh,\bx$ overall the following proof. 
	We consider the real-valued family of functions $\mathcal{F}_{d,T}:\mathbb{R}^h\times \mathbb{R}^n\mapsto \mathbb{R}$ for the functions $f_{\bcW,\bU}^{(d)}$ to update $\bh_T^{(d)}$  in layer $d$, time step $T$, defined as
	\begin{equation}\label{deepreweighted-RNNFamilyTime}
	\mathcal{F}_{d,T}=\Big\{ (\bh_{T-1}^{(d)},\bx_T)\mapsto \varPhi(\bw_{d}^{\mathrm{T}} f_{\bcW,\bcU}^{(d-1)}(\bh_{T-1}^{(d)},\bx_T)+\bu_d^{\mathrm{T}}\bx_T): \|\bW_d\|_{1,\infty}\leq \alpha_d ,~\|\bU_d\|_{1,\infty}\leq \beta_d \Big\},
	\end{equation}
	where $\bw_{d},\bu_d$ are the corresponding rows from $\bW_d,\bU_d $, respectively, and $\alpha_{l},\beta_l$, with $1<l\leq d$, are nonnegative hyper-parameters. For the first layer and the first time step, i.e., $l=1$,	$t=1$, the real-valued family of functions, $\mathcal{F}_{1,1}:\mathbb{R}^h\times \mathbb{R}^n\mapsto\mathbb{R}$, for the functions $f_{\bcW,\bcU}^{(1)}$ is defined by: 
	\begin{equation}\label{deepreweighted-RNNFamilyLayer1Time}
	\mathcal{F}_{1,1}=\Big\{ (\bh_{0},\bx_1) \mapsto \varPhi(\bw_{1}^{\mathrm{T}}\bh_{0}+\bu_1^{\mathrm{T}}\bx_1): \|\bW_1\|_{1,\infty}\leq \alpha_1 ,~\|\bU\|_{1,\infty}\leq \beta_1 \Big\},
	\end{equation}	
	where $\alpha_{1},\beta_1$ are nonnegative hyper-parameters. We denote the input layer as $f_{\bcW,\bcU}^{(0)}=\bh_{0}$ at the first time step. From the definition of Rademacher complexity in \eqref{empiricalRademacher} and the family of functions in \eqref{deepreweighted-RNNFamilyTime} and \eqref{deepreweighted-RNNFamilyLayer1Time}, we obtain:
	\begin{subequations}
		\begin{align}
		m\mathfrak{R}_S(\mathcal{F}_{d,T})&\leq
		\underset{\bep\in \{\pm 1\}^m}{\mathbb{E}} \Bigg[\underset{\lVert \bu_{d} \rVert_1\leq \beta_{d} }{\underset{{\lVert \bw_{d} \rVert_1\leq \alpha_{d}}}{\underset{\bcW,\bcU}{\sup}}}\sum_{i=1}^{m}\epsilon_i \varPhi\Big(\bw_{d}^{\mathrm{T}} f_{\bcW,\bcU}^{(d-1)}(\bh_{T-1,i},\bx_{T,i})+\bu_d^{\mathrm{T}}\bx_{T,i}\Big)\Bigg]
		\nonumber\\
		&\leq	\frac{1}{\lambda}\log\exp\Bigg(\underset{\bep\in \{\pm 1\}^m}{\mathbb{E}} \Bigg[\underset{\lVert \bu_{d} \rVert_1\leq \beta_{d} }{\underset{{\lVert \bw_{d} \rVert_1\leq \alpha_{d}}}{\underset{\bcW,\bcU}{\sup}}}\lambda\sum_{i=1}^{m}\epsilon_i \Big(\bw_{d}^{\mathrm{T}} f_{\bcW,\bcU}^{(d-1)}(\bh_{T-1,i},\bx_{T,i})+\bu_d^{\mathrm{T}}\bx_{T,i}\Big)\Bigg]\Bigg) \nonumber
		\\ 
		&\leq
		\frac{1}{\lambda}\log \underset{\bep\in \{\pm 1\}^m}{\mathbb{E}}\Bigg[\underset{\lVert \bu_{d} \rVert_1\leq \beta_{d} }{\underset{{\lVert \bw_{d} \rVert_1\leq \alpha_{d}}}{\underset{\bcW,\bcU}{\sup}}}\exp\Bigg(\lambda \sum\limits_{i=1}^{m}\epsilon_i \Big(\bw_{d}^{\mathrm{T}} f_{\bcW,\bcU}^{(d-1)}(\bh_{T-1,i},\bx_{T,i})\Big)+\lambda \sum\limits_{i=1}^{m}\epsilon_i  \bu_d^{\mathrm{T}}\bx_{T,i}\Bigg)\Bigg]\label{RCRNNProofEQsubLayerd-1Time1}
		\\ 
		&\leq
		\frac{1}{\lambda}\log \underset{\bep\in \{\pm 1\}^m}{\mathbb{E}}\Bigg[\underset{{\lVert \bw_{d} \rVert_1\leq \alpha_{d}}}{\underset{\bcW,\bcU}{\sup}}\exp\Bigg(\lambda \sum\limits_{i=1}^{m}\epsilon_i \Big(\bw_{d}^{\mathrm{T}} f_{\bcW,\bcU}^{(d-1)}(\bh_{T-1,i},\bx_{T,i})\Big)\Bigg) \underset{\|\bu_d\|_1\leq\beta_d}{\sup}\exp\Bigg(\lambda \sum\limits_{i=1}^{m}\epsilon_i \bu_d^{\mathrm{T}}\bx_{T,i}\Bigg)\Bigg], \label{RCRNNProofEQsubLayerd-1Time2}
		\end{align}
	\end{subequations}	
	where $\lambda>0$ is an arbitrary parameter, Eq. \eqref{RCRNNProofEQsubLayerd-1Time1} follows Lemma \ref{ContractionLemma} for 1-Lipschitz $\varPhi$ a long with Inequality \eqref{expectConvexIneq}, and \eqref{RCRNNProofEQsubLayerd-1Time2} holds by Inequality \eqref{supIneq}.
	
	For layer $1\leq l\leq d$ and time step $t$, let us denote: 
	\begin{align}
	\varDelta_{\bh_{t-1},\bx_t}^{(l)}&=\underset{{\lVert \bw_{l} \rVert_1\leq \alpha_{l}}}{\underset{\bcW,\bcU}{\sup}}\exp\Bigg(\lambda\varLambda_l \sum\limits_{i=1}^{m}\epsilon_i \Big(\bw_{l}^{\mathrm{T}} f_{\bcW,\bcU}^{(l-1)}(\bh_{t-1,i},\bx_{t,i})\Big)\Bigg),\label{DeltaHTime}\\
	\varDelta_{\bx_t}^{(l)}&=\underset{\|\bu_l\|_1\leq\beta_l}{\sup}\exp\Bigg(\lambda \varLambda_l\sum\limits_{i=1}^{m}\epsilon_i \Big(\bu_l^{\mathrm{T}}\bx_{t,i}\Big)\Bigg),\label{DeltaXTime}
	\end{align}
	where $\varLambda_l$ is defined as follows: $\varLambda_d=1$, $\varLambda_l=\prod\limits_{k=l+1}^{d}\alpha_k$ with $~1\leq l\leq d-1$, and $\varLambda_0=\prod\limits_{k=1}^{d}\alpha_k$.
	
	Following the H\"{o}lder's inequality in \eqref{HolderIneqEQ} in case of $p=1$ and $q=\infty$ applied to $\bw_{l}^{\mathrm{T}}$ and $f_{\bcW,\bcU}^{(l-1)}(\bh_{t-1,i},\bx_{t,i})$ in \eqref{DeltaHTime}, respectively, we get:
	%\begin{subequations}
	\begin{align}\label{RCRNNProofEQsubLayerd-HTime}
	\varDelta_{\bh_{t-1},\bx_t}^{(d)}
	&\leq 
	\underset{\lVert \bU_{d-1} \rVert_{1,\infty}\leq \beta_{d-1} }{\underset{{\lVert \bW_{d-1} \rVert_{1,\infty}\leq \alpha_{d-1}}}{\underset{\bcW,\bcU}{\sup}}}\exp\Bigg(\lambda \alpha_d\Bigg\|\sum_{i=1}^{m}\epsilon_i \varPhi\Big(\bW_{d-1} f_{\bcW,\bcU}^{(d-2)}(\bh_{t-1,i},\bx_{t,i})+\mathbf{U}_{d-1}\bx_{t,i}\Big)\Bigg\|_\infty\Bigg)\nonumber
	\\
	&\leq 
	\underset{\lVert \bu_{d-1,k} \rVert_1\leq \beta_{d-1} }{\underset{{\lVert \bw_{d-1,k} \rVert_1\leq \alpha_{d-1}}}{\underset{\bcW,\bcU}{\sup}}}\exp\Bigg(\lambda \alpha_d\underset{~k\in \{1,\cdots,h\}}{\max}\Bigg|\sum_{i=1}^{m}\epsilon_i \varPhi\Big(\bw_{d-1,k}^{\mathrm{T}} f_{\bcW,\bcU}^{(d-2)}(\bh_{t-1,i},\bx_{t,i})+\bu_{d-1,k}^{\mathrm{T}}\bx_{t,i}\Big)\Bigg|\Bigg)\nonumber
	\\
	&\leq 
	\underset{\lVert \bu_{d-1,k} \rVert_1\leq \beta_{d-1} }{\underset{{\lVert \bw_{d-1,k} \rVert_1\leq \alpha_{d-1}}}{\underset{\bcW,\bcU}{\sup}}}\exp\Bigg(\lambda\alpha_d\Bigg|\sum_{i=1}^{m}\epsilon_i \varPhi\Big(\bw_{d-1,k}^{\mathrm{T}} f_{\bcW,\bcU}^{(d-2)}(\bh_{t-1,i},\bx_{t,i})+\bu_{d-1,k}^{\mathrm{T}}\bx_{t,i}\Big)\Bigg|\Bigg).
	\end{align}
	%\end{subequations}	

	Similarly, from \eqref{DeltaXTime}, we obtain:
	%\begin{subequations}
	\begin{align}\label{RCRNNProofEQsubLayerd-XTime}
	\varDelta_{\bx_t}^{(d)}
	&\leq
	\underset{\|\bu_d\|_1\leq\beta_{d}}{\sup}\exp\Bigg(\lambda\sum_{i=1}^{m}\epsilon_i \bu_d^{\mathrm{T}}\bx_{t,i}\Bigg)
	\leq
	\exp\Bigg(\lambda\beta_d\Big\|\sum_{i=1}^{m}\epsilon_i \bx_{t,i}\Big\|_\infty\Bigg)\leq
	\exp\Bigg(\lambda\beta_d\Big|\sum_{i=1}^{m}\epsilon_i \mathrm{x}_{\tau,i,\kappa}\Big|\Bigg),
	\end{align}
	%\end{subequations}	
	where $\{\tau,\kappa\}=\mathop{\mathrm{argmax}}\limits_{t\in\{1,\dots,T\},j\in\{1,\dots,n\}}\Big|\sum\limits_{i=1}^{m}\epsilon_i \mathrm{x}_{t,i,j}\Big|$.
	
	From \eqref{RCRNNProofEQsubLayerd-1Time2}, \eqref{RCRNNProofEQsubLayerd-HTime}, and \eqref{RCRNNProofEQsubLayerd-XTime}, we get:
	\begin{subequations}
		\begin{align}%\label{RCRNNProofEQsubLayerd-3Time}
	&	m\mathfrak{R}_S(\mathcal{F}_{d,T})
		\nonumber\\
		&\leq
		\frac{1}{\lambda}\log \Bigg(\underset{\bep\in \{\pm 1\}^m}{\mathbb{E}}\Bigg[\underset{\lVert \bu_{d-1,k} \rVert_1\leq \beta_{d-1}}{\underset{{\lVert \bw_{d-1,k} \rVert_1\leq \alpha_{d-1}}}{\underset{\bcW,\bcU}{\sup}}}\exp\Bigg(\lambda\alpha_d\Bigg|\sum_{i=1}^{m}\epsilon_i \varPhi\Big(\bw_{d-1,k}^{\mathrm{T}} f_{\bcW,\bcU}^{(d-2)}(\bh_{T-1,i},\bx_{T,i})+\bu_{d-1,k}^{\mathrm{T}}\bx_{T,i}\Big)\Bigg|
		+\lambda\beta_{d}\Big|\sum_{i=1}^{m}\epsilon_i \mathrm{x}_{\tau,i,\kappa}\Big|\Bigg)\Bigg]\Bigg)\nonumber
		\\
		&\leq
		\frac{1}{\lambda}\log \Bigg(\underset{\bep\in \{\pm 1\}^m}{\mathbb{E}}\Bigg[\underset{\lVert \bu_{d-1,k} \rVert_1\leq \beta_{d-1}}{\underset{{\lVert \bw_{d-1,k} \rVert_1\leq \alpha_{d-1}}}{\underset{\bcW,\bcU}{\sup}}}\Bigg(\exp\Bigg(\lambda\alpha_d\sum_{i=1}^{m}\epsilon_i \varPhi\Big(\bw_{d-1,k}^{\mathrm{T}} f_{\bcW,\bcU}^{(d-2)}(\bh_{T-1,i},\bx_{T,i})+\bu_{d-1,k}^{\mathrm{T}}\bx_{T,i}\Big)
		+\lambda\beta_{d}\sum_{i=1}^{m}\epsilon_i \mathrm{x}_{\tau,i,\kappa}\Bigg)\nonumber
		\\
		&~~~~~+\exp\Bigg(\lambda\alpha_d\sum_{i=1}^{m}\epsilon_i \varPhi\Big(\bw_{d-1,k}^{\mathrm{T}} f_{\bcW,\bcU}^{(d-2)}(\bh_{T-1,i},\bx_{T,i})+\bu_{d-1,k}^{\mathrm{T}}\bx_{T,i}\Big)
		-\lambda\beta_{d}\sum_{i=1}^{m}\epsilon_i \mathrm{x}_{\tau,i,\kappa}\Bigg)\nonumber
		\\
		&~~~~~+
		\exp\Bigg(-\lambda\alpha_d\sum_{i=1}^{m}\epsilon_i \varPhi\Big(\bw_{d-1,k}^{\mathrm{T}} f_{\bcW,\bcU}^{(d-2)}(\bh_{T-1,i},\bx_{T,i})+\bu_{d-1,k}^{\mathrm{T}}\bx_{T,i}\Big)
		+\lambda\beta_{d}\sum_{i=1}^{m}\epsilon_i \mathrm{x}_{\tau,i,\kappa}\Bigg)\nonumber
		\\
		&~~~~~+
		\exp\Bigg(-\lambda\alpha_d\sum_{i=1}^{m}\epsilon_i \varPhi\Big(\bw_{d-1,k}^{\mathrm{T}} f_{\bcW,\bcU}^{(d-2)}(\bh_{T-1,i},\bx_{T,i})+\bu_{d-1,k}^{\mathrm{T}}\bx_{T,i}\Big)
		-\lambda\beta_{d}\sum_{i=1}^{m}\epsilon_i \mathrm{x}_{\tau,i,\kappa}\Bigg)\Bigg)\Bigg]\Bigg)\nonumber
		\\
		&\leq
		\frac{1}{\lambda}\log \Bigg(4\underset{\bep\in \{\pm 1\}^m}{\mathbb{E}}\Bigg[\varDelta_{\bh_{T-1},\bx_T}^{(d-1)}\varDelta_{\bx_T}^{(d-1)}\exp\Big(\beta_d\lambda\sum_{i=1}^{m}\epsilon_i \mathrm{x}_{\tau,i,\kappa}\Big)\Bigg]\Bigg)
		\label{RCRNNProofEQsubLayerd-3Time1}
		\\
		&\leq
		\frac{1}{\lambda}\log \Bigg(4^{d-1}\underset{\bep\in \{\pm 1\}^m}{\mathbb{E}}\Bigg[\varDelta_{\bh_{T-1},\bx_T}^{(1)}\varDelta_{\bx_T}^{(1)}\exp\Bigg(\lambda\Big(\sum\limits_{l=2}^{d}\beta_l\varLambda_l\Big)\sum_{i=1}^{m}\epsilon_i \mathrm{x}_{\tau,i,\kappa}\Bigg)\Bigg]\Bigg)
		\label{RCRNNProofEQsubLayerd-3Time2}
		\\
		&\leq
		\frac{1}{\lambda}\log \Bigg(4^{d-1}\underset{\bep\in \{\pm 1\}^m}{\mathbb{E}}\Bigg[\exp\Bigg(\lambda\Big(\sum\limits_{l=2}^{d}\beta_l\varLambda_l\Big)\sum_{i=1}^{m}\epsilon_i \mathrm{x}_{\tau,i,\kappa}\Bigg)\underset{{\lVert \bw_{1} \rVert_1\leq \alpha_{1}}}{\sup}\exp\Bigg(\lambda\varLambda_{1} \sum\limits_{i=1}^{m}\epsilon_i \Big(\bw_{1}^{\mathrm{T}} \bh_{T-1,i}\Big)\Bigg)
		\nonumber
		\\
		&~~~~~\cdot\underset{\|\bu_1\|_1\leq\beta_1}{\sup}\exp\Bigg(\lambda \varLambda_{1}\sum\limits_{i=1}^{m}\epsilon_i \Big(\bu_1^{\mathrm{T}}\bx_{T,i}\Big)\Bigg)\Bigg]\Bigg)
		\label{RCRNNProofEQsubLayerd-3Time3}
		\\
		&\leq
		\frac{1}{\lambda}\log \Bigg(4^{d-1}\underset{\bep\in \{\pm 1\}^m}{\mathbb{E}}\Bigg[\exp\Bigg(\lambda\Big(\sum\limits_{l=2}^{d}\beta_l\varLambda_l\Big)\sum_{i=1}^{m}\epsilon_i \mathrm{x}_{\tau,i,\kappa}\Bigg)\underset{\lVert \bu_{d} \rVert_1\leq \beta_{d} }{\underset{{\lVert \bw_{d} \rVert_1\leq \alpha_{d}}}{\underset{\bcW,\bcU}{\sup}}}\exp\Bigg(\lambda\varLambda_{0} \Big\|\sum\limits_{i=1}^{m}\epsilon_i \bh_{T-1,i}\Big\|_\infty\Bigg)\nonumber\\
		&~~~~~\cdot\exp\Bigg(\lambda\beta_1 \varLambda_{1}\Big\|\sum\limits_{i=1}^{m}\epsilon_i \bx_{T,i}\Big\|_\infty\Big)\Bigg)\Bigg]\Bigg)
		\label{RCRNNProofEQsubLayerd-3Time4}
		\\
		&\leq
		\frac{1}{\lambda}\log \Bigg(4^{d}\underset{\bep\in \{\pm 1\}^m}{\mathbb{E}}\Bigg[\exp\Bigg(\lambda\Big(\sum\limits_{l=1}^{d}\beta_l\varLambda_l\Big)\sum_{i=1}^{m}\epsilon_i \mathrm{x}_{\tau,i,\kappa}\Bigg)
		\nonumber\\
		&~~~~~\cdot\underset{\lVert \bu_{d} \rVert_1\leq \beta_{d} }{\underset{{\lVert \bw_{d} \rVert_1\leq \alpha_{d}}}{\underset{\bcW,\bcU}{\sup}}} \exp\Bigg(\lambda\varLambda_{0}\sum_{i=1}^{m}\epsilon_i \varPhi\Big(\bw_{d}^{\mathrm{T}} f_{\bcW,\bcU}^{(d-1)}(\bh_{T-2,i},\bx_{T-1,i})+\bu_d^{\mathrm{T}}\bx_{T-1,i}\Big)\Bigg) \Bigg]\Bigg),	\label{RCRNNProofEQsubLayerd-3Time5}
		\end{align}
	\end{subequations}
	where \eqref{RCRNNProofEQsubLayerd-3Time1} holds by Inequality \eqref{supIneq} and \eqref{RCRNNProofEQsubLayerd-3Time2} follows by repeating the process from layer $d-1$ to layer 1 for time step $T$. Furthermore, \eqref{RCRNNProofEQsubLayerd-3Time3} is returned as the beginning of the process for time step $T-1$ and \eqref{RCRNNProofEQsubLayerd-3Time4} follows Inequality \eqref{HolderIneqEQ}. 
	
	Proceeding by repeating the above procedure in \eqref{RCRNNProofEQsubLayerd-3Time5} from time step $T-1$ to time step $1$, we get:	
	\begin{align}\label{RCRNNProofEQsubLayerd-4-0Time}
	m\mathfrak{R}_S(\mathcal{F}_{d,T})&\leq
	\frac{1}{\lambda}\log \Bigg(4^{dT}\underset{\bep\in \{\pm 1\}^m}{\mathbb{E}}\Bigg[\exp\Bigg(\lambda\Big(\sum\limits_{l=1}^{d}\beta_l\varLambda_l\Big)\Big(\frac{\Lambda_0^T-1}{\Lambda_0-1}\Big)\sum_{i=1}^{m}\epsilon_i \mathrm{x}_{\tau,i,\kappa}\Bigg)
	\exp\Bigg(\lambda\varLambda_{0}^T \Big\|\sum\limits_{i=1}^{m}\epsilon_i \bh_{0}\Big\|_\infty\Bigg]\Bigg).
	\end{align}

	Let us denote $\mu=\mathop{\mathrm{argmax}}\limits_{j\in\{1,\dots,h\}}\Big|\sum\limits_{i=1}^{m}\epsilon_i \mathrm{h}_{0,j}\Big|$, from \eqref{RCRNNProofEQsubLayerd-4-0Time}, we have:
	\begin{subequations}
		\begin{align}%\label{RCRNNProofEQsubLayerd-4Time}
	&	m\mathfrak{R}_S(\mathcal{F}_{d,T})
		\leq
		\frac{1}{\lambda}\log \Bigg(4^{dT}\underset{\bep\in \{\pm 1\}^m}{\mathbb{E}}\Bigg[\exp\Bigg(\lambda\Big(\sum\limits_{l=1}^{d}\beta_l\varLambda_l\Big)\Big(\frac{\Lambda_0^T-1}{\Lambda_0-1}\Big)\sum_{i=1}^{m}\epsilon_i \mathrm{x}_{\tau,i,\kappa}\Bigg)
		\exp\Bigg(\lambda\varLambda_{0}^T\sum\limits_{i=1}^{m}\epsilon_i \mathrm{h}_{0,\mu}\Bigg) \Bigg]\Bigg)
		\nonumber\\		
		&\leq
		\frac{2dT\log 2}{\lambda}+\frac{1}{2\lambda}\log \Bigg(\underset{\bep\in \{\pm 1\}^m}{\mathbb{E}}\Bigg[\exp\Bigg(\lambda\Big(\sum\limits_{l=1}^{d}\beta_l\varLambda_l\Big)\Big(\frac{\Lambda_0^T-1}{\Lambda_0-1}\Big)\sum_{i=1}^{m}\epsilon_i \mathrm{x}_{\tau,i,\kappa}\Bigg)
		\exp\Bigg(\lambda\varLambda_{0}^T\sum\limits_{i=1}^{m}\epsilon_i \mathrm{h}_{0,\mu}\Bigg) \Bigg]\Bigg)^2
		\nonumber
		\\
		&\leq
		\frac{2dT\log 2}{\lambda}+\frac{1}{2\lambda}\log \underset{\bep\in \{\pm 1\}^m}{\mathbb{E}}\Bigg[\exp\Bigg(2\lambda\Big(\sum\limits_{l=1}^{d}\beta_l\varLambda_l\Big)\Big(\frac{\Lambda_0^T-1}{\Lambda_0-1}\Big)\sum_{i=1}^{m}\epsilon_i \mathrm{x}_{\tau,i,\kappa}\Bigg)\Bigg]
		+\frac{1}{2\lambda}\log \underset{\bep\in \{\pm 1\}^m}{\mathbb{E}}\Bigg[\exp\Bigg(2\lambda\varLambda_{0}^T \sum\limits_{i=1}^{m}\epsilon_i \mathrm{h}_{0,\mu}\Bigg)\Bigg]
		\label{RCRNNProofEQsubLayerd-4Time1}
		\\
		&\leq
		\frac{2dT\log 2}{\lambda}+\frac{1}{2\lambda}\log \sum\limits_{j=1}^{n}\underset{\bep\in \{\pm 1\}^m}{\mathbb{E}}\Bigg[\exp\Bigg(2\lambda\Big(\sum\limits_{l=1}^{d}\beta_l\varLambda_l\Big)\Big(\frac{\Lambda_0^T-1}{\Lambda_0-1}\Big)\sum_{i=1}^{m}\epsilon_i \mathrm{x}_{\tau,i,j}\Bigg)\Bigg]
		\nonumber
		\\
		&~~~~~
		+\frac{1}{2\lambda}\log \sum\limits_{j=1}^{h}\underset{\bep\in \{\pm 1\}^m}{\mathbb{E}}\Bigg[\exp\Bigg(2\lambda\varLambda_{0}^T \sum\limits_{i=1}^{m}\epsilon_i \mathrm{h}_{0,j}\Bigg)\Bigg]
		\label{RCRNNProofEQsubLayerd-4Time2}
		\\
		&\leq
		\frac{2dT\log 2}{\lambda}+\frac{1}{2\lambda}\log \sum\limits_{j=1}^{n}\prod\limits_{i=1}^{m}\underset{\bep\in \{\pm 1\}^m}{\mathbb{E}}\Bigg[\exp\Bigg(2\lambda\Big(\sum\limits_{l=1}^{d}\beta_l\varLambda_l\Big)\Big(\frac{\Lambda_0^T-1}{\Lambda_0-1}\Big)\epsilon_i \mathrm{x}_{\tau,i,j}\Bigg)\Bigg]
		\nonumber
		\\
		&~~~~~
		+\frac{1}{2\lambda}\log \sum\limits_{j=1}^{h}\prod\limits_{i=1}^{m}\underset{\bep\in \{\pm 1\}^m}{\mathbb{E}}\Bigg[\exp\Bigg(2\lambda\varLambda_{0}^T\epsilon_i \mathrm{h}_{0,j}\Bigg)\Bigg]
		\nonumber
		\\
		&\leq
		\frac{2dT\log 2}{\lambda}+\frac{1}{2\lambda}\log \sum\limits_{j=1}^{n}\prod\limits_{i=1}^{m}\Bigg[\frac{1}{2}\exp\Bigg(2\lambda\Big(\sum\limits_{l=1}^{d}\beta_l\varLambda_l\Big) \Big(\frac{\Lambda_0^T-1}{\Lambda_0-1}\Big)\mathrm{x}_{\tau,i,j}\Bigg)
		+\frac{1}{2}\exp\Bigg(-2\lambda\Big(\sum\limits_{l=1}^{d}\beta_l\varLambda_l\Big) \Big(\frac{\Lambda_0^T-1}{\Lambda_0-1}\Big)\mathrm{x}_{\tau,i,j}\Bigg)\Bigg]
		\nonumber
		\\
		&~~~~~
		+\frac{1}{2\lambda}\log \sum\limits_{j=1}^{h}\prod\limits_{i=1}^{m}\Bigg[\frac{1}{2}\exp\Bigg(2\lambda\varLambda_{0}^T \mathrm{h}_{0,j}\Bigg)+\frac{1}{2}\exp\Bigg(-2\lambda\varLambda_{0}^T \mathrm{h}_{0,j}\Bigg)\Bigg]
		\nonumber
		\\
		&\leq
		\frac{2dT\log 2}{\lambda}+\frac{1}{2\lambda}\log \sum\limits_{j=1}^{n}\Bigg[\exp\Bigg(2\lambda^2\Big(\sum\limits_{l=1}^{d}\beta_l\varLambda_l\Big)^2 \Big(\frac{\Lambda_0^T-1}{\Lambda_0-1}\Big)^2\sum\limits_{i=1}^{m} x^2_{\tau,i,j}\Bigg)\Bigg]
		+\frac{1}{2\lambda}\log \sum\limits_{j=1}^{h}\Bigg[\exp\Bigg(2\lambda^2\varLambda_{0}^{2T} \sum\limits_{i=1}^{m}h^2_{0,j}\Bigg)\Bigg]
		\label{RCRNNProofEQsubLayerd-4Time3}
		\\
		&\leq
		\frac{2dT\log 2}{\lambda}+\frac{\log n}{2\lambda} + \lambda\Big(\sum\limits_{l=1}^{d}\beta_l\varLambda_l\Big)^2 \Big(\frac{\Lambda_0^T-1}{\Lambda_0-1}\Big)^2 mB_{\bx}^2
		+\frac{\log h}{2\lambda}+\lambda\varLambda_{0}^{2T} m\|\bh_0\|_{\infty}^2
		\nonumber
		\\
		&\leq
		\frac{2dT\log 2+\log \sqrt{n} + \log \sqrt{h}}{\lambda}+ \lambda\Bigg(\Big(\sum\limits_{l=1}^{d}\beta_l\varLambda_l\Big)^2 \Big(\frac{\Lambda_0^T-1}{\Lambda_0-1}\Big)^2 mB_{\bx}^2
		+\varLambda_{0}^{2T} m\|\bh_0\|_{\infty}^2\Bigg),\label{RCRNNProofEQsubLayerd-4Time4}
		\end{align}
	\end{subequations}	
	where \eqref{RCRNNProofEQsubLayerd-4Time1} follows Inequality \eqref{expectIneq}, \eqref{RCRNNProofEQsubLayerd-4Time2} holds by replacing with $\sum_{j=1}^{n}$ and $\sum_{j=1}^{h}$, respectively. In addition, \eqref{RCRNNProofEQsubLayerd-4Time3} follows \eqref{expIneq} and \eqref{RCRNNProofEQsubLayerd-4Time4} is received by the following definition: At time step $t$, we define $\bX_t\in \mathbb{R}^{n\times m}$, a matrix composed of $m$ columns from the $m$ input vectors $\{\bx_{t,i}\}_{i=1}^m$; we also define $\|\bX_t\|_{2,\infty}=\sqrt{\max\limits_{k\in \{1,\dots,n\}}\sum_{i=1}^m\mathrm{x}_{t,i,k}^2}\leq \sqrt{m}B_{\bx}$, representing the maximum of the $\ell_2$-norms of the rows of matrix $\bX_t$, and $\|\bh_0\|_{\infty}=\max\limits_{j}|\mathrm{h}_{0,j}|$.
	
	Choosing $\lambda=\sqrt{\frac{2dT\log 2+\log \sqrt{n} + \log \sqrt{h}}{\Big(\sum\limits_{l=1}^{d}\beta_l\varLambda_l\Big)^2 \Big(\frac{\Lambda_0^T-1}{\Lambda_0-1}\Big)^2 mB_{\bx}^2
			+\varLambda_{0}^{2T} m\|\bh_0\|_{\infty}^2}}$, we achieve the upper bound:
	%\begin{subequations}
	\begin{align}\label{RCRNNProofEQsubLayerd-FinalTime}
	\mathfrak{R}_S(\mathcal{F}_{d,T})&\leq
	\sqrt{\frac{2(4dT\log 2+\log n + \log h)}{m}\Bigg(\Big(\sum\limits_{l=1}^{d}\beta_l\varLambda_l\Big)^2 \Big(\frac{\Lambda_0^T-1}{\Lambda_0-1}\Big)^2 B_{\bx}^2+\varLambda_{0}^{2T} \|\bh_0\|_{\infty}^2}\Bigg).
	\end{align}
	%\end{subequations}	
	
	It can be noted that $\mathfrak{R}_S(\mathcal{F}_{d,T})$ in \eqref{RCRNNProofEQsubLayerd-FinalTime} is derived for the real-valued functions $\mathcal{F}_{d,T}$. For the vector-valued functions $\mathcal{F}_{d,T}:\mathbb{R}^h\times \mathbb{R}^n\mapsto \mathbb{R}^h$ [in Theorem~\ref{GE-reweighted-RNN-Theorem}], we apply the contraction lemma [Lemma~\ref{ContractionLemma}] to a Lipschitz loss to obtain the complexity of such vector-valued functions by means of the complexity of the real-valued functions. Specifically, in Theorem~\ref{GE-reweighted-RNN-Theorem}, under the assumption of the 1-Lipschitz loss function and from Theorem~\ref{GETheorem}, Lemma~\ref{ContractionLemma}, we complete the proof.
	
\end{proof}

\end{document}